\documentclass{article}

\PassOptionsToPackage{numbers, compress}{natbib}

\usepackage[final]{neurips_2024}

\usepackage[utf8]{inputenc} 
\usepackage[T1]{fontenc}    
\usepackage{hyperref}       
\usepackage{url}            
\usepackage{booktabs}       
\usepackage{amsfonts}       
\usepackage{nicefrac}       
\usepackage{microtype}      
\usepackage{xcolor}         
\usepackage{amsmath}
\usepackage{amssymb}
\usepackage{booktabs}
\usepackage{color}
\usepackage{colortbl}
\usepackage{graphicx}
\usepackage{makecell}
\usepackage{multicol}
\usepackage{multirow}

\usepackage{amsthm}
\usepackage{xspace}

\usepackage{subcaption}
\usepackage{xurl}

\usepackage{pifont}

\usepackage{tabularx}
\usepackage{tikz}
\usepackage{dsfont}

\usepackage{algorithm}
\usepackage{algpseudocode}

\usepackage{wrapfig}
\usepackage{fontawesome}

\definecolor{place3d_blue}{RGB}{66, 133, 244}
\definecolor{place3d_red}{RGB}{231, 66, 52}
\definecolor{place3d_yellow}{RGB}{251, 189, 5}
\definecolor{place3d_green}{RGB}{51, 168, 82}
\definecolor{place3d_gray}{RGB}{165, 165, 165}

\definecolor{building}{RGB}{70, 130, 180}
\definecolor{fence}{RGB}{0, 0, 230}
\definecolor{other}{RGB}{255, 255, 153}
\definecolor{pedestrian}{RGB}{255, 102, 178}
\definecolor{pole}{RGB}{255, 153, 153}
\definecolor{road_line}{RGB}{224, 224, 224}
\definecolor{road}{RGB}{224, 0, 224}
\definecolor{sidewalk}{RGB}{0, 204, 204}
\definecolor{vegetation}{RGB}{0, 204, 102}
\definecolor{vehicle}{RGB}{102, 102, 255}
\definecolor{wall}{RGB}{0, 128, 255}
\definecolor{traffic_sign}{RGB}{255, 51, 51}
\definecolor{ground}{RGB}{255, 204, 255}
\definecolor{bridge}{RGB}{255, 127, 80}
\definecolor{rail_track}{RGB}{96, 96, 96}
\definecolor{guard_rail}{RGB}{96, 0, 96}
\definecolor{traffic_light}{RGB}{255, 0, 0}
\definecolor{static}{RGB}{0, 153, 153}
\definecolor{dynamic}{RGB}{255, 255, 102}
\definecolor{water}{RGB}{0, 128, 255}
\definecolor{terrain}{RGB}{102, 102, 0}

\hypersetup{
  colorlinks=true,
  linkcolor=place3d_blue,
  citecolor=place3d_blue,
  urlcolor=place3d_blue,
  linkbordercolor={0 0 0},
  citebordercolor={0 0 0},
  urlbordercolor={0 0 0}}

\usepackage[accsupp]{axessibility}

\newtheorem{theorem}{Theorem}

\newtheorem{corollary}{Corollary}

\title{Is Your LiDAR Placement Optimized for \\ 3D Scene Understanding?}

\author{
    Ye Li$^1$ \quad
    Lingdong Kong$^2$ \quad
    Hanjiang Hu$^3$ \quad
    Xiaohao Xu$^1$ \quad
    Xiaonan Huang$^1$ \\
    \\
    $^1$University of Michigan, Ann Arbor \quad
    $^2$National University of Singapore \\
    $^3$Carnegie Mellon University \\
    \\
    \url{https://github.com/ywyeli/Place3D}
}

\begin{document}

\maketitle

\begin{abstract}
The reliability of driving perception systems under unprecedented conditions is crucial for practical usage. Latest advancements have prompted increasing interest in multi-LiDAR perception. However, prevailing driving datasets predominantly utilize single-LiDAR systems and collect data devoid of adverse conditions, failing to capture the complexities of real-world environments accurately. Addressing these gaps, we proposed \textcolor{place3d_blue}{\textbf{Place3D}}, a full-cycle pipeline that encompasses LiDAR placement optimization, data generation, and downstream evaluations. Our framework makes three appealing contributions. \textcolor{place3d_blue}{\textbf{1)}} To identify the most effective configurations for multi-LiDAR systems, we introduce the Surrogate Metric of the Semantic Occupancy Grids (M-SOG) to evaluate LiDAR placement quality. \textcolor{place3d_blue}{\textbf{2)}} Leveraging the M-SOG metric, we propose a novel optimization strategy to refine multi-LiDAR placements. \textcolor{place3d_blue}{\textbf{3)}} Centered around the theme of multi-condition multi-LiDAR perception, we collect a 280,000-frame dataset from both clean and adverse conditions. Extensive experiments demonstrate that LiDAR placements optimized using our approach outperform various baselines. We showcase exceptional results in both LiDAR semantic segmentation and 3D object detection tasks, under diverse weather and sensor failure conditions.
\end{abstract}

\section{Introduction}
\label{sec:intro}

Accurate 3D perception plays a crucial role in autonomous driving, involving detecting the objects around the vehicle and segmenting the scene into meaningful semantic categories. LiDARs are becoming crucial for driving perception due to their capability to capture detailed geometric information about the surroundings \cite{badue2021survey,zhang2023fully,li2022coarse3D,li2023tfnet,zhuang2021pmf}. While the latest models achieved promising accuracy on standard datasets, \textit{e.g.}, nuScenes \cite{nuScenes} and SemanticKITTI \cite{SemanticKITTI}, improving the resilience of perception under corruptions and sensor failures is still a critical yet under-explored task \cite{kong2023robo3D,xie2023robobev,behley2021semanticKITTI,xie2024benchmarking}.

Recent studies have primarily focused on refining the sensing systems by designing new algorithms with novel model architectures \cite{chen2023mbev} or 3D representations \cite{ge2023metabev,xu2023frnet,cao2024pasco}. The selection of LiDAR configurations, however, often relies on industry experience and design aesthetics. Therefore, existing literature has potentially overlooked optimal LiDAR placements for maximum sensing efficacy. 

An intuitive approach for optimizing LiDAR configurations involves a comprehensive cycle of data collection, model training, and validation across various LiDAR setups to enhance the autonomous driving system's perception accuracy \cite{douillard2011lidarseg,li23lim3d}. However, this approach faces significant challenges due to the substantial computational resources and extensive time required for data collection and processing \cite{li2023influence,cheng2023transrvnet}. Although recent works \cite{hu2022investigating,liu2019where} made preliminary attempts to explore the impact of LiDAR placements on perception accuracy, they have neither proposed an optimization method nor evaluated the performance in adverse conditions.

In this work, we delve into the optimization of sensor configurations for autonomous vehicles by tackling two critical sub-problems: \textcolor{place3d_blue}{\textbf{1)}} the performance evaluation of sensor-specific configurations, and \textcolor{place3d_blue}{\textbf{2)}} the optimization of these configurations for enhanced 3D perception tasks, encompassing 3D object detection and LiDAR semantic segmentation. To achieve this goal, we propose a systematic LiDAR placements evaluation and optimization framework, dubbed \textcolor{place3d_blue}{\textbf{Place3D}}. The overall pipeline is endowed with the capability to synthesize point cloud data with customizable configurations and diverse conditions, including common corruptions, external disturbances, and sensor failures.

We first introduce an easy-to-compute Surrogate Metric of Semantic Occupancy Grids (M-SOG) to evaluate the equality of LiDAR placements. Next, we propose a novel optimization approach utilizing our surrogate metric based on Covariance Matrix Adaptation Evolution Strategy (CMA-ES) to find the near-optimal LiDAR placements. To verify the correlation between our surrogate metric and assess the effectiveness of our optimization approach on both clean and adverse conditions, we design an automated multi-condition multi-LiDAR data simulation platform and establish a comprehensive benchmark consisting of a total of seven LiDAR placement baselines inspired by existing self-driving configuration from the autonomous vehicle companies.

Our benchmark, along with a large-scale multi-condition multi-LiDAR perception dataset, encompasses state-of-the-art learning-based perception models for 3D object detection \cite{lang2019pointpillars,yin2021centerpoint,liu2023bevfusion,zhang2023fully} and LiDAR semantic segmentation \cite{choy2019minkowski,zhou2020polarNet,tang2020searching,zhu2021cylindrical,kong2024lasermix2}, as well as six distinct adverse conditions coped with weather and sensor failures \cite{kong2023robo3D,xie2024benchmarking}. Utilizing the proposed framework, we explored how various perturbations and downstream 3D perception tasks affect optimization outcomes. 

To summarize, this work makes the following key contributions:
\begin{itemize}
\item To the best of our knowledge, \textcolor{place3d_blue}{\textbf{Place3D}} serves as the first attempt at investigating the impact of multi-LiDAR placements for 3D semantic scene understanding in diverse conditions.
\item We introduce M-SOG, an innovative surrogate metric to effectively evaluate the quality of LiDAR placements for both detection (sparse 3D) and segmentation (dense 3D) tasks.
\item We propose a novel optimization approach utilizing our surrogate metric to refine LiDAR placements, which exhibit excellent LiDAR semantic segmentation and 3D object detection accuracy and robustness, outperforming baselines for relatively 9\% in metrics.
\item We contribute a 280,000-frame multi-condition multi-LiDAR point cloud dataset and establish a comprehensive benchmark for LiDAR-based 3D scene understanding evaluations. 
We hope this work can lay a solid foundation for future research in this relevant field.

\end{itemize}

\section{Related Work}
\label{sec:related_work}

\noindent\textbf{LiDAR Sensing.}
LiDAR sensing is critical in autonomous vehicles, providing essential structural information for their operation \cite{liu2024survey,badue2021survey,muhammad2020survey}. Utilizing 3D data, LiDAR supports tasks such as semantic scene understanding \cite{nuScenes,Panoptic-nuScenes,geiger2012kitti,SemanticKITTI,sun2020waymoOpen,kong2023lasermix,cao2022monoscene}, generation \cite{nakashima2021learning,zyrianov2022learning,nakashima2023generative,xiong2023learning}, decision making \cite{ettinger2021large,caesar2021nuplan,claussmann2019survey,teng2023survey}, and simulation \cite{dosovitskiy2017carla,manivasagam2020lidarsim,manivasagam2023towards,xiao2022synLiDAR}. This work specifically targets LiDAR-based perception and simulation, which are at the forefront of current research in autonomous vehicle technology.

\noindent\textbf{LiDAR-Based 3D Scene Understanding.}
LiDAR segmentation and 3D object detection are key tasks for 3D scene understanding. Segmentation models are categorized into point-based \cite{thomas2019kpconv,hu2020randla,hu2022randla,zhang2023pids,puy23waffleiron}, range view \cite{milioto2019rangenet++,triess2020scan,cortinhal2020salsanext,xu2020squeezesegv3,zhao2021fidnet,cheng2022cenet,kong2023conDA,ando2023rangevit,kong2023rethinking,xu2023frnet}, bird's eye view \cite{zhou2020polarNet,zhou2021panoptic,chen2021polarStream}, voxel-based \cite{choy2019minkowski,tang2020searching,zhu2021cylindrical,hong20224dDSNet}, and multi-view fusion \cite{liong2020amvNet,jaritz2020xMUDA,sautier2022slidr,liu2023seal,liu2023uniseg,2023CLIP2Scene,chen2023towards,liu2024multi,xu2024superflow} methods. While they show promising results on benchmarks, their performance across different LiDAR configurations is not well explored. For 3D object detection, models typically use point-based \cite{shi2019pointrcnn,yang2019std,yang20203dssd,zhou2022centerformer} or voxel-based \cite{zhou2018voxelnet,lang2019pointpillars,mao2021votr,wei2022distillation,shi2020parta2,li2023logonet,ma2023detzero,yan2018second} representations, with recent works favoring fusion for improved detection \cite{liu2019pvcnn,shi2020pvrcnn,shi2022pvrcnn++,liu2023bevfusion,li2022homogeneous}. Our study complements these approaches by optimizing LiDAR placements to enhance performance under various real-world conditions.

\noindent\textbf{LiDAR Perception Robustness.}
The reliability of LiDAR-based 3D scene understanding models in real-world scenarios is crucial \cite{song2024survey,jiang2021rellis3D}. Recent studies have explored the robustness of 3D perception models against adversarial attacks \cite{zhang2023survey,xie2024on}, common corruptions \cite{kong2023robo3D,kong2023robodepth,xie2023robobev,yan2024benchmarking,fatih20223dcc,beemelmanns2024multicorrupt}, adverse weather \cite{hahner2021fog,hahner2022snowfall,seppanen2022snowykitti,huang2024improving}, sensor failures \cite{yu2022benchmarking,chen2023mbev,ge2023metabev}, and combined motion and sensor perturbations \cite{xu2024customizable,xie2024benchmarking}. To our knowledge, no prior work has focused on optimizing LiDAR placement to improve semantic 3D scene understanding robustness. Our Place3D addresses this gap by studying various placement strategies to enhance robustness in both in-domain and out-of-domain scenarios.

\noindent\textbf{Sensor Placement Optimization.} Optimizing sensor placements can enhance performance and address the challenges of heuristic design \cite{joshi2008sensor,xu2022optimization}. In autonomous driving, optimizing LiDAR placements is relatively new \cite{liu2019where}. Hu \textit{et al.} \cite{hu2022investigating} studied multi-LiDAR placements for 3D object detection, while Li \textit{et al.} \cite{li2023influence} examined LiDAR-camera configurations for multi-modal detection. Other works \cite{jin2022roadside,kim2023placement,cai2023analyzing,jiang2023optimizing} explored roadside LiDAR placements for V2X applications. Distinguishing from these efforts, our work is the first to investigate LiDAR placements for robust 3D scene understanding in challenging conditions. We establish benchmarks for both LiDAR semantic segmentation and 3D object detection, providing an in-depth analysis of optimal placements for robust perception.

\begin{figure}[t]
    \centering
    \includegraphics[scale=0.50]{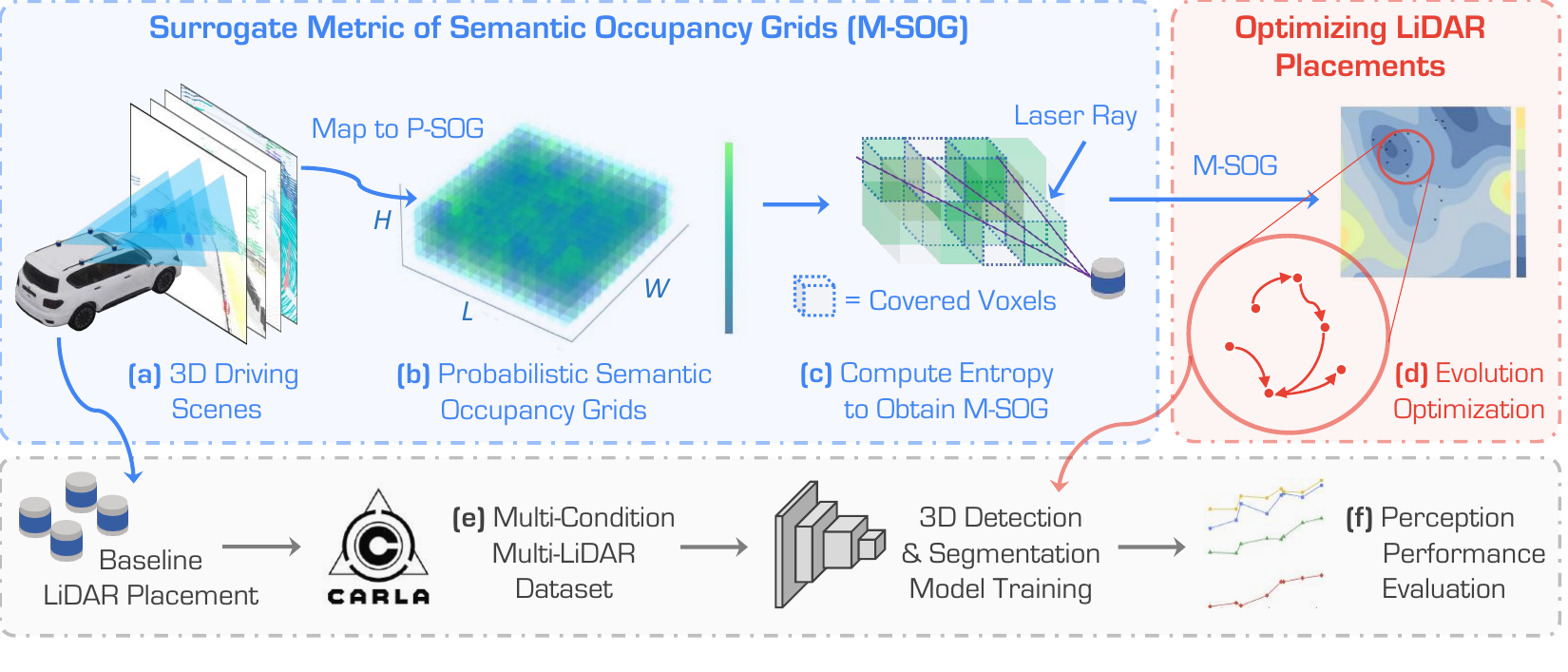}
    \caption{\textcolor{place3d_blue}{\textbf{Place3D}} \textbf{pipeline for multi-LiDAR placement optimization.} We first utilize the semantic point cloud synthesized in CARLA (\textcolor{place3d_blue}{a}) to generate Probabilistic SOG (\textcolor{place3d_blue}{b}) and obtain voxels covered by LiDAR rays to compute M-SOG (\textcolor{place3d_blue}{c}). We propose a CMA-ES-based optimization strategy to maximize M-SOG, finding optimal LiDAR placement (\textcolor{place3d_blue}{d}). To verify the effectiveness of our LiDAR placement optimization strategy, we contribute a multi-condition multi-LiDAR dataset (\textcolor{place3d_blue}{e}) and evaluate the performance of baselines and optimized placements on both clean and corruption data (\textcolor{place3d_blue}{f}).}
    \label{fig:overview}
\end{figure}

\section{\textcolor{place3d_blue}{\textbf{Place3D}}: A Full-Cycle LiDAR Placement Pipeline}
\label{sec:approach}

In this section, we introduce the Surrogate Metric of Semantic Occupancy Grids (M-SOG) to assess the 3D perception performance of sensor configurations (Section~\ref{sec:m_sog}). Leveraging M-SOG, we propose a novel optimization strategy for multi-LiDAR placement (Section~\ref{sec:sensor_config_optim}). Our approach is theoretically grounded; we provide an optimality certification to ensure that optimized placements are close to the global optimum (Section~\ref{sec:theorem}). Figure~\ref{fig:overview} depicts an overview of our Place3D framework.

\subsection{Preliminary Formulation of Surrogate Metric}

\noindent\textbf{Region of Interest.} 
Following the literature of sensor placement \cite{hu2022investigating, li2023influence}, we define the Region of Interest (ROI) for perception as a finite 3D cuboid space $[l, w, h]$ with ego-vehicle in the center, and divide the ROI space $S$ into voxels with resolution $[\delta_l, \delta_w, \delta_h]$ as
$S = \{v_1, v_2, ..., v_N\}$, 
where $N = \frac{l}{\delta_l} \times \frac{w}{\delta_w} \times \frac{h}{\delta_h}$ denotes the total number of voxels.

\noindent\textbf{Probabilistic Occupancy Grids (POG).}
We define POG as the joint probability of all non-zero voxels in the ROI as $p_{POG} = p(v_1,v_2,...,v_N) = \prod_{i=1, \; p(v_i)\neq 0}^{N}{p(v_i)}$, where each voxel is assumed to be independently and identically distributed. The voxel  $v_i$ is occupied if it is within any 3D bounding box $b_c$ of object class $c$ at frame $t$, which is denoted as $v_i = b_c^{(t)}$. Therefore, the probability of being occupied for voxel $v_i$ among all $T$ frames  can be estimated  as $p(v_i)=\sum_{t=1}^T\mathds{1}(v_i = b_c^{(t)})/{T}$.

\noindent\textbf{Probabilistic Surrogate Metric.}
Now given some LiDAR placement $L$, the conditional POG can be found as the conditional joint distribution  $p_{POG|L} = p(v_1,v_2,...,v_N|L)$ in the literature~\cite{hu2022investigating}. To show how much uncertainty the placement $L$ can reduce in the ROI, the naive surrogate metric \textit{S-MIG}~\cite{hu2022investigating} is introduced as the reduction of the entropy as $ - H_{POG|L}= -\sum_{i=1}^N H(v_i|L)$ with constant $H_{POG}=\sum_{i=1}^N H(v_i)$, where $H(v_i), H(v_i|L)$ can be found through the entropy of Bernoulli distribution $p(v_i), p(v_i|L)$, respectively.

\subsection{M-SOG: Surrogate Metric of Semantic Occupancy Grids}
\label{sec:m_sog}

The naive surrogate metric \textit{S-MIG} \cite{hu2022investigating} employs 3D bounding boxes as priors to understand 3D scene distribution. However, this approach exhibits substantial deviations from the actual physical boundaries of the objects and overlooks the occlusion relationships between objects and the environment in LiDAR applications. Furthermore, it is limited to detection tasks only. To overcome these limitations, we propose the Surrogate Metric of Semantic Occupancy Grids to assess and optimize LiDAR placement for 3D scene understanding tasks. Leveraging the Semantic Occupancy Grids derived from diverse environments, our approach can be expanded to tackling adverse conditions.

\noindent\textbf{Semantic Occupancy Grids (SOG).}
Given a 3D driving scene with a set of semantic classes $Y = \{y_1, y_2, ..., y_M\}$, where $M$ represents the total number of semantic classes in the scene and assuming that each voxel can only be occupied by one semantic tag for each frame, we denote $y^{(t)}(v_i)$ as the semantic class of voxel $v_i \in \{v_1, v_2, ..., v_N\}$ at time frame $t \in \{1, 2, ..., T\}$. Notably, empty voxels are also considered a semantic class. Accordingly, the set of voxels occupied by semantic class $y_c$ at frame $t$ is defined as
$S_{y_c}^{(t)} = \{v_i | y^{(t)}(v_i) = y_c\}, \; c =1, 2, ..., M, \; t = 1, 2, ..., T$. 
Then, we introduce SOG to describe the total semantic voxel distribution:
\begin{align}
    S_{SOG}^{(t)} = \{S_{y_1}^{(t)}, S_{y_2}^{(t)}, ..., S_{y_M}^{(t)}\}, \; t = 1, 2, ..., T~.
\end{align}

\noindent\textbf{Probabilistic Semantic Occupancy Grids.} 
In contrast to the Bernoulli distribution \textit{POG} \cite{hu2022investigating}, we propose a more accurate Multinomial distribution: Probabilistic Semantic Occupancy Grids (\textit{P-SOG}), denoted as $p_{SOG}$, to represent the probability distribution of voxels belonging to certain semantic classes. Before estimating $p_{SOG}$, we first traverse all frames from given scenes to obtain the probability $\hat{p}$ for each voxel $v_i$ occupied by each semantic class $y_c$:
\begin{align}
    \hat{p}(v_i = y_c) = \frac{|\{ t \in \{1, 2, \ldots, T\} | v_i \in S_{y_c}^{(t)}\}|}{T},~c =1, 2, \ldots, M~.
\end{align}
Notably, we denote voxel $v_i$ being occupied by semantic class $y_c$ as $v_i = y_c$, and $\hat{p}$ indicates estimated distributions from observed samples, whereas $p$ refers to the statistical parameters to be estimated, which are unknown and non-random. We compute the joint probability of all non-zero voxels in the ROI to estimate the $p_{SOG}$. Following the literature~\cite{hu2022investigating}, the joint distribution over the voxel set $S$ is:
\begin{equation}
\hat{p}_{SOG} = \hat{p}(v_1, v_2, ..., v_N) = \prod_{i=1, \; \hat{p}(v_i)\neq 0}^{N}{\hat{p}(v_i)}~.
\end{equation}
The uncertainty of Probabilistic Semantic Occupancy Grids reflects the 3D scene understanding capability of sensors within a given scene. To quantify this uncertainty, we calculate the entropy for the probability distribution of each voxel $v_i$ in $\hat{p}_{SOG}$ as:
$\hat{H}(v_i) = -\sum_{c=1}^{M}\hat{p}(v_i = y_c)\log\hat{p}(v_i = y_c)$.

Based on the independent and identically distributed assumption of \textit{SOG}, the entropy of \textit{P-SOG} over the voxel set $S$ is given by:
$\hat{H}_{SOG} = \mathds{E}_{v_i \sim p_S} \sum_{i=1}^{N}\hat{H}(v_i)$, where $p_S$ denotes the $p_{SOG}$ over the voxel set $S$.
From the perspective of density estimation, the true \textit{P-SOG} and its corresponding entropy can be estimated as $p_{SOG} = \hat{p}_{SOG}$ and $H_{SOG} = \hat{H}_{SOG}$, respectively.

\noindent\textbf{Surrogate Metric of Semantic Occupancy Grids.} 
To evaluate LiDAR placements, we further analyze the joint probability distribution of voxels covered by LiDAR rays with varied LiDAR placements. Leveraging the Amanatides and Woo's Fast Voxel Traversal Algorithm \cite{Amanatides1987AFV}, we obtain the voxels covered by LiDAR rays as
$S| L_j = \text{FastVT}(S, L_j) = \{v_1^{L_j}, v_2^{L_j}, ..., v^{L_j}_{N_j}\}$ given LiDAR configuration $L= L_j$, $j = 1, 2, ..., J$, where $j$ indexes the LiDAR placements, $J$ is the number of total configurations, and $N_j$ is the number of voxels covered by rays of LiDAR configuration $L= L_j$. Then, the semantic entropy distribution of \textit{P-SOG} over the voxel set $S|L_j$ can be estimated as: 
\begin{align}
H_{SOG}^{L_j} = \mathds{E}_{v_i^{L_j} \sim p_{S|L_j}} \sum_{i=1}^{N_j}\hat{H}(v_i^{L_j})~.
\end{align}

We further define the information gain of 3D scene understanding as
$\Delta H = H_{SOG} - H_{SOG}^{L_j}$ to describe the perception capability of LiDAR configuration $L_j$. Since $H_{SOG}$ is a constant given certain 3D semantic scenes with the fixed \textit{P-SOG} distribution, we propose the normalized Surrogate Metric of Semantic Occupancy Grids (M-SOG) as follows to evaluate the perception capability:
\begin{align}
    M_{SOG}(L_j) & = - \frac{1}{N_j} H_{SOG}^{L_j} = -\frac{1}{N_j}\mathds{E}_{v_i^{L_j} \sim p_{S|L_j}}\sum_{i=1}^{N_j}H(v_i^{L_j}) \notag \\ 
    & = \frac{1}{N_j}  \sum_{i=1}^{N_j}\sum_{c=1}^{M}\hat{p}(v_i^{L_j}=y_c )\log\hat{p}(v_i^{L_j}=y_c )~.
\end{align}

\subsection{Sensor Configuration Optimization}
\label{sec:sensor_config_optim}

We adopt the heuristic optimization based on the Covariance Matrix Adaptation Evolution Strategy (CMA-ES) \cite{hansen2016cma} to find an optimized LiDAR configuration.

\noindent\textbf{Objective Function.}
We define the objective as $F(\mathbf{u}_j) = M_{SOG}(L_j)$, where $\mathbf{u}_j \in \mathcal{U}\subset \mathbb{R}^n$ represents the LiDAR configuration $L_j$ and is subject to the physical constraint $P(\mathbf{u}_j)=0$, and $P(\mathbf{u})>0$ if it is violated, \textit{e.g.}, mutual distance between LiDARs and distance from a 2D plane. In our optimization, $\mathbf{u}_j$ includes the 3D coordinates and rolling angles of LiDARs, $\mathcal{U}$ is the finite cuboid space above the vehicle roof. Without ambiguity, we omit subscript $j$ in the following text. We then transform the constrained optimization into the unconstrained Lagrangian form as
$G(\mathbf{u}) = -F(\mathbf{u}) + \lambda P(\mathbf{u})$, where $\lambda$ is the Lagrange multiplier. 
We optimize $G(\mathbf{u})$ through an iterative process that adapts the distribution of candidate solutions as Algorithm~\ref{alg:cmaes}.

\begin{algorithm}[t]
\caption{Multi-LiDAR Placement Optimization for Semantic 3D Scene Understanding}\label{alg:cmaes}
\begin{algorithmic}[1]
\State Initialize: $k \gets 0$, $\mathbf{m}^{(0)}$, $\sigma^{(0)}$, $\mathbf{C}^{(0)}$, $N_k$, $M_k$, $\forall k \in \{0, 1,2,\dots,K\}$
\For{$k = 0, 1,2,\ldots, K$}
    \For{$i = 1$ to $N_k$}
        \State Sample $\mathbf{u}_i^{(k)} \sim \mathcal{N}(\mathbf{m}^{(k)}, (\sigma^{(k)})^2 \mathbf{C}^{(k)})$ from $\delta$-density gird-level  candidates
        \State Calculate $G(\mathbf{u}_i^{(k)})$
    \EndFor
    \State Update $\mathbf{m}^{(k+1)}$ based on the top $M_k$ best solutions $\hat{\mathbf{u}}_{i}^{(k)}$ via  Equation \eqref{eq:m}
    \State Update $\sigma^{(k+1)}$ and $\mathbf{C}^{(k+1)}$  via Equation \eqref{eq:p_c}, \eqref{eq:C}, \eqref{eq:p_sigma}, and \eqref{eq:sigma}\EndFor
\end{algorithmic}
\end{algorithm}

\begin{figure}[t]
    \centering
    \begin{subfigure}[h]{0.19\textwidth}
        \centering
        \includegraphics[width=\textwidth]{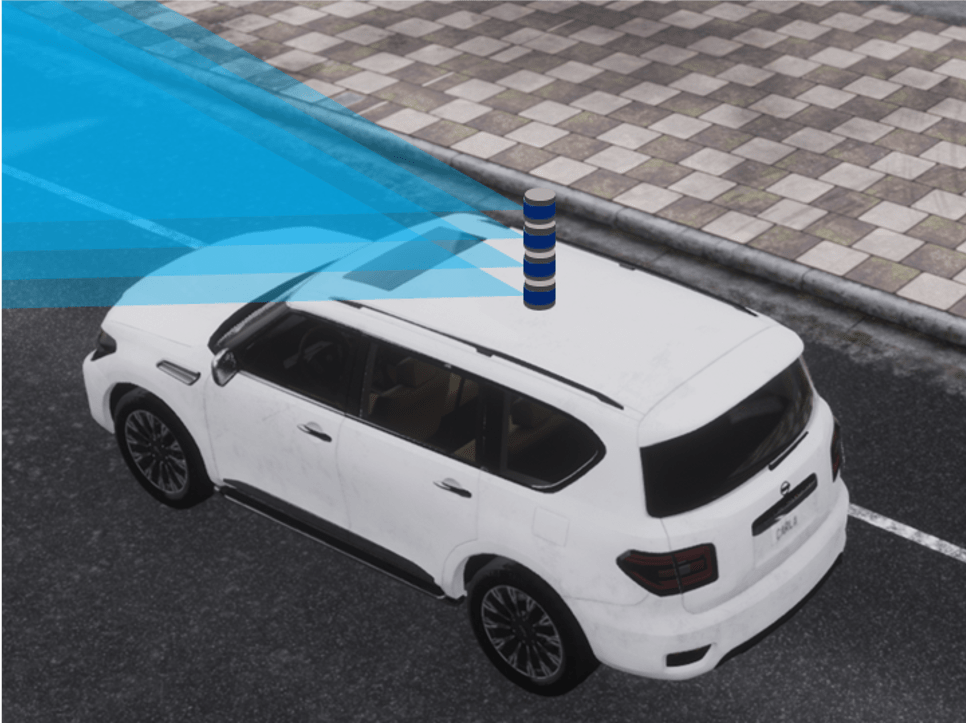}
        \caption{Center}
        \label{fig:center}
    \end{subfigure}
    \begin{subfigure}[h]{0.19\textwidth}
        \centering
        \includegraphics[width=\textwidth]{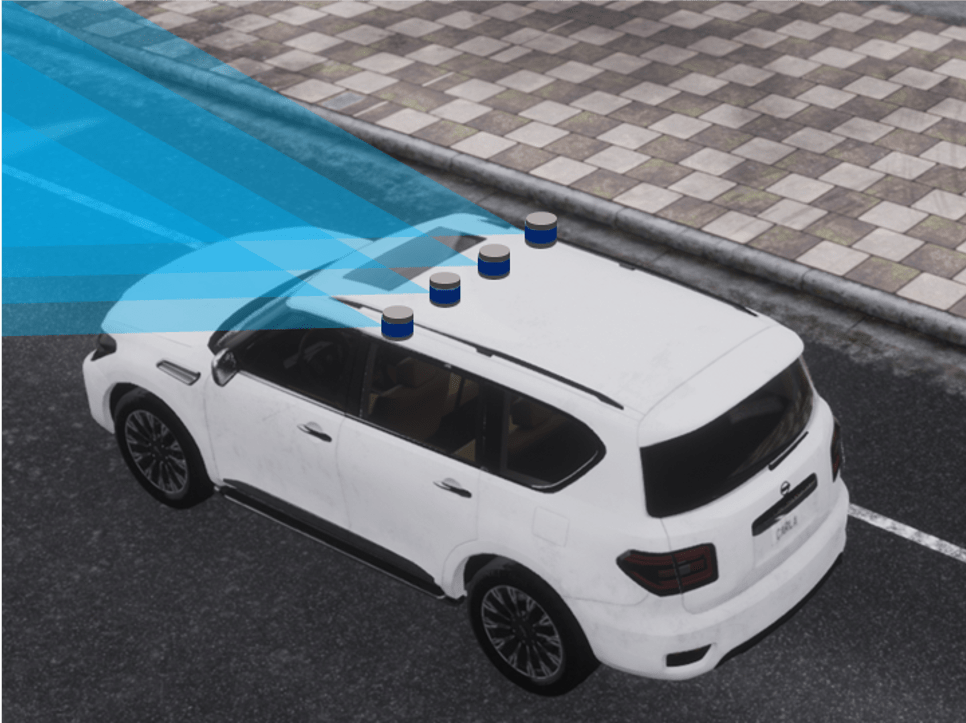}
        \caption{Line}
        \label{fig:line}
    \end{subfigure}
    \begin{subfigure}[h]{0.19\textwidth}
        \centering
        \includegraphics[width=\textwidth]{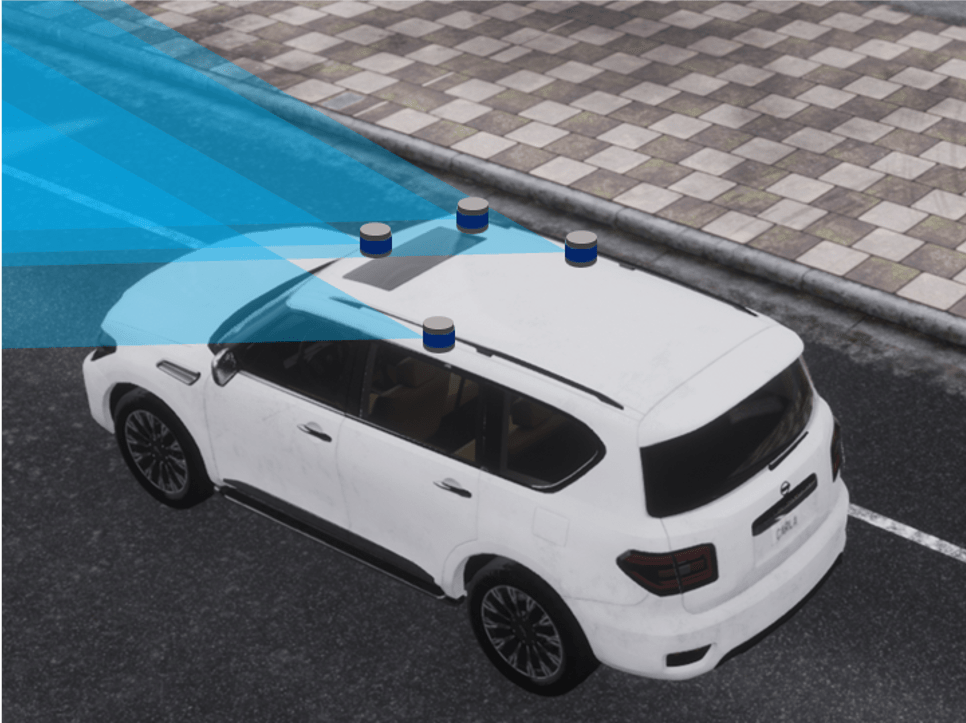}
        \caption{Pyramid}
        \label{fig:pyramid}
    \end{subfigure}
    \begin{subfigure}[h]{0.19\textwidth}
        \centering
        \includegraphics[width=\textwidth]{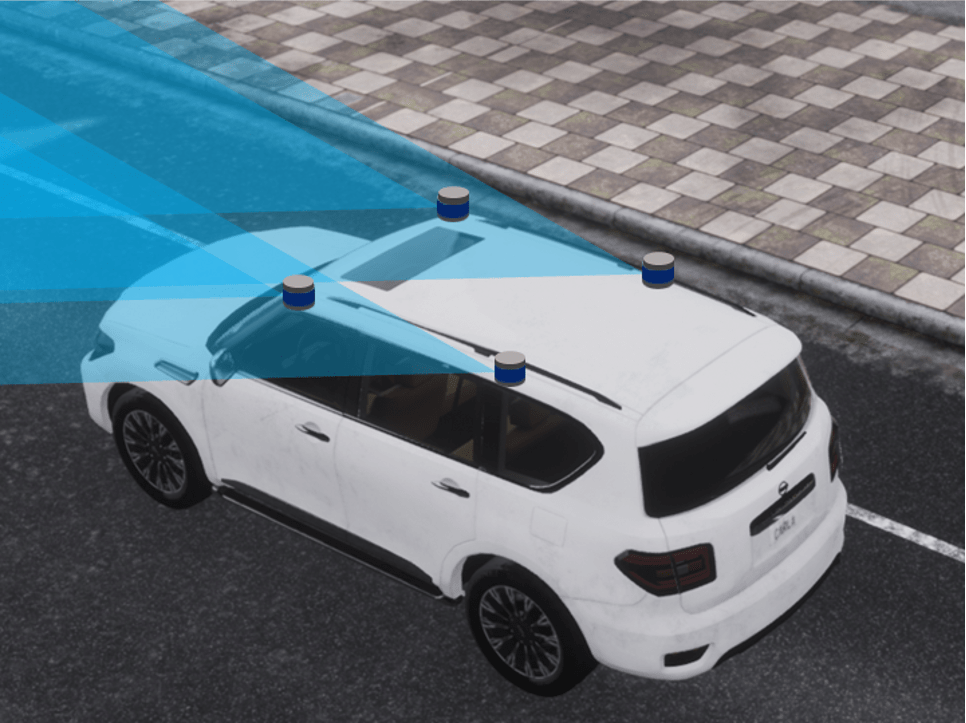}
        \caption{Square}
        \label{fig:square}
    \end{subfigure}
    \begin{subfigure}[h]{0.19\textwidth}
        \centering
        \includegraphics[width=\textwidth]{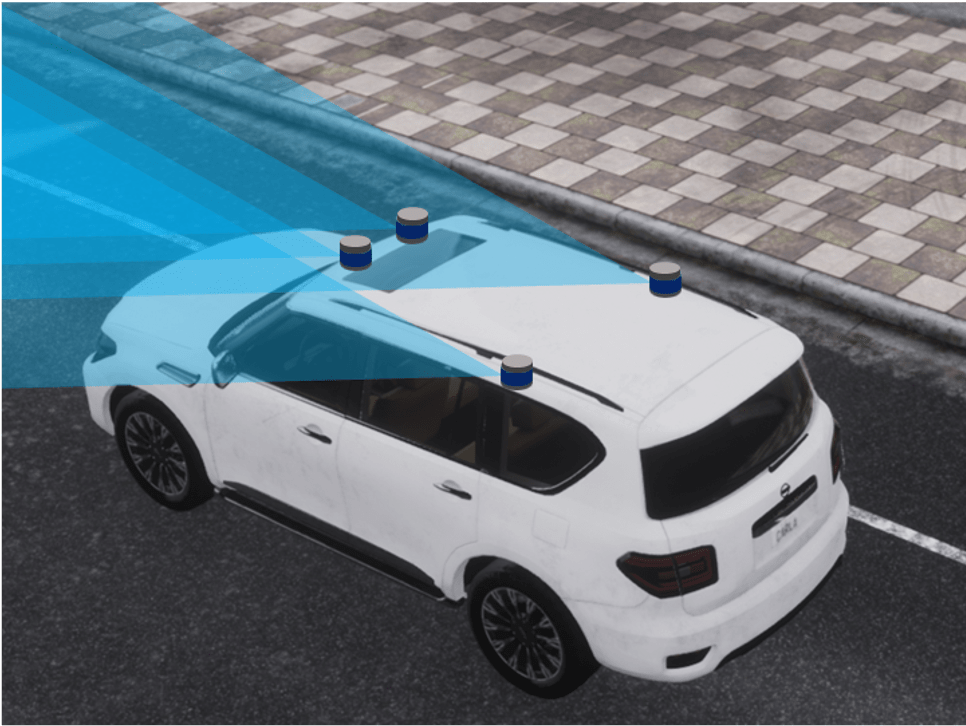}
        \caption{Trapezoid}
        \label{fig:trapezoid}
    \end{subfigure}
    \begin{subfigure}[h]{0.19\textwidth}
        \centering
        \includegraphics[width=\textwidth]{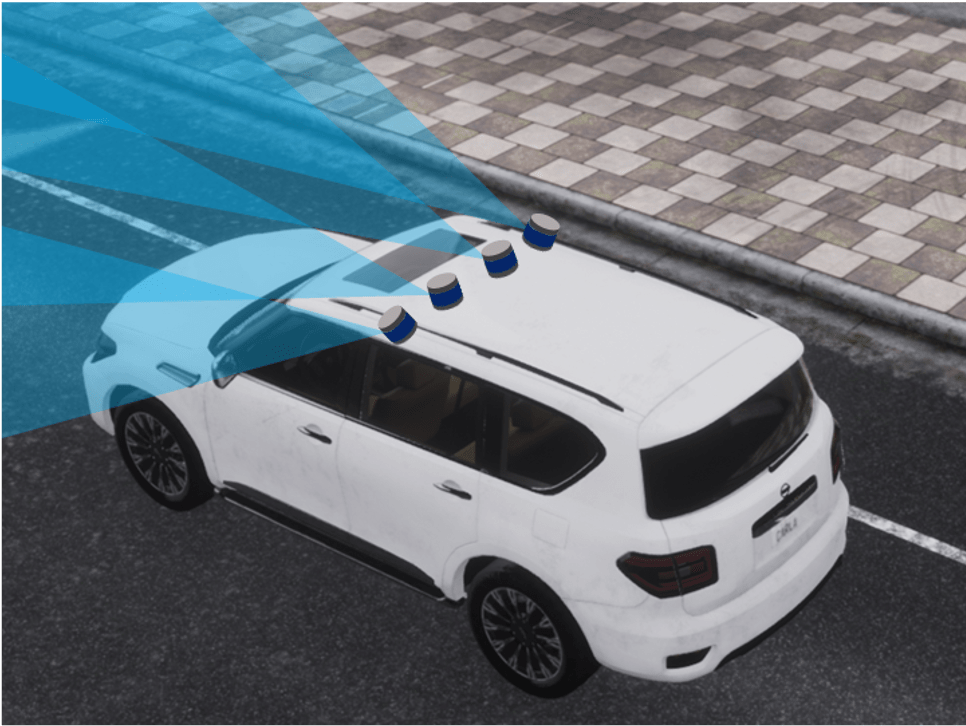}
        \caption{Line-roll}
        \label{fig:line_roll}
    \end{subfigure}
    \begin{subfigure}[h]{0.19\textwidth}
        \centering
        \includegraphics[width=\textwidth]{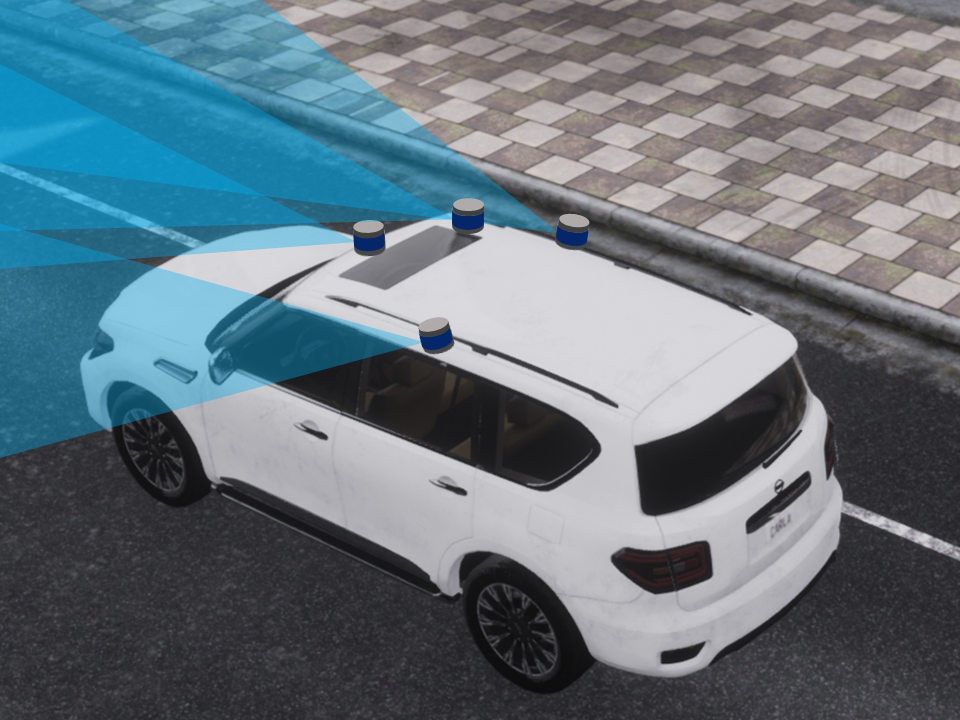}
        \caption{Pyramid-roll}
        \label{fig:pyramid_roll}
    \end{subfigure}
    \begin{subfigure}[h]{0.19\textwidth}
        \centering
        \includegraphics[width=\textwidth]{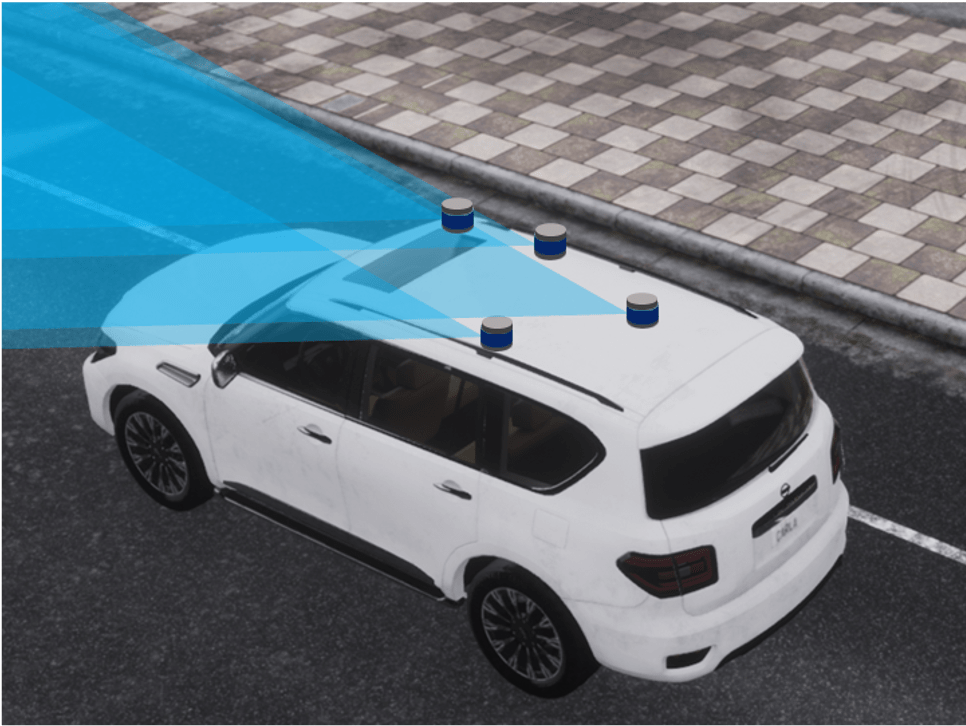}
        \caption{{\textbf{Ours} (Det)}}
        \label{fig:ours_det}
    \end{subfigure}
    \begin{subfigure}[h]{0.19\textwidth}
        \centering
        \includegraphics[width=\textwidth]{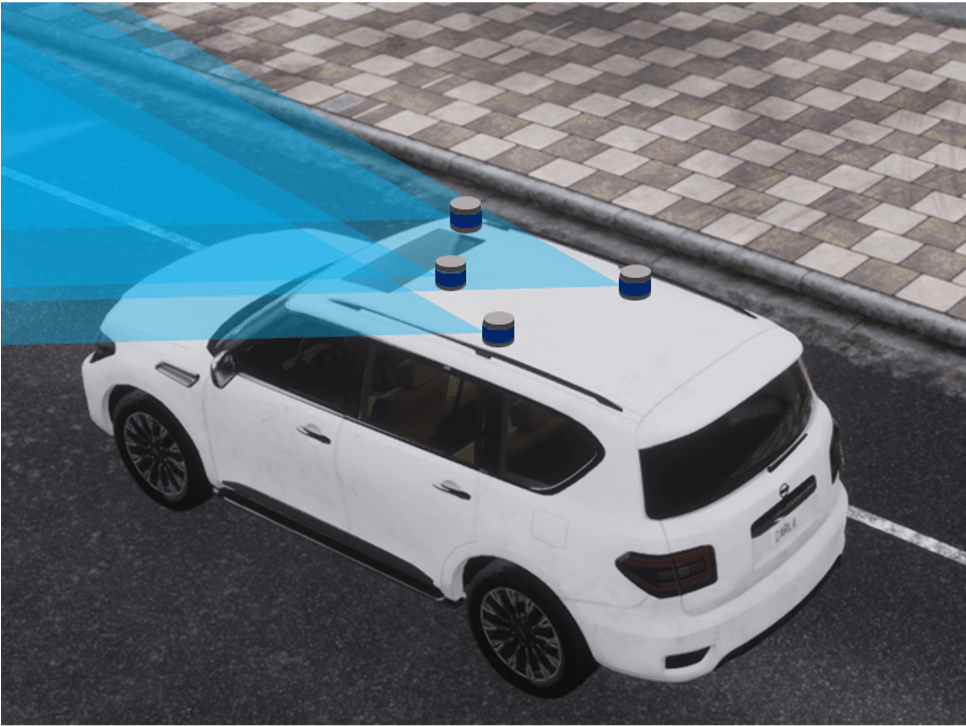}
        \caption{{\textbf{Ours} (Seg)}}
        \label{fig:ours_seg}
    \end{subfigure}
    \begin{subfigure}[h]{0.19\textwidth}
        \centering
        \includegraphics[width=\textwidth]{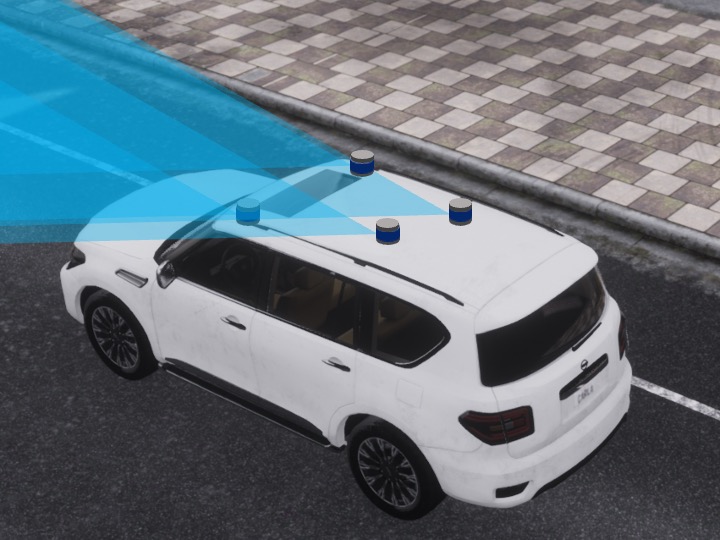}
        \caption{{\textbf{Ours} (2D)}}
        \label{fig:ours_2d}
    \end{subfigure}
    \caption{\textbf{Visualized LiDAR Placements.} We compare the LiDAR placements optimized from our proposed M-SOG metric (for LiDAR semantic segmentation and 3D object detection) and heuristic LiDAR placements utilized by major autonomous vehicle companies (see Appendix Section~\ref{subsec:lidar_config}).}
    \label{fig:placement}
\end{figure}

\noindent\textbf{Optimization Approach.} 
We first define a multivariate normal distribution $\mathcal{N}(\mathbf{m}^{(k)}, (\sigma^{(k)})^2 \mathbf{C}^{(k)})$, where $\mathbf{m}^{(k)}$, $\sigma^{(k)}$, and $\mathbf{C}^{(k)}$ are the mean vector, step size, and covariance matrix of the distribution of iteration $k$, respectively. We discretize the configuration space $\mathcal{U}$ with density $\delta$ and sample $N_k$ candidates $u_i^{(k)}\sim\mathcal{N}(\mathbf{m}^{(k)}, (\sigma^{(k)})^2 \mathbf{C}^{(k)})$ in each iteration $k$, where $i$ indexes the candidates. We update mean vector $\mathbf{m}^{k+1}$ for the next iteration $k+1$ as the updated center of the search distribution for LiDAR configuration. The overall process can be depicted as follows:
\begin{equation}
\label{eq:m}
\mathbf{m}^{(k+1)} = \sum_{i=1}^{M_k} w_i \hat{\mathbf{u}}_{i}^{(k)}, \; G(\hat{\mathbf{u}}_1^{(k)}) \leq G(\hat{\mathbf{u}}_2^{(k)}) \leq \dots \leq G(\hat{\mathbf{u}}_{M_k}^{(k)})~,
\end{equation}
where  $M_k$ is the number of best solutions we adopt to generate $\mathbf{m}^{(k+1)}$, 
and $w_i$ are the weights based on solution fitness. We obtain the evolution path $\mathbf{p}_{\mathbf{C}}^{(k+1)}$ that accumulates information about the direction of successful steps as follows:
\begin{align}
\label{eq:p_c}
    \mathbf{p}_{\mathbf{C}}^{(k+1)} =  (1 - c_{\mathbf{C}})\cdot \mathbf{p}_{\mathbf{C}}^{(k)} +\sqrt{1 - (1 - c_{\mathbf{C}})^2} \cdot \sqrt{\frac{1}{\sum_{i=1}^{M_k} w_i^2}} \cdot \frac{\mathbf{m}^{(k+1)} - \mathbf{m}^{(k)}}{\sigma^{(k)}}~,
\end{align}
where $c_{\mathbf{C}}$ is the learning rate for the covariance matrix update. The covariance matrix $\mathbf{C}$ controls the shape and orientation of the search distribution for LiDAR configurations. It can be updated at each iteration $k$ in the following format:
\begin{equation}\label{eq:C}
    \mathbf{C}^{(k+1)} = (1 - c_{\mathbf{C}}) \mathbf{C}^{(k)} + c_{\mathbf{C}} \mathbf{p}_{\mathbf{C}}^{(k+1)} {\mathbf{p}_{\mathbf{C}}^{(k+1)}}^T~.
\end{equation}
Similarly, we update $\mathbf{p}_{\sigma}$ as the evolution path for step size adaptation. Then, the global step size $\sigma$ can be found below for the scale of search to balance exploration and exploitation:
\begin{equation}
\label{eq:p_sigma}
\mathbf{p}_{\sigma}^{(k+1)} = (1 - c_{\sigma})\mathbf{p}_{\sigma}^{(k)} + \sqrt{1 - (1 - c_{\sigma})^2} \cdot \sqrt{\frac{1}{\sum_{i=1}^{M_k} w_i^2}} \cdot \frac{\mathbf{m}^{(k+1)} - \mathbf{m}^{(k)}}{\sigma^{(k)}}~,
\end{equation}
\begin{equation}
\label{eq:sigma}
\sigma^{(k+1)} = \sigma^{(k)} \exp\left(\frac{c_{\sigma}}{d_{\sigma}} \left(\frac{\|\mathbf{p}_{\sigma}^{(k+1)}\|}{E\|\mathcal{N}(0, \mathbf{I})\|} - 1\right)\right)~,
\end{equation}
where $c_{\sigma}$ is the learning rate for updating the evolution path $\mathbf{p}_{\sigma}$. $d_\sigma$ is a normalization factor to calibrate the pace at which the global step size is adjusted.

\subsection{Theoretical Analysis}
\label{sec:theorem}

Once the evolution optimization empirically converges, it holds that the optimized solution is the local optima of the $\delta$-density Grids space of LiDAR configuration space. In this section, we provide a stronger optimality certification to theoretically ensure that the optimized placement is close to the global optimum under the assumption of bounded and smooth objective function. Due to space limits, the full proof has been attached to the Appendix Section~\ref{sec:proof}. 

\begin{theorem}[Optimality Certification]
\label{thm:opt}
    Given the continuous objective function $G: \mathbb{R}^n\to \mathbb{R}$ with Lipschitz constant  $k_G$ w.r.t. input $\textbf{u}\in \mathcal{U} \subset \mathbb{R}^n$ under $\ell_2$ norm, suppose over a $\delta$-density Grids subset $S \subset \mathcal{U}$, the  distance between the maximal and minimal of function $G$ over $S$ is upper-bounded by $C_M$, and the local optima is  $\textbf{u}^*_S = \arg\min_{\textbf{u}\in S} G(x)$, the following optimality certification regarding $x\in \mathcal{U}$ holds that:
    \begin{align}
    \label{eq:opt}
        \|G(\textbf{u}^*)-G(\textbf{u}^*_S)\|_2 \leq C_M +k_G\delta~,
    \end{align}
    where $\textbf{u}^*$ is the global optima over $\mathcal{U}$. 
\end{theorem}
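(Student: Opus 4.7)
The plan is to combine the Lipschitz continuity of $G$, the $\delta$-density property of $S$, and the bounded oscillation $C_M$ of $G$ over $S$ into a two-term error decomposition that separates discretization slack from optimization slack. The whole argument is a short chaining of three inequalities; the interesting part is identifying which hypothesis discharges which term.

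First I would exploit the $\delta$-density of $S$ in $\mathcal{U}$: by definition, for the continuous global optimum $\mathbf{u}^* \in \mathcal{U}$ there exists a grid witness $\mathbf{u}_0 \in S$ with $\|\mathbf{u}^* - \mathbf{u}_0\|_2 \leq \delta$. The Lipschitz assumption then gives the discretization bound $|G(\mathbf{u}_0) - G(\mathbf{u}^*)| \leq k_G \delta$. Second, since both $\mathbf{u}_0$ and $\mathbf{u}^*_S$ lie in $S$ and the oscillation $\max_{S} G - \min_{S} G$ is bounded by $C_M$, I would obtain $G(\mathbf{u}^*_S) - G(\mathbf{u}_0) \leq C_M$. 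Third, because $\mathbf{u}^*$ is a global minimizer over $\mathcal{U} \supset S$, we automatically have $G(\mathbf{u}^*) \leq G(\mathbf{u}^*_S)$, so the gap is nonnegative and the $\ell_2$/absolute value can be dropped. Chaining these yields the target:
\begin{equation*}
0 \;\leq\; G(\mathbf{u}^*_S) - G(\mathbf{u}^*) \;=\; \bigl[G(\mathbf{u}^*_S) - G(\mathbf{u}_0)\bigr] + \bigl[G(\mathbf{u}_0) - G(\mathbf{u}^*)\bigr] \;\leq\; C_M + k_G\,\delta.
\end{equation*}

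The main obstacle is not analytic but interpretive. Taken literally, the definition $\mathbf{u}^*_S = \arg\min_{\mathbf{u}\in S} G(\mathbf{u})$ already forces $G(\mathbf{u}^*_S) \leq G(\mathbf{u}_0)$, which would make the $C_M$ term redundant and shrink the bound to just $k_G \delta$. I expect the intended reading is that $\mathbf{u}^*_S$ is the (approximate/local) point at which the CMA-ES iteration stabilizes inside $S$, while $C_M$ quantifies the residual fluctuation of $G$ over the final neighborhood the evolution strategy explores. Under that reading, $k_G \delta$ absorbs the grid-discretization error from embedding a continuous search in a $\delta$-density lattice, and $C_M$ absorbs the optimization gap between the CMA-ES output and the true grid minimum, making the certificate both meaningful and tight up to these two sources of error.
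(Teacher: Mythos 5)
Your proof is correct and takes essentially the same route as the paper's: both pass through a grid witness $\mathbf{u}_0 \in S$ within $\delta$ of $\mathbf{u}^*$, bound $|G(\mathbf{u}_0)-G(\mathbf{u}^*)|$ by $k_G\delta$ via Lipschitz continuity, and bound the remaining gap within $S$ by the oscillation $C_M$ (your write-up even silently repairs a typo in the paper's triangle-inequality step, where the same term $\|G(\mathbf{u}^*)-G(\mathbf{u}_S)\|_2$ appears twice instead of the intended $\|G(\mathbf{u}_S)-G(\mathbf{u}^*_S)\|_2$). Your interpretive remark is also correct and goes beyond the paper: under the literal reading $\mathbf{u}^*_S = \arg\min_{\mathbf{u}\in S} G(\mathbf{u})$ one has $G(\mathbf{u}^*_S) \leq G(\mathbf{u}_0)$, so the bound tightens to $k_G\delta$ and the $C_M$ term is vacuous; the paper's proof in fact never invokes the arg-min property, so its argument (like yours) certifies an arbitrary point of $S$, which is consistent with your reading of $\mathbf{u}^*_S$ as the approximate minimizer returned by CMA-ES rather than the exact grid minimum.
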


The global optimality certification Theorem~\ref{thm:opt} is applicable in practice because the Lipschitz constant $k_G$ and the distance between the maximal and minimal of objective function $G$ over $S$ can be approximated easily through calculating $G(\textbf{u}_i^{(k)})$ of Algorithm~\ref{alg:cmaes} for each sampled $\textbf{u}_i^{(k)}$ over the $\delta$-density Grids subset. Besides, we have a more general corollary below to further relax the assumption of bounded objective function. 
\begin{corollary}
\label{corollary:1}
    When $\mathcal{U}$ is a hyper-rectangle with the bounded $\ell_2$ norm of domain $U_i\in\mathbb{R}$ at each dimension $i$, with $i=1,2,\dots,n$, Theorem~\ref{thm:opt} can hold in a more general way by only assuming that the Lipschitz constant $k_G$ of the objective function is given, where Equation~(\ref{eq:opt}) becomes:
    \begin{align}
        \|G(\textbf{u}^*)-G(\textbf{u}^*_S)\|_2 \leq k_G\sum_{i=1}^n{U_i} +k_G\delta~.
    \end{align}
\end{corollary}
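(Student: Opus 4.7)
The plan is to derive Corollary~\ref{corollary:1} as a direct consequence of Theorem~\ref{thm:opt} by replacing the generic bound $C_M$ with an explicit Lipschitz-based bound that only uses the hyper-rectangle geometry of $\mathcal{U}$. The intuition is that without an \emph{a priori} bound on $G$, we cannot control the maximum swing of $G$ over $S$ directly; however, Lipschitz continuity together with a finite diameter of the ambient domain automatically provides one.

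First, I would recall from Theorem~\ref{thm:opt} that whenever the range of $G$ on $S$ is at most $C_M$, we have $\|G(\mathbf{u}^*)-G(\mathbf{u}^*_S)\|_2 \leq C_M + k_G\delta$. So it suffices to produce a valid $C_M$ from the hyper-rectangle assumption. For any two points $\mathbf{u}_a, \mathbf{u}_b \in S \subset \mathcal{U}$, Lipschitz continuity gives $|G(\mathbf{u}_a) - G(\mathbf{u}_b)| \leq k_G \|\mathbf{u}_a - \mathbf{u}_b\|_2$. Since $\mathcal{U}$ is a hyper-rectangle with per-dimension $\ell_2$ extent $U_i$, the coordinate-wise difference satisfies $|u_{a,i} - u_{b,i}| \leq U_i$ for every $i$.

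Next, I would apply the standard norm inequality $\|\mathbf{v}\|_2 \leq \|\mathbf{v}\|_1$ to the difference vector to obtain
\begin{equation*}
\|\mathbf{u}_a - \mathbf{u}_b\|_2 \;\leq\; \sum_{i=1}^n |u_{a,i}-u_{b,i}| \;\leq\; \sum_{i=1}^n U_i~.
\end{equation*}
Taking a supremum over $\mathbf{u}_a, \mathbf{u}_b \in S$ then yields the valid choice $C_M = k_G \sum_{i=1}^n U_i$. Substituting back into Equation~\eqref{eq:opt} gives exactly $\|G(\mathbf{u}^*)-G(\mathbf{u}^*_S)\|_2 \leq k_G \sum_{i=1}^n U_i + k_G \delta$, which is the claimed inequality.

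The main (and only) subtle point will be interpreting the phrase ``bounded $\ell_2$ norm of domain $U_i$ at each dimension'': I read this as $U_i$ being the side length of $\mathcal{U}$ along axis $i$, so that any two points differ by at most $U_i$ in that coordinate. The rest is mechanical: a triangle-style combination with the $k_G\delta$ term inherited from Theorem~\ref{thm:opt} (which itself comes from controlling the distance between $\mathbf{u}^*$ and its nearest grid point in $S$). No additional smoothness or boundedness of $G$ is needed, which is precisely the relaxation the corollary advertises.
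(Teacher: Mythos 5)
Your proposal is correct and follows essentially the same route as the paper's proof: both derive the admissible range bound $C_M = k_G\sum_{i=1}^n U_i$ from Lipschitz continuity together with the coordinate-wise bound $|u_{a,i}-u_{b,i}|\leq U_i$ (your $\|\cdot\|_2 \leq \|\cdot\|_1$ step is exactly the triangle-inequality decomposition over dimensions that the paper performs), and then combine it with the $k_G\delta$ grid term. The only cosmetic difference is that you invoke Theorem~\ref{thm:opt} as a black box with the new $C_M$, whereas the paper re-runs the theorem's final triangle-inequality argument inline; the mathematical content is identical.
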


\section{Experiments}
\label{sec:experiments}

\begin{figure}[t]
\centering
\includegraphics[scale=0.4]{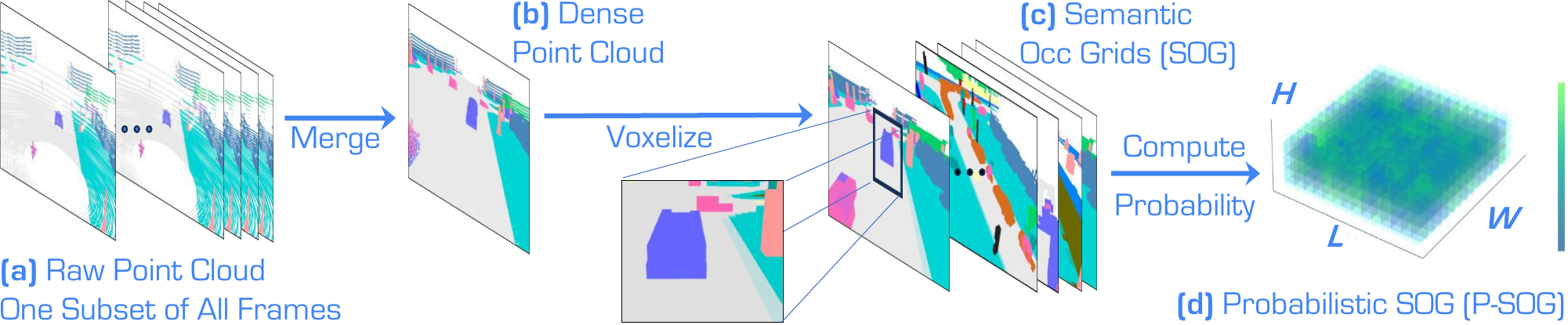}
\caption{\textbf{Pipeline of Probabilistic SOG generation.} We first merge multiple frames of raw point clouds (\textcolor{place3d_blue}{a}) into dense point clouds (\textcolor{place3d_blue}{b}). Then, we voxelize dense point clouds into SOG, \textit{i.e.}, semantic occupancy grids (\textcolor{place3d_blue}{c}), and traverse all frames of dense point clouds to synthesize probabilistic SOG (\textcolor{place3d_blue}{d}).}
\label{fig:sog}
\end{figure}
\begin{table}[t]
\centering
\caption{\textbf{Optimized M-SOG Metrics.} We report the $M_{SOG}$ scores for both detection and segmentation tasks in our pipeline. The scores are calculated based on the \textit{car} class for detection and all semantic classes for segmentation. \textit{Line-roll} and \textit{Pyramid-roll} are abbreviated as \textit{L-roll} and \textit{P-roll}.}

\label{tab:m_sog}
\resizebox{\textwidth}{!}{\begin{tabular}{p{3.8cm}|p{1.3cm}<{\centering}p{1.3cm}<{\centering}p{1.3cm}<{\centering}p{1.3cm}<{\centering}p{1.3cm}<{\centering}p{1.3cm}<{\centering}p{1.3cm}<{\centering}|p{1.3cm}<{\centering}}
\toprule
 \textbf{Metrics} [$M_{SOG}$]   & Center & Line  & Pyramid & Square & Trapezoid & L-roll & P-roll  & {\textbf{Ours}} \\ 
\midrule\midrule
{3D Detection}  ($\times 10^{-6}$)     &  $-1.26$  &	$-1.65$ & $-1.34$  &  $-1.54$ &	$-1.52$  &	$-1.41$ &	$-1.35$  & {$-1.18$}
          \\
{3D Segmentation}  ($\times$ $10^{-6}$)     & $-1.58$   & $-2.62$    & $-2.56$  & $-2.35$  & $-2.89$    & $-3.13$    & $-1.63$  & {$-1.29$}      \\
\bottomrule
\end{tabular}
}
\end{table}

\subsection{Benchmark Setups}

\noindent\textbf{Data Generation.} We generate LiDAR point clouds and ground truth using CARLA \cite{dosovitskiy2017carla}. We use the maps of Towns 1, 3, 4, and 6 and set 6 ego-vehicle routes for each map. We incorporate 23 semantic classes for LiDAR semantic segmentation and 3 instance classes for 3D object detection. Data collection is performed for 10 LiDAR placements, resulting in a total of 280,000 frames: 
\textcolor{place3d_blue}{\textbf{1)}} For each placement, we gather 340 clean scenes and 360 corrupted scenes, with each scene consisting of 40 frames.
\textcolor{place3d_blue}{\textbf{2)}} The clean set comprises 13,600 frames, including 11,200 samples (280 scenes) for training and 2,400 samples (60 scenes) for validation, following the split ratio used in nuScenes \cite{nuScenes}.
\textcolor{place3d_blue}{\textbf{3)}} The corruption set is used for robustness evaluation and contains 6 different conditions, each with 2,400 samples (60 scenes). More details are in the Appendix Section~\ref{sec:place3d}.

\noindent\textbf{LiDAR Configurations.} We adopt five commonly employed heuristic LiDAR placements, which have been adopted by leading autonomous driving companies, as our baseline. These placements are represented in Figure~\ref{fig:placement}\textcolor{place3d_blue}{a}-\textcolor{place3d_blue}{e}. Following KITTI \cite{geiger2012kitti}, we configured the LiDAR sensor with a vertical field of view of [-24.8, 2.0] degrees. To achieve the \textit{Line-roll} and \textit{Pyramid-roll} configurations, we adjusted LiDAR roll angles for the \textit{Line} and \textit{Pyramid} setups, as depicted in Figure~\ref{fig:placement}. We present detailed LiDAR configurations in the Appendix Section~\ref{subsec:lidar_config}.

\noindent\textbf{Corruption Types.} To replicate adverse conditions, we synthesized six types of corrupted point clouds on the validation set of each sub-set, following the settings of the Robo3D benchmark \cite{kong2023robo3D}. These corruptions can be categorized into 1) severe weather conditions, including "snow", "fog", and "wet ground", 2) external disturbances, including "motion blur", and 3) internal sensor failures, including "crosstalk" and "incomplete echo".
We include the features and implementation details of corruptions in the Appendix Section~\ref{subsec:corruption_gen}.

\noindent\textbf{P-SOG Synthesis.} 
We follow the steps as depicted in Figure~\ref{fig:sog} to generate semantic occupancy grids for each LiDAR scene. We first collect point clouds and semantic labels using high-resolution LiDARs and sequentially divide all samples into multiple subsets. Through the transformation of world coordinates of the ego-vehicle, frames of point clouds of each subset are aggregated into one frame of dense point cloud. Then, we utilize the voting strategy to determine the semantic label for each voxel in ROI and generate the SOG. This process is executed across all subsets to produce P-SOG. Notably, the P-SOG is only generated on the scenes of the training set. 

\noindent\textbf{Surrogate Metric of SOG.} We compute the scores of M-SOG separately for 3D detection and segmentation, as shown in Table~\ref{tab:m_sog}. To evaluate M-SOG for segmentation, we utilize all semantic classes to generate semantic occupancy grids. For detection, we specifically focus on the \textit{car} semantic type, while merging the remaining semantic classes into a single category for M-SOG analysis.

\noindent\textbf{Detector and Segmentor.} For the benchmark, we conduct experiments using four LiDAR semantic segmentation models, \textit{i.e.}, MinkUNet \cite{choy2019minkowski}, SPVCNN \cite{tang2020searching}, PolarNet \cite{zhou2020polarNet}, and Cylinder3D \cite{zhu2021cylindrical}, and four 3D object detection models, \textit{i.e.}, PointPillars \cite{lang2019pointpillars}, CenterPoint \cite{yin2021centerpoint}, BEVFusion-L \cite{liu2023bevfusion}, and FSTR \cite{zhang2023fully}. 
The detailed training configurations are included in the Appendix Section~\ref{subsec:hyperparam}.

\begin{table}[t]
\centering
\caption{\textbf{Performance evaluations of LiDAR semantic segmentation} under clean and adverse conditions. For each LiDAR placement strategy, we report the mIoU scores ($\uparrow$), represented in percentage ($\%$). The average scores only include adverse scenarios.}

\label{tab:robo3d_seg}
\resizebox{\textwidth}{!}{
\begin{tabular}{r|p{0.935cm}<{\centering}p{0.935cm}<{\centering}p{0.935cm}<{\centering}|p{0.935cm}<{\centering}p{0.935cm}<{\centering}p{0.935cm}<{\centering}|p{0.935cm}<{\centering}p{0.935cm}<{\centering}p{0.935cm}<{\centering}|p{0.935cm}<{\centering}p{0.935cm}<{\centering}p{0.935cm}<{\centering}}
\toprule
\multirow{2}{*}{\textbf{Method}} & \multicolumn{3}{c}{\textbf{Center}} \vline & \multicolumn{3}{c}{\textbf{Line}} \vline & \multicolumn{3}{c}{\textbf{Pyramid}} \vline & \multicolumn{3}{c}{\textbf{Square}}
\\
& Mink & SPV & Cy3D & Mink & SPV & Cy3D & Mink & SPV & Cy3D & Mink & SPV & Cy3D
\\\midrule\midrule
\rowcolor{place3d_blue!8}\textcolor{place3d_blue}{\textbf{Clean}}~\textcolor{place3d_blue}{$\bullet$} & \textcolor{place3d_blue}{$65.7$} & \textcolor{place3d_blue}{$67.1$} & \textcolor{place3d_blue}{$72.7$} & \textcolor{place3d_blue}{$59.7$} & \textcolor{place3d_blue}{$59.3$} & \textcolor{place3d_blue}{$68.9$} & \textcolor{place3d_blue}{$62.7$} & \textcolor{place3d_blue}{$67.6$} & \textcolor{place3d_blue}{$68.4$} & \textcolor{place3d_blue}{$60.7$} & \textcolor{place3d_blue}{$63.4$} & \textcolor{place3d_blue}{$69.9$} 
\\
\midrule
Fog~\textcolor{place3d_red}{$\circ$} & $55.9$ & $39.3$ & $55.6$ & $51.7$ & $42.8$ & $55.5$ & $52.9$ & $48.6$ & $51.0$ & $55.6$ & $40.7$ & $52.0$ 
\\
Wet Ground~\textcolor{place3d_red}{$\circ$} & $63.8$ & $66.6$ & $64.4$ & $60.2$ & $57.9$ & $66.4$ & $60.3$ & $66.6$ & $52.2$ & $61.9$ & $64.3$ & $55.6$ 
\\
Snow~\textcolor{place3d_red}{$\circ$} & $25.1$ & $35.6$ & $16.7$ & $35.5$ & $31.3$ & $4.7$ & $25.2$ & $30.2$ & $5.0$ & $33.5$ & $38.3$ & $2.7$ 
\\
Motion Blur~\textcolor{place3d_red}{$\circ$} & $35.8$ & $35.6$ & $37.6$ & $52.0$ & $46.1$ & $39.4$ & $50.7$ & $55.1$ & $42.5$ & $51.5$ & $53.9$ & $44.2$ 
\\
Crosstalk~\textcolor{place3d_red}{$\circ$} & $24.7$ & $19.5$ & $36.9$ & $27.1$ & $13.6$ & $34.3$ & $17.3$ & $14.8$ & $26.6$ & $26.5$ & $18.6$ & $37.1$
\\
~~Incomplete Echo~\textcolor{place3d_red}{$\circ$} & $64.5$ & $66.8$ & $71.5$ & $59.2$ & $57.1$ & $68.3$ & $60.2$ & $65.9$ & $60.9$ & $61.2$ & $63.7$ & $68.7$
\\
\rowcolor{place3d_red!15}\textcolor{place3d_red}{\textbf{Average}}~\textcolor{place3d_red}{$\bullet$} & $45.0$ & $43.9$ & $47.1$ & $47.6$ & $41.5$ & $44.8$ & $44.4$ & $46.9$ & $39.7$ & $48.4$ & $46.6$ & $43.4$ 
\\\midrule
\multirow{2}{*}{\textbf{Method}} & \multicolumn{3}{c}{\textbf{Trapezoid}} \vline & \multicolumn{3}{c}{\textbf{Line-Roll}} \vline & \multicolumn{3}{c}{\textbf{Pyramid-Roll}} \vline & \multicolumn{3}{c}{\textcolor{place3d_red}{\textbf{Ours}}} 
\\
& Mink & SPV & Cy3D & Mink & SPV & Cy3D & Mink & SPV & Cy3D & Mink & SPV & Cy3D
\\\midrule\midrule
\rowcolor{place3d_blue!8}\textcolor{place3d_blue}{\textbf{Clean}}~\textcolor{place3d_blue}{$\bullet$} & \textcolor{place3d_blue}{$59.0$} & \textcolor{place3d_blue}{$61.0$} & \textcolor{place3d_blue}{$68.5$} & \textcolor{place3d_blue}{$58.5$} & \textcolor{place3d_blue}{$60.6$} & \textcolor{place3d_blue}{$69.8$} & \textcolor{place3d_blue}{$62.2$} & \textcolor{place3d_blue}{$67.9$} & \textcolor{place3d_blue}{$69.3$} & \textcolor{place3d_blue}{$66.5$} & \textcolor{place3d_blue}{$68.3$} & \textcolor{place3d_blue}{$73.0$} 
\\
\midrule
Fog~\textcolor{place3d_red}{$\circ$} & $49.7$ & $40.9$ & $52.1$ & $48.6$ & $42.2$ & $49.7$ & $52.2$ & $47.2$ & $50.7$ & $59.5$ & $59.1$ & $57.6$ 
\\
Wet Ground~\textcolor{place3d_red}{$\circ$} & $60.4$ & $61.3$ & $64.6$ & $59.2$ & $62.0$ & $65.4$ & $60.9$ & $67.1$ & $67.9$ & $66.6$ & $66.7$ & $67.2$ 
\\
Snow~\textcolor{place3d_red}{$\circ$} & $27.6$ & $33.6$ & $3.1$ & $26.9$ & $27.0$ & $2.6$ & $26.6$ & $31.6$ & $2.1$ & $17.6$ & $24.0$ & $5.9$ 
\\
Motion Blur~\textcolor{place3d_red}{$\circ$} & $51.7$ & $49.1$ & $36.7$ & $50.4$ & $49.9$ & $37.4$ & $52.5$ & $56.5$ & $44.1$ & $56.7$ & $56.0$ & $48.7$ 
\\
Crosstalk~\textcolor{place3d_red}{$\circ$} & $18.4$ & $16.9$ & $30.0$ & $21.2$ & $16.5$ & $27.3$ & $19.3$ & $13.7$ & $31.9$ & $24.5$ & $18.7$ & $41.0$ 
\\
~~Incomplete Echo~\textcolor{place3d_red}{$\circ$} & $59.3$ & $60.7$ & $65.6$ & $58.0$ & $61.0$ & $67.8$ & $60.8$ & $66.7$ & $70.0$ & $66.9$ & $66.9$ & $63.3$ 
\\
\rowcolor{place3d_red!15}\textcolor{place3d_red}{\textbf{Average}}~\textcolor{place3d_red}{$\bullet$} & $44.5$ & $43.8$ & $42.0$ & $44.1$ & $43.1$ & $41.7$ & $45.4$ & $47.1$ & $44.5$ & \textcolor{place3d_red}{$48.6$} & \textcolor{place3d_red}{$48.6$} & \textcolor{place3d_red}{$47.3$}
\\\bottomrule
\end{tabular}
}
\end{table}

\begin{table}[t]
\centering
\caption{\textbf{Performance evaluations of 3D object detection} under clean and adverse conditions. For each LiDAR placement strategy, we report the mAP scores ($\uparrow$) for the \textit{car} class, represented in percentage (\%). The average scores only include adverse scenarios.}

\label{tab:robo3d_det}
\resizebox{\textwidth}{!}{
\begin{tabular}{r|p{0.935cm}<{\centering}p{0.935cm}<{\centering}p{0.935cm}<{\centering}|p{0.935cm}<{\centering}p{0.935cm}<{\centering}p{0.935cm}<{\centering}|p{0.935cm}<{\centering}p{0.935cm}<{\centering}p{0.935cm}<{\centering}|p{0.935cm}<{\centering}p{0.935cm}<{\centering}p{0.935cm}<{\centering}}
\toprule
\multirow{2}{*}{\textbf{Method}} & \multicolumn{3}{c}{\textbf{Center}} \vline & \multicolumn{3}{c}{\textbf{Line}} \vline & \multicolumn{3}{c}{\textbf{Pyramid}} \vline & \multicolumn{3}{c}{\textbf{Square}}
\\
& Pillar & Center & BEV & Pillar & Center & BEV & Pillar & Center & BEV & Pillar & Center & BEV
\\\midrule\midrule
\rowcolor{place3d_blue!8}\textcolor{place3d_blue}{\textbf{Clean}}~\textcolor{place3d_blue}{$\bullet$} & \textcolor{place3d_blue}{$46.5$} & \textcolor{place3d_blue}{$55.8$} & \textcolor{place3d_blue}{$52.5$} & \textcolor{place3d_blue}{$43.4$} & \textcolor{place3d_blue}{$54.0$} & \textcolor{place3d_blue}{$49.3$} & \textcolor{place3d_blue}{$46.1$} & \textcolor{place3d_blue}{$55.9$} & \textcolor{place3d_blue}{$51.0$} & \textcolor{place3d_blue}{$43.8$} & \textcolor{place3d_blue}{$54.0$} & \textcolor{place3d_blue}{$49.2$} 
\\
\midrule
Fog~\textcolor{place3d_red}{$\circ$} & $17.1$ & $23.2$ & $19.2$ & $15.1$ & $20.2$ & $18.6$ & $17.2$ & $26.0$ & $20.8$ & $17.2$ & $23.4$ & $20.7$
\\
Wet Ground~\textcolor{place3d_red}{$\circ$} &$36.3$ & $47.3$ & $36.8$ & $39.6$ & $49.2$ & $38.0$ & $38.7$ & $49.6$ & $38.1$ & $40.0$ & $50.2$ & $39.4$
\\
Snow~\textcolor{place3d_red}{$\circ$} &$37.4$ & $18.9$ & $27.0$ & $33.6$ & $22.8$ & $12.2$ & $36.1$ & $21.1$ & $15.0$ & $32.5$ & $19.2$ & $7.3$
\\
Motion Blur~\textcolor{place3d_red}{$\circ$} &$27.1$ & $27.3$ & $12.8$ & $27.1$ & $9.7$ & $23.6$ & $29.2$ & $28.6$ & $29.1$ & $26.1$ & $13.0$ & $23.6$
\\
Crosstalk~\textcolor{place3d_red}{$\circ$} &$25.7$ & $31.6$ & $8.3$ & $16.9$ & $12.0$ & $17.2$ & $26.3$ & $24.9$ & $23.7$ & $22.3$ & $14.0$ & $20.1$
\\
~~Incomplete Echo~\textcolor{place3d_red}{$\circ$} &$26.2$ & $22.3$ & $20.9$ & $25.2$ & $21.5$ & $21.6$ & $25.8$ & $25.9$ & $21.6$ & $25.7$ & $24.6$ & $22.7$
\\
\rowcolor{place3d_red!15}\textcolor{place3d_red}{\textbf{Average}}~\textcolor{place3d_red}{$\bullet$} &$28.3$ & $28.4$ & $20.8$ & $26.3$ & $22.6$ & $21.9$ & $28.9$ & $29.4$ & $24.7$ & $27.3$ & $24.1$ & $22.3$
\\\midrule
\multirow{2}{*}{\textbf{Method}} & \multicolumn{3}{c}{\textbf{Trapezoid}} \vline & \multicolumn{3}{c}{\textbf{Line-Roll}} \vline & \multicolumn{3}{c}{\textbf{Pyramid-Roll}} \vline & \multicolumn{3}{c}{\textcolor{place3d_red}{\textbf{Ours}}} 
\\
& Pillar & Center & BEV & Pillar & Center & BEV & Pillar & Center & BEV & Pillar & Center & BEV
\\\midrule\midrule
\rowcolor{place3d_blue!8}\textcolor{place3d_blue}{\textbf{Clean}}~\textcolor{place3d_blue}{$\bullet$} & \textcolor{place3d_blue}{$43.5$} & \textcolor{place3d_blue}{$56.4$} & \textcolor{place3d_blue}{$50.2$} & \textcolor{place3d_blue}{$44.6$} & \textcolor{place3d_blue}{$55.2$} & \textcolor{place3d_blue}{$50.8$} & \textcolor{place3d_blue}{$46.1$} & \textcolor{place3d_blue}{$56.2$} & \textcolor{place3d_blue}{$50.7$} & \textcolor{place3d_blue}{$46.8$} & \textcolor{place3d_blue}{$57.1$} & \textcolor{place3d_blue}{$53.0$}
\\
\midrule
Fog~\textcolor{place3d_red}{$\circ$} &$16.0$ & $22.1$ & $19.2$ & $15.2$ & $19.6$ & $15.2$ & $17.5$ & $24.8$ & $22.8$ & $18.3$ & $22.3$ & $21.8$
\\
Wet Ground~\textcolor{place3d_red}{$\circ$} &$40.0$ & $51.7$ & $39.2$ & $40.3$ & $49.6$ & $38.3$ & $39.1$ & $49.2$ & $40.3$ & $40.1$ & $51.1$ & $41.3$
\\
Snow~\textcolor{place3d_red}{$\circ$} &$31.3$ & $14.6$ & $19.8$ & $33.7$ & $16.0$ & $11.2$ & $33.7$ & $15.9$ & $5.2$ & $36.4$ & $23.1$ & $15.9$
\\
Motion Blur~\textcolor{place3d_red}{$\circ$} &$25.9$ & $15.3$ & $26.9$ & $26.9$ & $15.5$ & $19.5$ & $27.3$ & $29.0$ & $30.3$ & $26.5$ & $27.7$ & $32.0$
\\
Crosstalk~\textcolor{place3d_red}{$\circ$} &$18.6$ & $11.6$ & $27.2$ & $20.0$ & $9.5$ & $10.7$ & $25.1$ & $24.0$ & $22.9$ & $26.4$ & $29.0$ & $22.9$
\\
~~Incomplete Echo~\textcolor{place3d_red}{$\circ$} &$24.9$ & $23.9$ & $22.6$ & $25.3$ & $23.2$ & $21.9$ & $25.2$ & $25.5$ & $21.8$ & $25.8$ & $23.6$ & $22.7$
\\
\rowcolor{place3d_red!15}\textcolor{place3d_red}{\textbf{Average}}~\textcolor{place3d_red}{$\bullet$} &$26.1$ & $23.2$ & $25.8$ & $26.9$ & $22.2$ & $19.5$ & $28.0$ & $28.1$ & $23.9$ & \textcolor{place3d_red}{$28.9$} & \textcolor{place3d_red}{$29.5$} & \textcolor{place3d_red}{$26.1$}
\\\bottomrule
\end{tabular}
}
\end{table}

\subsection{Comparative Study}

We conduct benchmark studies to evaluate the performance of varied LiDAR placements in both clean and adverse conditions.  We extensively examine the correlation between the proposed surrogate metric, known as M-SOG, and the final performance results. Through our analysis, we are able to demonstrate the effectiveness and robustness of the entire Place3D framework.

\noindent\textbf{Effectiveness of M-SOG Surrogate Metric in Place3D.}
In Table~\ref{tab:m_sog}, we report the scores of the M-SOG surrogate metric for various LiDAR placements. In Tables~\ref{tab:robo3d_seg} and~\ref{tab:robo3d_det}, we benchmark the 3D object detection and LiDAR semantic segmentation performance of varied LiDAR placements with state-of-the-art algorithms. Figure~\ref{fig:m_sog} illustrates the correlation between the proposed M-SOG surrogate metric and perception capacity. The results demonstrate a clear correlation, where the performance generally improves as the M-SOG increases. While there might be fluctuations in some placements with specific algorithms, the overall relationship follows a linear correlation, highlighting the effectiveness of our M-SOG for computationally efficient sensor optimization purposes. We show more plots in the Appendix Sections~\ref{subsec:seg_quant} and \ref{subsec:det_quant}.

\noindent\textbf{Comparisons to Existing Sensor Placement Methods.}
The effectiveness of LiDAR sensor placement metrics lies in the linear correlation between metrics and actual performance. We set the same ROI size and conduct a quantitative comparison with \textit{S-MIG}~\cite{hu2022investigating} on 3D detection (as \textit{S-MIG}~\cite{hu2022investigating} only works for 3D detection task). As shown in Figure~\ref{fig:comparisons}, our metric exhibits better linear correlation. Since we introduce semantic occupancy information as a prior for the evaluation metric, our method more accurately characterizes the boundaries in the 3D semantic scene and addresses the issue of objects and environment occlusions when using 3D bounding boxes as priors. Moreover, our LiDAR placement method makes the first attempt to include both detection and segmentation tasks. We present additional comparisons in the Appendix Section \ref{subsec:comparison}.

\noindent\textbf{Superiority of Optimization via Place3D.}
As shown in Tables~\ref{tab:robo3d_seg} and~\ref{tab:robo3d_det}, our optimized configurations achieved the best performance in both segmentation and detection tasks under both clean and adverse conditions among all models. We report per-class quantitative results in the Appendix Sections~\ref{subsec:seg_quant} and \ref{subsec:det_quant}. The improvement from optimization remains significant even when comparing our optimized configurations against the best-performing baseline in each task.
For segmentation, optimized placements outperform the best-performing baseline by up to 0.8\% on clean datasets and by as much as 1.5\% on corruption. For detection, optimized configurations exceed the best-performing baseline by up to 0.7\% on clean datasets and by up to 0.3\% on corruption.

\noindent\textbf{Robustness of Optimization via Place3D.}
The M-SOG utilizes semantic occupancy as prior knowledge, which is invariant to changing conditions, thereby enabling robustness of optimized placement in adverse weather. Under the corrupted setting, although the correlation between M-SOG and perception performance is not as obvious as that in the clean setting and shows some fluctuation, our optimized LiDAR configuration consistently maintained its performance in adverse conditions, mirroring its effectiveness in the clean condition. While the top-performing baseline LiDAR configurations in clean datasets might be notably worse compared to others when faced with corruption, the optimized configuration via Place3D consistently shows the best performance under adverse conditions. 
We showcase several qualitative results on this aspect in the Appendix Section~\ref{sec:supp_qualitative}.

\begin{figure}[t]
    \begin{subfigure}[b]{0.24\textwidth}
    \centering
        \includegraphics[width=\textwidth]{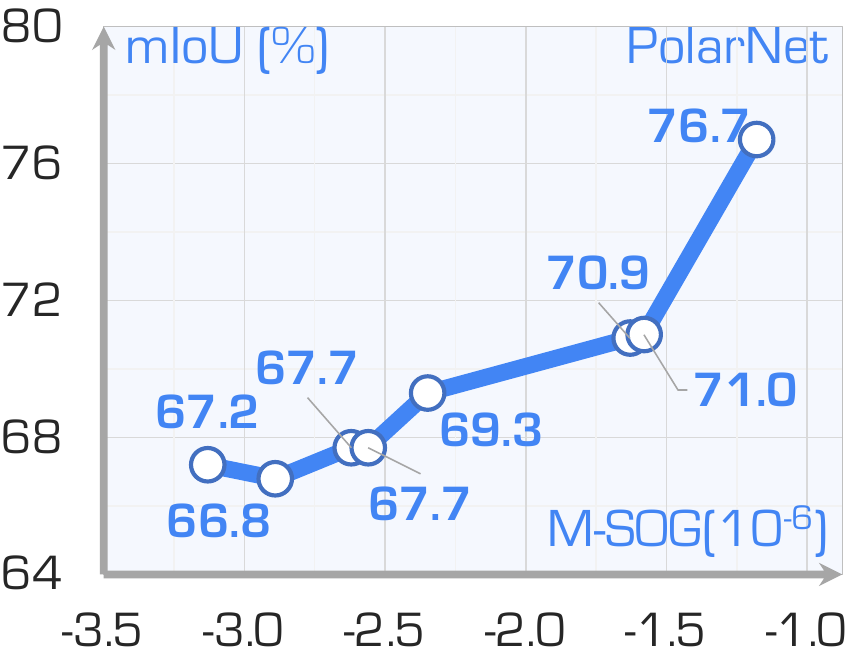}
        \caption{PolarNet}
    \end{subfigure}
    \hfill
    \begin{subfigure}[b]{0.24\textwidth}
    \centering
        \includegraphics[width=\textwidth]{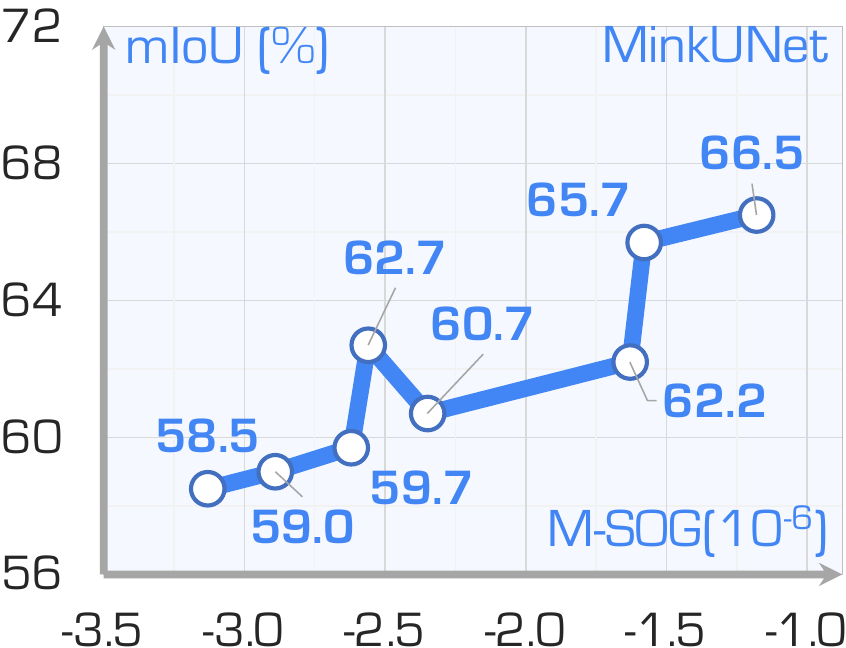}
        \caption{MinkUNet}
    \end{subfigure}
    \hfill
    \begin{subfigure}[b]{0.24\textwidth}
    \centering
        \includegraphics[width=\textwidth]{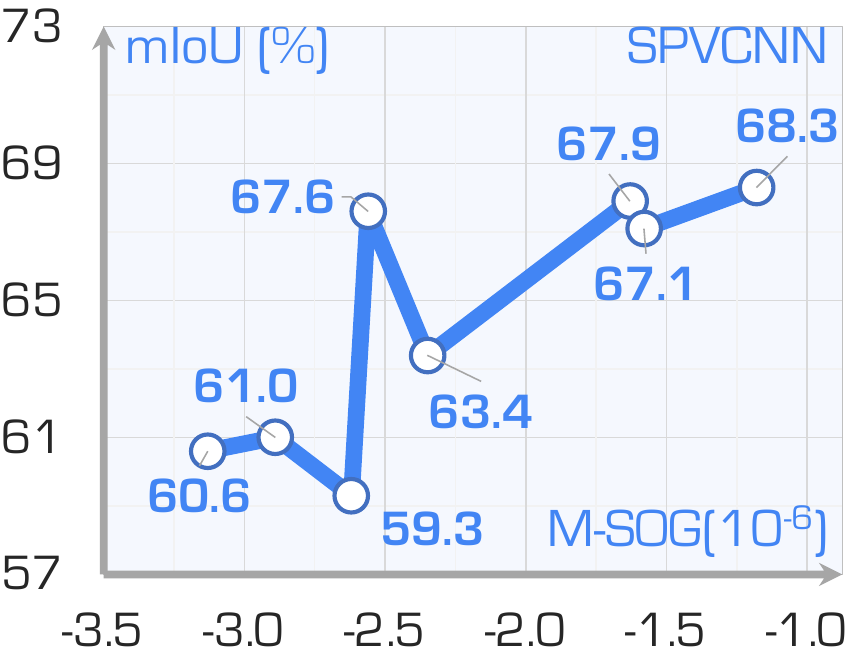}
        \caption{SPVCNN}
    \end{subfigure}
    \hfill
    \begin{subfigure}[b]{0.24\textwidth}
    \centering
        \includegraphics[width=\textwidth]{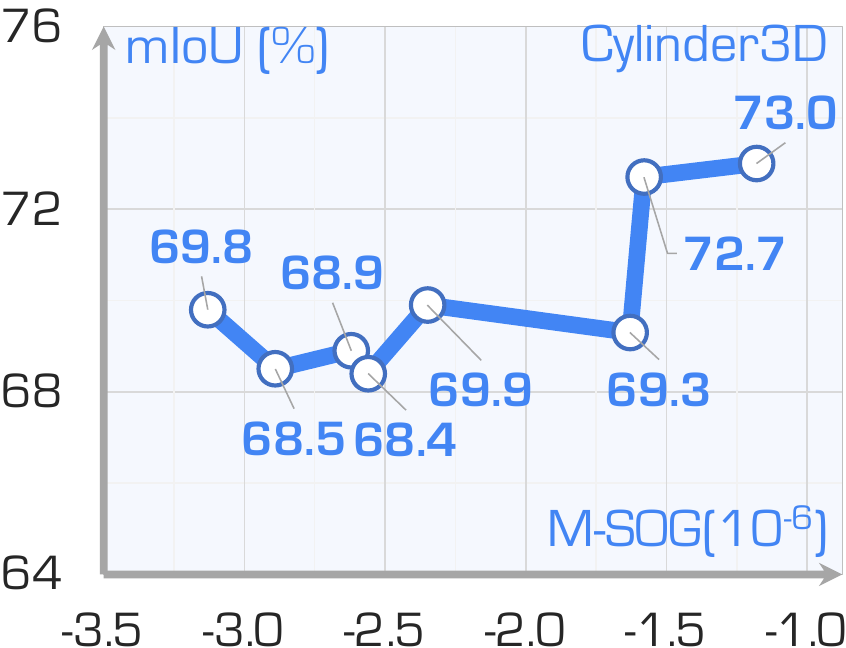}
        \caption{Cylinder3D}
    \end{subfigure}
    \caption{The correlation between M-SOG and LiDAR semantic segmentation \cite{choy2019minkowski,tang2020searching,zhou2020polarNet,zhu2021cylindrical} models performance in the \textit{clean} condition.}
    \label{fig:m_sog}
\end{figure}

\begin{figure}[t]
\centering
\begin{subfigure}[b]{0.24\textwidth}
    \centering
    \includegraphics[width=\textwidth]{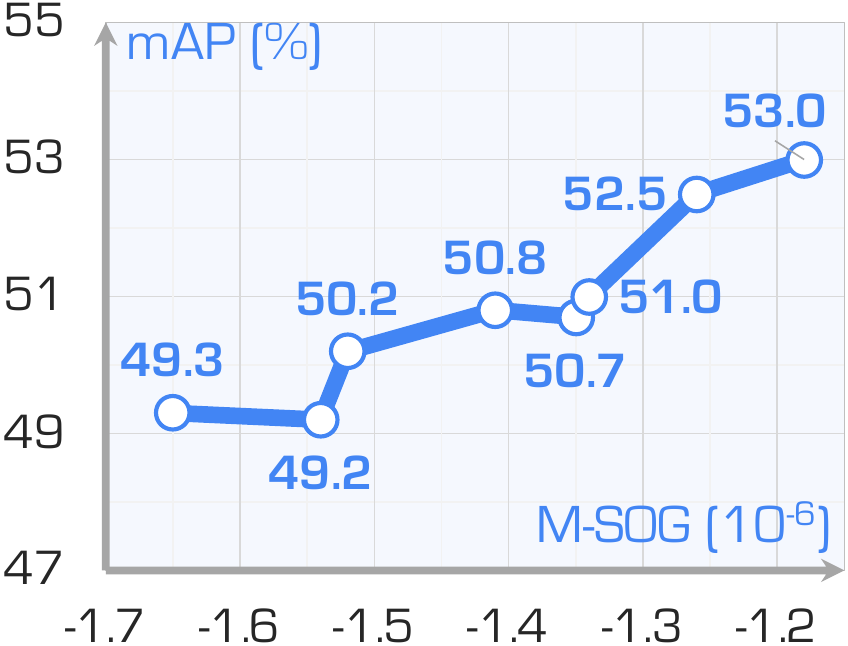}
    \caption{BEVFusion (M-SOG)}
\end{subfigure}
\hfill
\begin{subfigure}[b]{0.24\textwidth}
    \centering
    \includegraphics[width=\textwidth]{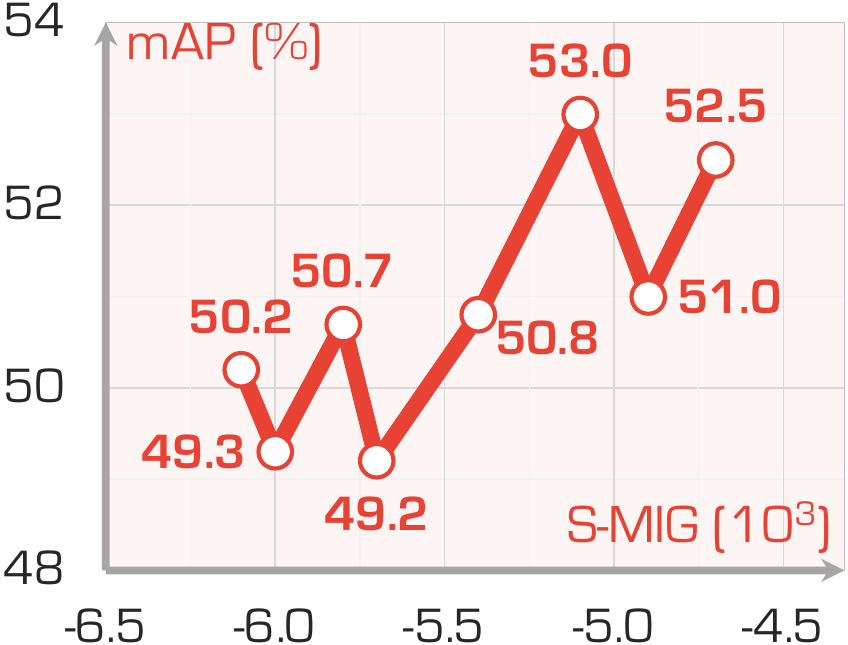}
    \caption{BEVFusion (S-MIG)}
\end{subfigure}
\hfill
\begin{subfigure}[b]{0.24\textwidth}
    \centering
    \includegraphics[width=\textwidth]{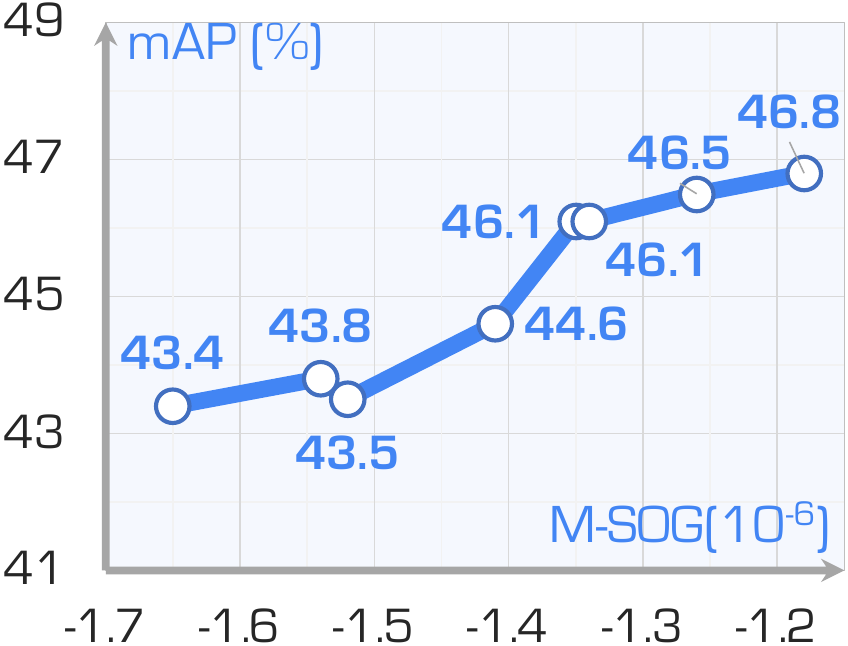}
    \caption{PointPillars (M-SOG)}
\end{subfigure}
\hfill
\begin{subfigure}[b]{0.24\textwidth}
    \centering
    \includegraphics[width=\textwidth]{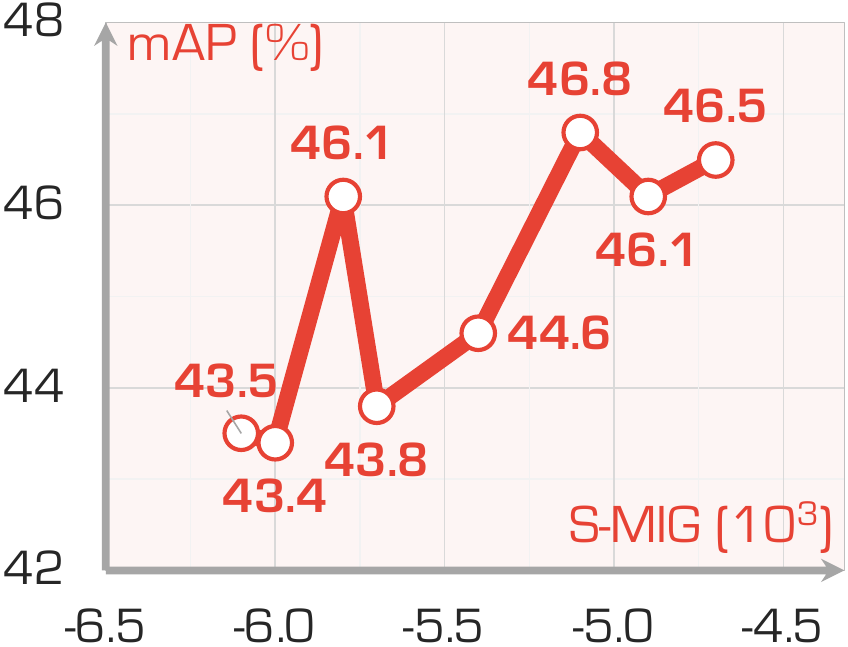}
    \caption{PointPillars (S-MIG)}
\end{subfigure}
\caption{Comparisons of M-SOG with S-MIG \cite{hu2022investigating} using BEVFusion-L \cite{liu2023bevfusion} and PointPillars \cite{lang2019pointpillars}.}
\label{fig:comparisons}
\end{figure}

\noindent\textbf{Intuitive Interpretation.} We observe several intuitive reasons why refined sensor placement is beneficial in our experiments.
\textcolor{place3d_blue}{\textbf{1})} LiDAR heights. Increasing the average height of the 4 LiDARs improves performance, likely due to an expanded field of view.
\textcolor{place3d_blue}{\textbf{2})} Variation in heights. A greater difference in LiDAR heights enhances perception, as varied height distributions capture richer features from the sides of objects.
\textcolor{place3d_blue}{\textbf{3})} Uniform distribution. The pyramid placement performs better in segmentation, as a more spread-out and uniform distribution of 4 LiDARs captures a detailed surface.

\begin{figure}[t]
    \centering
    \begin{subfigure}[b]{0.296\textwidth}
        \centering
        \includegraphics[width=\textwidth]{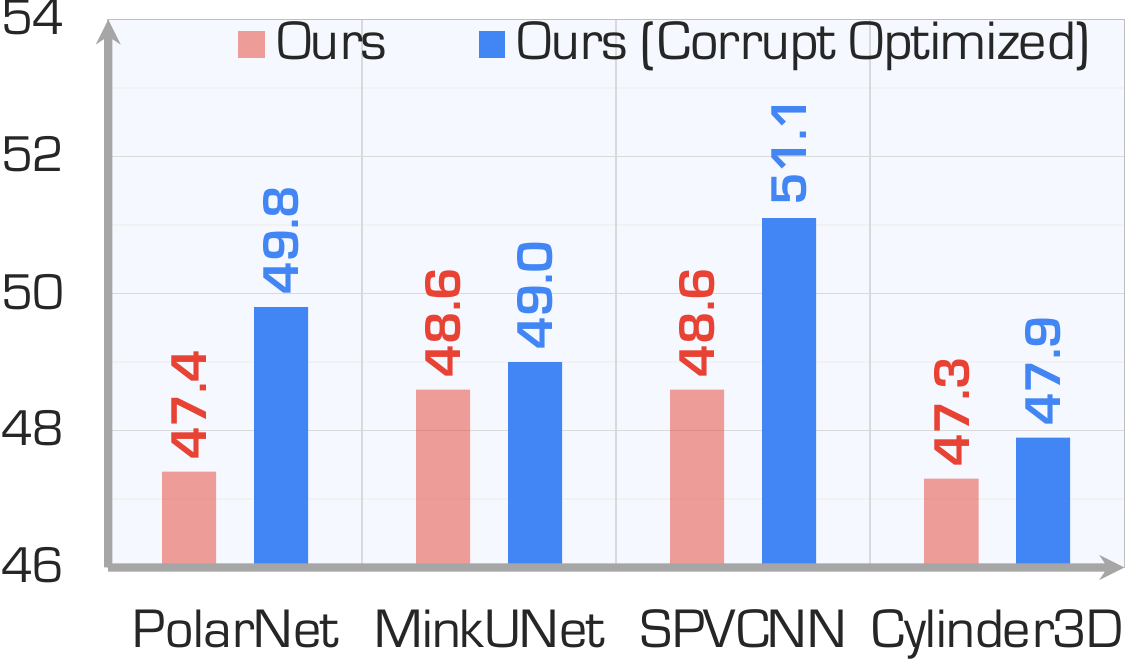}
        \caption{Against Corrupt}
        \label{fig:corruption_optimized}
    \end{subfigure}
    \hfill
    \begin{subfigure}[b]{0.336\textwidth}
        \centering
        \includegraphics[width=\textwidth]{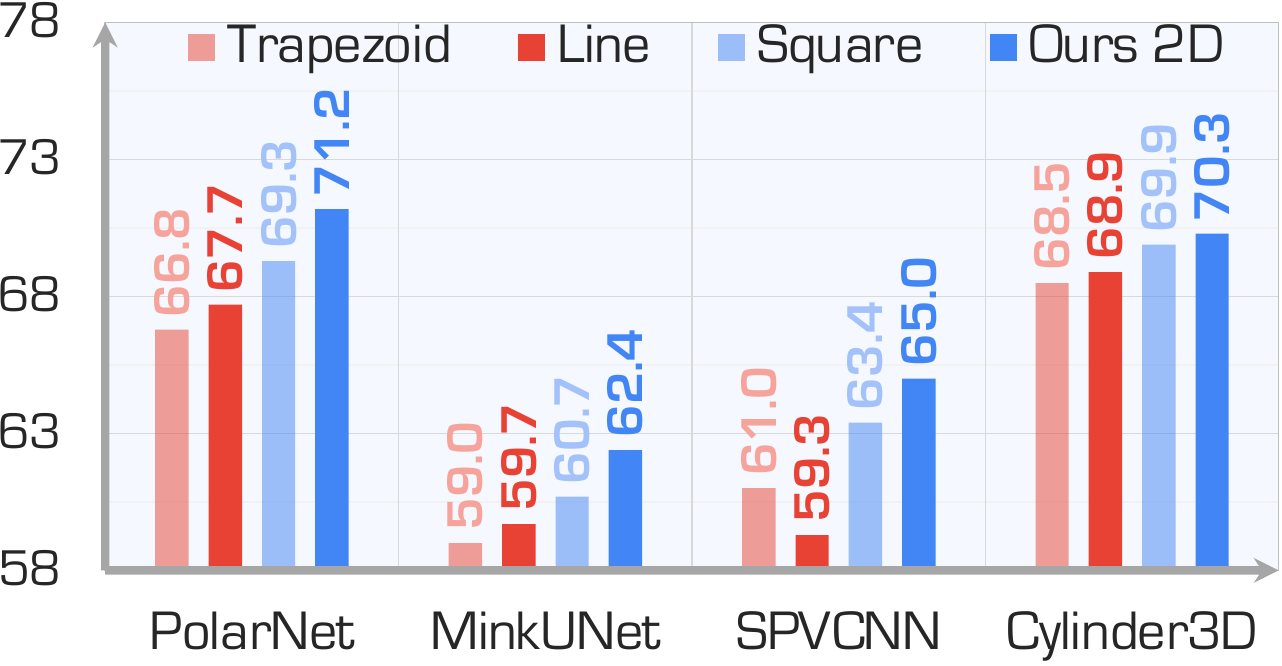}
        \caption{2D Placements (Clean)}
        \label{fig:2d_clean}
    \end{subfigure}
    \hfill
    \begin{subfigure}[b]{0.336\textwidth}
        \centering
        \includegraphics[width=\textwidth]{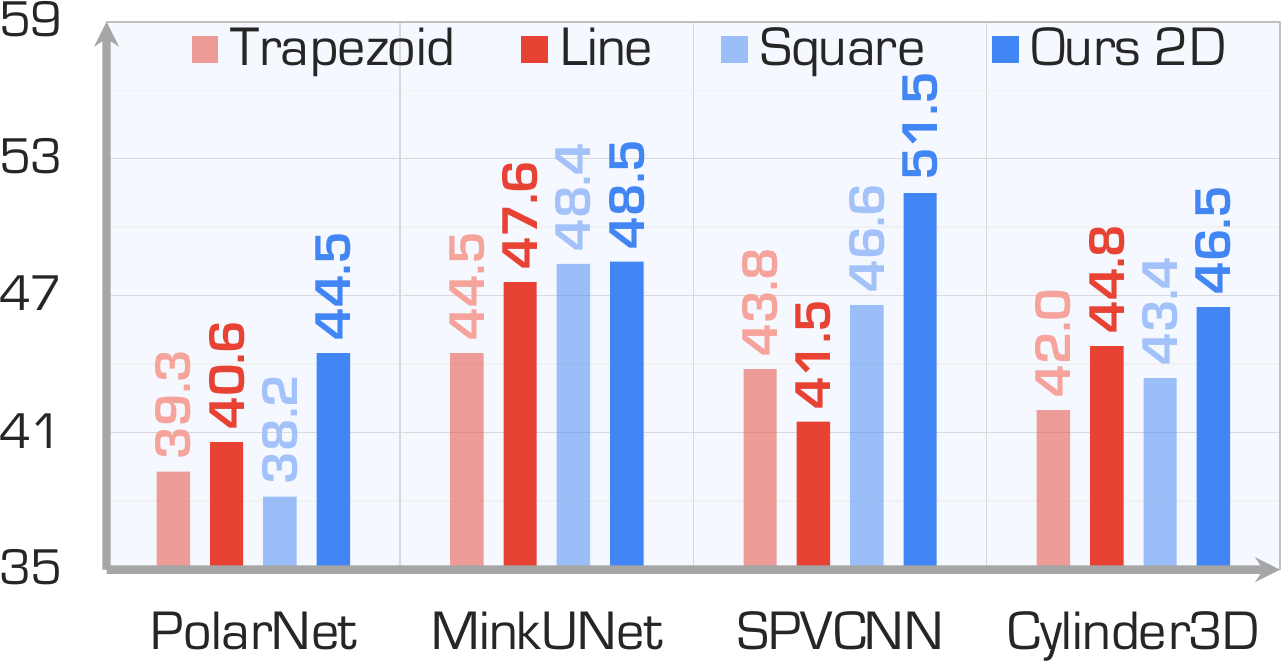}
        \caption{2D Placements (Corrupt)}
        \label{fig:2d_corruption}
    \end{subfigure}
    \label{fig:combined}
    \caption{Ablation results (mIoU) of placement strategies on segmentation models \cite{zhou2020polarNet,choy2019minkowski,tang2020searching,zhu2021cylindrical}.}
\end{figure}

\subsection{Ablation Study}
In this section, we further analyze the interplay between our proposed optimization strategy and perception performance to address two questions: \textcolor{place3d_blue}{\textbf{1})} \textit{How does our optimization strategy inform LiDAR placements to improve robustness against various forms of corruption?} \textcolor{place3d_blue}{\textbf{2})} \textit{How can our optimization strategy enhance perception performance in scenarios with limited placement options?}

\noindent\textbf{Optimizing Against Corruptions.} 
In the previous subsection, we deploy the placement optimized on the clean point clouds directly on corrupted ones to assess the robustness. To further showcase the capability of our method against corruptions, we compute the M-SOG scores utilizing the P-SOG derived from the corrupted semantic point clouds under adverse conditions and perform the optimization process, as shown in Figure~\ref{fig:corruption_optimized}. Quantitatively compared with \textit{Ours} in Table~\ref{tab:robo3d_seg}, the configurations derived from optimization on adverse data outperform the configurations generated solely on clean data. These results indicate that customizable optimization tailored for adverse settings is an effective approach to enable robust 3D perception.

\noindent\textbf{Optimizing 2D Placements.} Placing LiDARs at different heights on the roof might affect automotive aesthetics and aerodynamics. Therefore, we investigate the effect of our optimization algorithm in the presence of constraints, by limiting the LiDARs to the same height and considering only 2D placements. We fix the height of all LiDARs to find the optimal placement on the horizontal plane of the vehicle roof. We use the \textit{Line}, \textit{Square}, and \textit{Trapezoid} placements in Figure~\ref{fig:placement} for comparison. Experimental results in Figures~\ref{fig:2d_clean} and~\ref{fig:2d_corruption} show that our optimized LiDAR placements outperform baseline placements at the same LiDAR height on both clean and corruption settings.

\noindent\textbf{Influence of LiDAR Roll Angles.} Fine-tuning the orientation angles of the LiDARs on autonomous vehicles is an effective strategy to minimize blind spots and broaden the perception range. For 3D segmentation, \textit{Pyramid-roll} slightly outperforms \textit{Pyramid}, whereas \textit{Line-roll} falls short of \textit{Line}, which is highly aligned with scores of M-SOG in Table~\ref{tab:m_sog}. For 3D detection tasks, the situation is reversed, but still consistent with the M-SOG results. This suggests that adjusting the LiDAR's angle can have varying impacts on the performance of 3D object detection and LiDAR semantic segmentation.

\section{Conclusion}
\label{sec:conclusion}

In this work, we presented \textcolor{place3d_blue}{\textbf{Place3D}}, a comprehensive and systematic LiDAR placement evaluation and optimization framework. We introduced the Surrogate Metric of Semantic Occupancy Grids (M-SOG) as a novel metric to assess the impact of various LiDAR placements. Building on this metric, we culminated an optimization approach for LiDAR configuration that significantly enhances LiDAR semantic segmentation and 3D object detection performance. We validate the effectiveness of our optimization approach through a multi-condition multi-LiDAR point cloud dataset and establish a comprehensive benchmark for evaluating both baseline and our optimized LiDAR placements on detection and segmentation in diverse conditions. Our optimized placements demonstrate superior robustness and perception capabilities, outperforming all baseline configurations. 
By shedding light on refining the robustness of LiDAR placements for both tasks under diverse driving conditions, we anticipate that our work will pave the way for further advancements in this field of research.

\clearpage

\section*{Appendix}
\begin{itemize}

    \item \textcolor{place3d_blue}{\textbf{A. \nameref{sec:place3d}}} \dotfill \pageref{sec:place3d}
    \begin{itemize}
        \item \textcolor{place3d_blue}{A.1 \nameref{subsec:stats}} \dotfill \pageref{subsec:stats}
        \item \textcolor{place3d_blue}{A.2 \nameref{subsec:data_collect}} \dotfill \pageref{subsec:data_collect}
        \item \textcolor{place3d_blue}{A.3 \nameref{subsec:label_map}} \dotfill \pageref{subsec:label_map}
        \item \textcolor{place3d_blue}{A.4 \nameref{subsec:adverse_condition}} \dotfill \pageref{subsec:adverse_condition}
         \item \textcolor{place3d_blue}{A.5 \nameref{subsec:license}} \dotfill \pageref{subsec:license}
    \end{itemize}
    
    \item \textcolor{place3d_blue}{\textbf{B. \nameref{sec:supp_implement}}} \dotfill \pageref{sec:supp_implement}
    \begin{itemize}
        \item \textcolor{place3d_blue}{B.1 \nameref{subsec:lidar_config}} \dotfill \pageref{subsec:lidar_config}
        \item \textcolor{place3d_blue}{B.2 \nameref{subsec:corruption_gen}} \dotfill \pageref{subsec:corruption_gen}
        \item \textcolor{place3d_blue}{B.3 \nameref{subsec:hyperparam}} \dotfill \pageref{subsec:hyperparam}
    \end{itemize}
    
    \item \textcolor{place3d_blue}{\textbf{C. \nameref{sec:supp_quantitative}}} \dotfill \pageref{sec:supp_quantitative}
    \begin{itemize}
        \item \textcolor{place3d_blue}{C.1 \nameref{subsec:seg_quant}} \dotfill \pageref{subsec:seg_quant}
        \item \textcolor{place3d_blue}{C.2 \nameref{subsec:det_quant}} \dotfill \pageref{subsec:det_quant} 
        \item \textcolor{place3d_blue}{C.3 \nameref{subsec:comparison}} \dotfill \pageref{subsec:comparison} 
    \end{itemize}
    
    \item \textcolor{place3d_blue}{\textbf{D. \nameref{sec:supp_qualitative}}} \dotfill \pageref{sec:supp_qualitative}
    \begin{itemize}
        \item \textcolor{place3d_blue}{D.1 \nameref{subsec:seg_qual}} \dotfill \pageref{subsec:seg_qual}
        \item \textcolor{place3d_blue}{D.2 \nameref{subsec:det_qual}} \dotfill \pageref{subsec:det_qual}
    \end{itemize}

    \item \textcolor{place3d_blue}{\textbf{E. \nameref{sec:proof}}} \dotfill \pageref{sec:proof}
    \begin{itemize}
        \item \textcolor{place3d_blue}{E.1 \nameref{subsec:proof_thm}} \dotfill \pageref{subsec:proof_thm}
        \item \textcolor{place3d_blue}{E.2 \nameref{subsec:proof_rmk}} \dotfill \pageref{subsec:proof_rmk}
    \end{itemize}

    \item \textcolor{place3d_blue}{\textbf{F. \nameref{sec:discussion}}} \dotfill \pageref{sec:discussion}
    \begin{itemize}
        \item \textcolor{place3d_blue}{F.1 \nameref{subsec:limitation}} \dotfill \pageref{subsec:limitation}
        \item \textcolor{place3d_blue}{F.2 \nameref{subsec:potential_societal_impact}} \dotfill \pageref{subsec:potential_societal_impact}
    \end{itemize}
    
    \item \textcolor{place3d_blue}{\textbf{G. \nameref{sec:supp_public_resources}}} \dotfill \pageref{sec:supp_public_resources}

\end{itemize}

\appendix
\renewcommand{\appendixname}{Appendix~\Alph{section}}

\section{The \textcolor{place3d_blue}{\textbf{Place3D}} Dataset}
\label{sec:place3d}

In this section, we present additional information on the statistics, features, and implementation details of the proposed \textcolor{place3d_blue}{\textbf{Place3D}} dataset.

\subsection{Statistics}
\label{subsec:stats}
Our dataset consists of a total of eleven LiDAR placements, in which seven baselines are inspired by existing self-driving configurations from autonomous vehicle companies, and four LiDAR placements are obtained by optimization. Each LiDAR placement contains four LiDAR sensors. For each LiDAR configuration, the sub-dataset consists of 13,600 frames of samples, comprising 11,200 samples for training and 2,400 samples for validation, following the split ratio used in nuScenes \cite{nuScenes}. We combined every 40 frames of samples into one scene, with a time interval of 0.5 seconds between each frame sample.

\subsection{Data Collection}
\label{subsec:data_collect}
In this section, we elaborate on more details of the simulator setup and LiDAR setup procedures used in collecting the Place3D dataset.

\begin{figure}[t]
    \centering
    \begin{subfigure}[b]{0.21\textwidth}
        \centering
        \includegraphics[width=\textwidth]{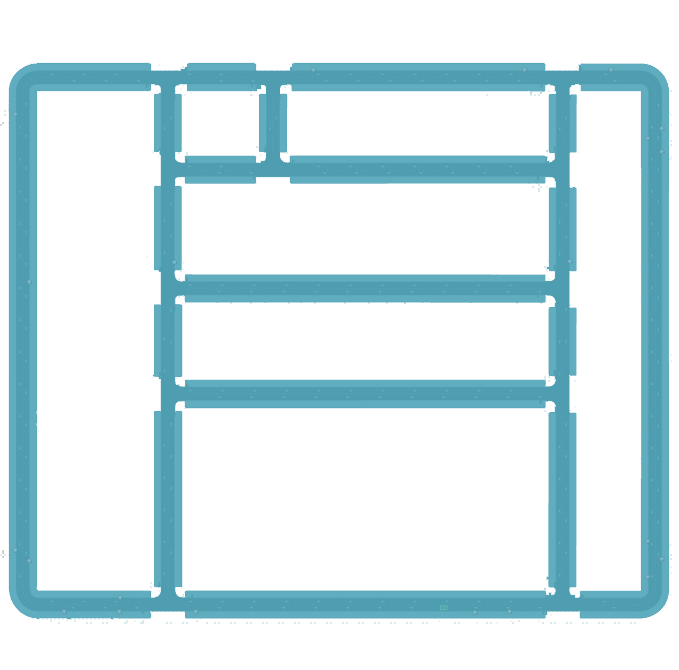}
        \caption{Town 1}
        \label{fig:Town01}
    \end{subfigure}
    \hspace{0.1cm}
    \begin{subfigure}[b]{0.21\textwidth}
        \centering
        \includegraphics[width=\textwidth]{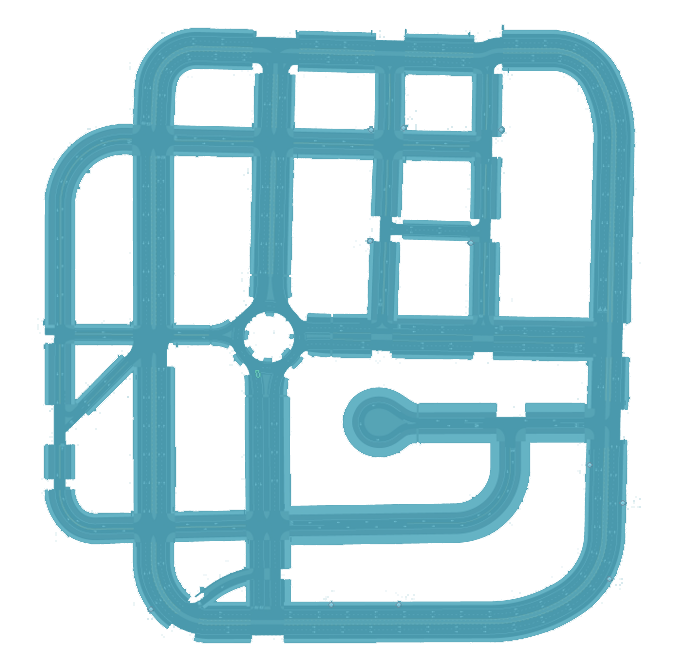}
        \caption{Town 3}
        \label{fig:Town03}
    \end{subfigure}
    \hspace{0.1cm}
    \begin{subfigure}[b]{0.21\textwidth}
        \centering
        \includegraphics[width=\textwidth]{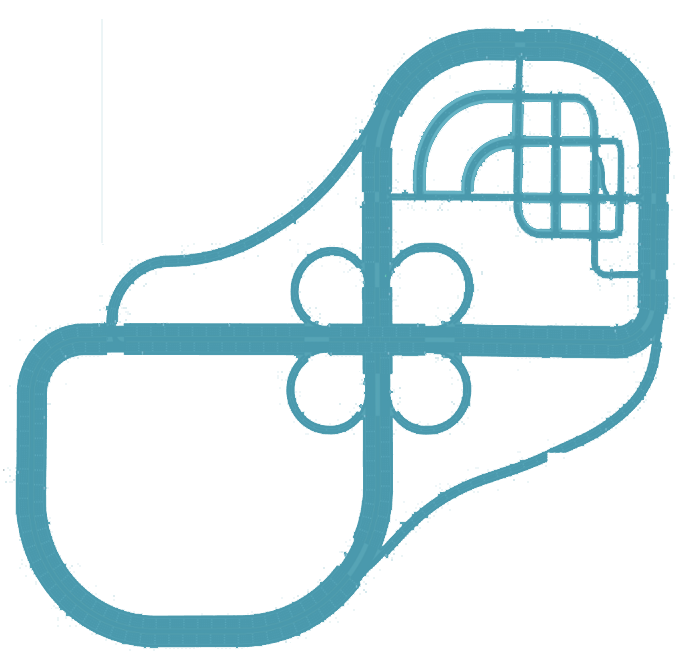}
        \caption{Town 4}
        \label{fig:Town04}
    \end{subfigure}
    \hspace{0.1cm}
    \begin{subfigure}[b]{0.21\textwidth}
        \centering
        \includegraphics[width=\textwidth]{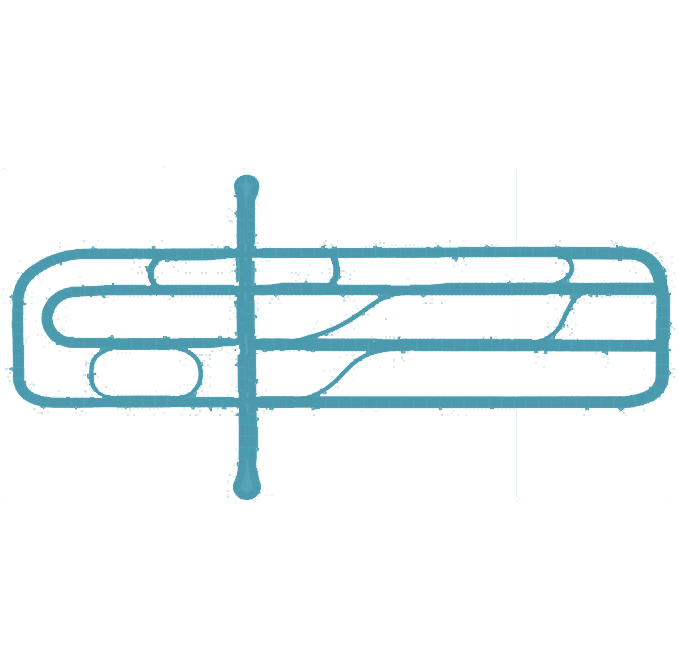}
        \caption{Town 6}
        \label{fig:Town06}
    \end{subfigure}
    \caption{The maps used to collect the Place3D data in CARLA v0.9.10.}
    \label{fig:maps}
\end{figure}

\begin{table}[!t]
\caption{Comparisons between the established benchmark and popular driving perception datasets with support for both the LiDAR semantic segmentation and 3D object detection tasks. 
}
\label{tab:lidar_datasets}
\centering
\resizebox{\textwidth}{!}{
\begin{tabular}{r|p{2.235cm}<{\centering}p{2.235cm}<{\centering}p{2.235cm}<{\centering}p{2.235cm}<{\centering}}

\toprule
\textbf{}   & \textbf{nuScenes} & \textbf{Waymo} & \textbf{SemKITTI} & \textcolor{place3d_blue}{\textbf{Place3D}}
\\
\midrule\midrule
Detection Classes & $10$ & $2$ & $3$ & $3$
\\
Segmentation Classes & $16$ & $23$ & $19$ & $21$
\\\midrule
\# of 3D Boxes & $12$M & $1.4$M & $80$K & $71$K
\\
Points Per Frame & $34$K & $177$K & $120$K & $16$K
\\
\# of LiDAR Channels & $1 \times 32$ & $1 \times 64$ & $1 \times 64$ &  $4 \times 16$
\\
Vertical FOV & $[-30.0, 10.0]$ & $[-17.6, 2.4]$ & $[-24.8, 2.0]$ & $[-24.8, 2.0]$
\\\midrule
Placement Strategy & \textcolor{place3d_red}{Single} & \textcolor{place3d_red}{Single} & \textcolor{place3d_red}{Single} & \textcolor{place3d_blue}{Multiple}
\\
Adverse Conditions & \textcolor{place3d_red}{No} & \textcolor{place3d_red}{No} & \textcolor{place3d_red}{No}  & \textcolor{place3d_blue}{Yes}
\\
\bottomrule
\end{tabular}
}
\end{table}
\begin{table}[!t]
\caption{The attributes of the semantic LiDAR sensors used for acquiring the data.}
\label{tab:lidar_stat}
\centering
\resizebox{\textwidth}{!}{
\begin{tabular}{p{3cm}<{\centering}|p{3cm}<{\centering}|p{3cm}<{\centering}|p{3cm}<{\centering}}

\toprule
\textbf{Attribute}   & \textbf{Value} & \textbf{Attribute}   & \textbf{Value}
\\
\midrule\midrule
Channels            &    $16$             & Upper FOV           &    $2.0$ degrees           
\\
Range               &    $100.0$ meters      & Lower FOV           &    $-24.8$ degrees           
\\
Points Per Second   &    5,000 $\times$ channels & Horizontal FOV      &    $360.0$ degrees 
\\
Rotation Frequency  &    $20.0$ Hz           & Sensor Tick & $0.5$ second
\\
\bottomrule
\end{tabular}
}
\end{table}
\begin{figure}[t]
    \centering
    \begin{subfigure}[b]{0.24\textwidth}
        \centering
        \includegraphics[width=\textwidth]{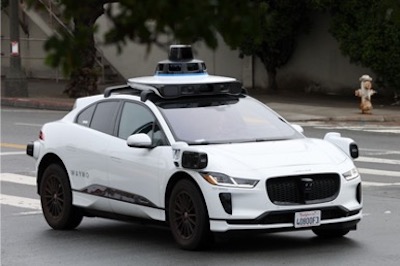}
        \caption{Waymo}
        \label{fig:waymo}
    \end{subfigure}
    \hfill
    \begin{subfigure}[b]{0.24\textwidth}
        \centering
        \includegraphics[width=\textwidth]{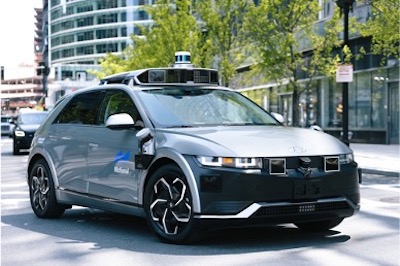}
        \caption{Motional}
        \label{fig:motional}
    \end{subfigure}
    \hfill
    \begin{subfigure}[b]{0.24\textwidth}
        \centering
        \includegraphics[width=\textwidth]{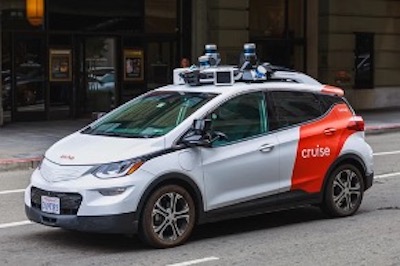}
        \caption{Cruise}
        \label{fig:cruise}
    \end{subfigure}
    \hfill
    \begin{subfigure}[b]{0.24\textwidth}
        \centering
        \includegraphics[width=\textwidth]{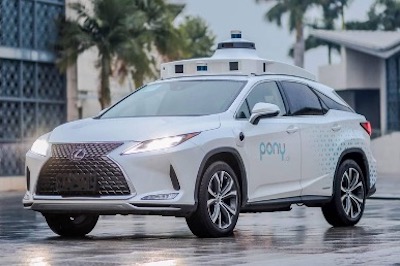}
        \caption{Pony.ai}
        \label{fig:pony}
    \end{subfigure}
    \hfill
    \begin{subfigure}[b]{0.24\textwidth}
        \centering
        \includegraphics[width=\textwidth]{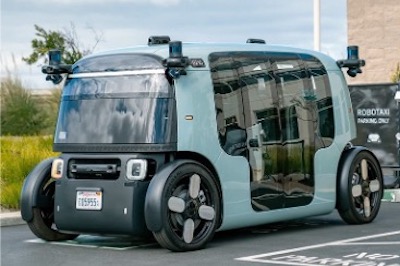}
        \caption{Zoox}
        \label{fig:zoox}
    \end{subfigure}
    \hfill
    \begin{subfigure}[b]{0.24\textwidth}
        \centering
        \includegraphics[width=\textwidth]{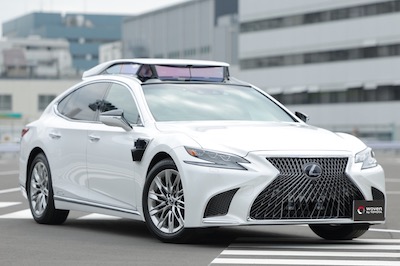}
        \caption{Toyota}
        \label{fig:toyota}
    \end{subfigure}
    \hfill
    \begin{subfigure}[b]{0.24\textwidth}
        \centering
        \includegraphics[width=\textwidth]{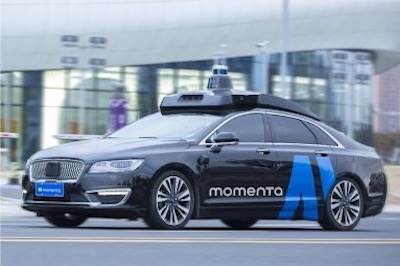}
        \caption{Momenta}
        \label{fig:momenta}
    \end{subfigure}
    \hfill
    \begin{subfigure}[b]{0.24\textwidth}
        \centering
        \includegraphics[width=\textwidth]{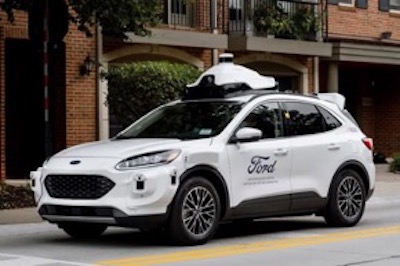}
        \caption{Ford}
        \label{fig:ford}
    \end{subfigure}
    \caption{A diverse spectrum of existing multi-LiDAR placements employed by major autonomous vehicle companies \cite{waymo2022web,motional2022web,cruise2022web,ponyai2022web,zoox2023web,toyota2024web,momenta2023web,ford2020web}. Images adopted from original websites.}
    \label{fig:spectrum}
\end{figure}

\subsubsection{Simulator Setup}
We choose four maps (Towns 1, 3, 4, and 6, \textit{cf.} Figure~\ref{fig:maps}) in CARLA v0.9.10 to collect point cloud data and generate ground truth information. For each map, we manually set six ego-vehicle routes to cover all roads with no roads overlapped. The frequency of the simulation is set to 20 Hz. 

\subsubsection{LiDAR Setup}
LiDAR point cloud data is collected every 0.5 simulator seconds utilizing the \textit{Semantic LIDAR sensor} in CARLA. The attributes of \textit{Semantic LIDAR sensor} are listed in Table~\ref{tab:lidar_stat}. Notably, we utilize LiDARs with relatively low resolution to increase the challenge in object detection and semantic segmentation, leading to lower scores than those on the nuScenes \cite{nuScenes} leaderboard. This is based on the observation that LiDAR point clouds become sparse when detecting distant objects and the use of multiple LiDARs is specifically to enhance the perception of sparse point clouds at long distances in practical applications. Our experiments with low-resolution LiDARs allow for a more evident assessment of the impact of LiDAR placements on the perception of sparse point clouds.

\begin{table}[!th]
\centering
\caption{Summary of the semantic categories and their semantic coverage in the Place3D dataset.
}
\label{tab:classes}
\resizebox{0.95\textwidth}{!}{
\begin{tabular}{r|c|p{9.5cm}<{\raggedright}}
\toprule
\textbf{Class} & ~\textbf{ID}~ & \textbf{Description}
\\\midrule\midrule

Building \textcolor{building}{$\blacksquare$} & 1 & Houses, skyscrapers, etc., and the manmade elements attached to them, such as scaffolding, awning, and ladders.
\\\midrule

Fence \textcolor{fence}{$\blacksquare$} & 2 & Barriers, railing, and other upright structures that are made by wood or wire assemblies and enclose an area of ground.
\\\midrule

Other \textcolor{other}{$\blacksquare$} & 3 & Everything that does not belong to any other category.
\\\midrule

Pedestrian \textcolor{pedestrian}{$\blacksquare$} & 4 & Humans that walk, ride, or drive any kind of vehicle or mobility system, such as bicycles, scooters, skateboards, wheelchairs, etc..
\\\midrule

Pole \textcolor{pole}{$\blacksquare$} & 5 & Small mainly vertically oriented poles, such as sign poles and traffic light poles.
\\\midrule

Road Line \textcolor{road_line}{$\blacksquare$} & 6 & Markings on the road.
\\\midrule

Road \textcolor{road}{$\blacksquare$} & 7 & Part of ground on which vehicles usually drive.
\\\midrule

Sidewalk \textcolor{sidewalk}{$\blacksquare$} & 8 & Part of ground designated for pedestrians or cyclists, which is delimited from the road by some obstacle, such as curbs or poles.
\\\midrule

Vegetation \textcolor{vegetation}{$\blacksquare$} & 9 & Trees, hedges, and all other types of vertical vegetation.
\\\midrule

Vehicle \textcolor{vehicle}{$\blacksquare$} & 10 & Cars, vans, trucks, motorcycles, bikes, buses, and trains.
\\\midrule

Wall \textcolor{wall}{$\blacksquare$} & 11 & Individual standing walls that are not part of a building.
\\\midrule

Traffic Sign \textcolor{traffic_sign}{$\blacksquare$} & 12 & Signs installed by the city authority and are usually for traffic regulation.
\\\midrule

Ground \textcolor{ground}{$\blacksquare$} & 13 & Horizontal ground-level structures that do not match any other category.
\\\midrule

Bridge \textcolor{bridge}{$\blacksquare$} & 14 & The structure of the bridge.
\\\midrule

Rail Track \textcolor{rail_track}{$\blacksquare$} & 15 & All types of rail tracks that are not driveable by cars.
\\\midrule

Guard Rail \textcolor{guard_rail}{$\blacksquare$} & 16 & All types of guard rails and crash barriers.
\\\midrule

Traffic Light \textcolor{traffic_light}{$\blacksquare$} & 17 & Traffic light boxes without their poles.
\\\midrule

Static \textcolor{static}{$\blacksquare$} & 18 & Elements in the scene and props that are immovable, such fire hydrants, fixed benches, fountains, bus stops, etc.
\\\midrule

Dynamic \textcolor{dynamic}{$\blacksquare$} & 19 & Elements whose position is susceptible to change over time, such as movable trash bins, buggies, bags, wheelchairs, animals, etc.
\\\midrule

Water \textcolor{water}{$\blacksquare$} & 20 & Horizontal water surfaces, such as lakes, sea, and rivers.
\\\midrule

Terrain \textcolor{terrain}{$\blacksquare$} & 21 & Grass, ground-level vegetation, soil, and sand.
\\\bottomrule
\end{tabular}
}
\end{table}

\subsection{Label Mappings}
\label{subsec:label_map}
In this section, we provide more details for the class definitions of the LiDAR semantic segmentation and 3D object detection tasks.

\subsubsection{LiDAR Semantic Segmentation}
There are a total of \textbf{21} semantic categories in our LiDAR semantic segmentation dataset, which are $^1$\textit{Building}, $^2$\textit{Fence}, $^3$\textit{Other}, $^4$\textit{Pedestrian}, $^5$\textit{Pole}, $^6$\textit{Road Line}, $^7$\textit{Road}, $^8$\textit{Sidewalk}, $^9$\textit{Vegetation}, $^{10}$\textit{Vehicle}, $^{11}$\textit{Wall}, $^{12}$\textit{Traffic Sign}, $^{13}$\textit{Ground}, $^{14}$\textit{Bridge}, $^{15}$\textit{Rail Track}, $^{16}$\textit{Guard Rail}, $^{17}$\textit{Traffic Light}, $^{18}$\textit{Static}, $^{19}$\textit{Dynamic}, $^{20}$\textit{Water}, and $^{21}$\textit{Terrain}. Table~\ref{tab:classes} presents the detailed definition of each semantic class in Place3D. Additionally, we include a \textit{Unlabeled} tag to denote elements that have not been categorized. 

\subsubsection{3D Object Detection}
We set \textbf{3} types of common objects in the traffic scene for the 3D object detection tasks, \textit{i.e.}, $^1$\textit{Car}, $^2$\textit{Bicyclist}, and $^3$\textit{Pedestrian}, following the same configuration as KITTI \cite{geiger2012kitti}.

\subsection{Adverse Conditions}
\label{subsec:adverse_condition}

As shown in Figure~\ref{fig:example_adverse_1} and Figure~\ref{fig:example_adverse_2}, the adverse conditions in the Place3D dataset can be categorized into three distinct groups: 
\begin{itemize}
    \item \textbf{Severe weather conditions}, including ``fog'', which causes back-scattering and attenuation of LiDAR points due to water particles in the air; ``snow'', where snow particles intersect with laser beams, affecting the beam's reflection and causing potential occlusions; and ``wet ground'', where laser pulses lose energy upon hitting wet surfaces, leading to attenuated echoes. These are commonly occurring corruptions in real driving conditions.
    \item \textbf{External disturbances} like ``motion blur'', which results from vehicle movement that causes blurring in the data, especially on bumpy surfaces or during rapid turns.
    \item \textbf{Internal sensor failure}, such as ``crosstalk'', where the light impulses from multiple sensors interfere with each other, creating noisy points; as well as ``incomplete echo'', where dark-colored objects result in incomplete LiDAR readings.
\end{itemize}

\subsection{License}
\label{subsec:license}
The \textcolor{place3d_blue}{\textbf{Place3D}} dataset is released under the \textit{CC BY-NC-SA 4.0} license\footnote{\url{https://creativecommons.org/licenses/by-nc-sa/4.0/legalcode.en}.}.

\section{Additional Implementation Detail}
\label{sec:supp_implement}

\begin{figure}[t]
    \centering
    \begin{subfigure}[b]{0.48\textwidth}
        \centering
        \includegraphics[width=\textwidth]{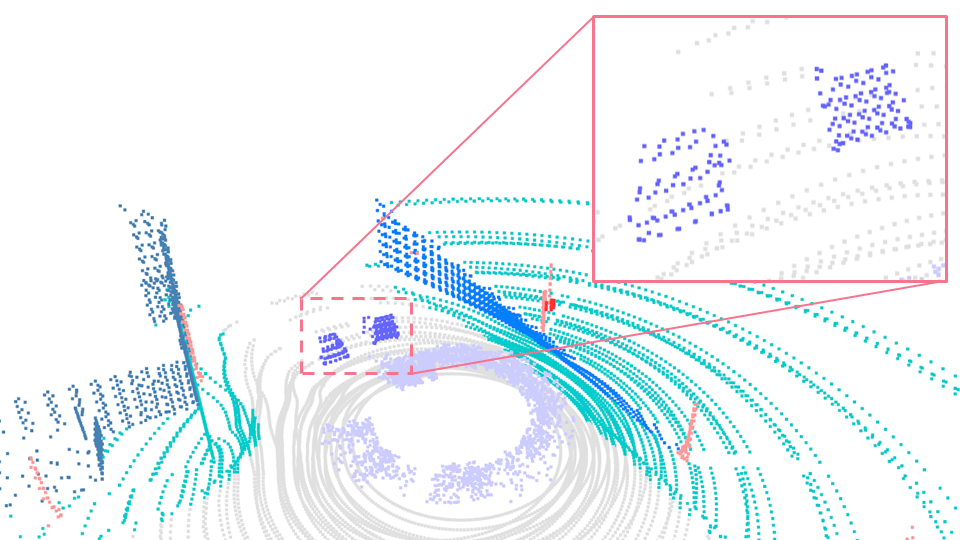}
        \caption{fog}
        \label{fig:fog}
    \end{subfigure}
    \begin{subfigure}[b]{0.48\textwidth}
        \centering
        \includegraphics[width=\textwidth]{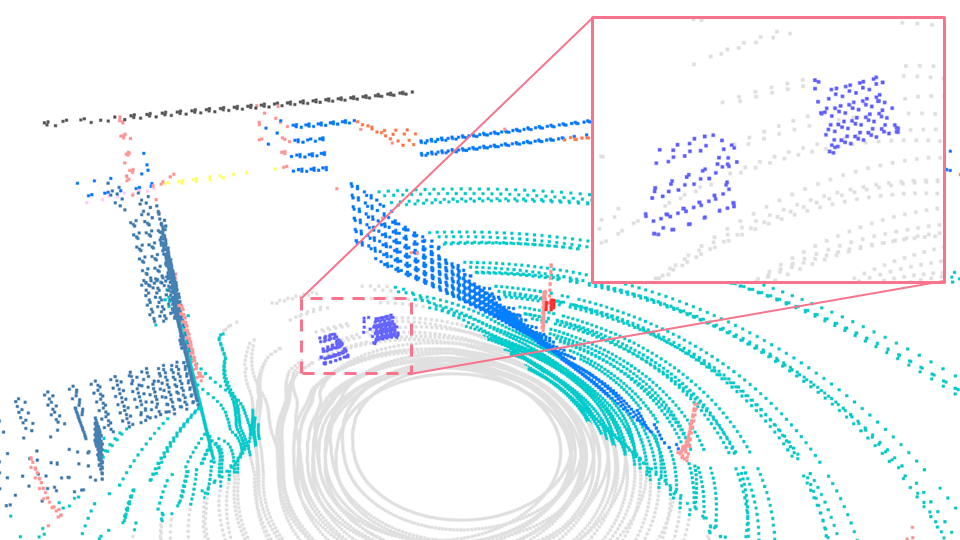}
        \caption{wet ground}
        \label{fig:wet_ground}
    \end{subfigure}
    \begin{subfigure}[b]{0.48\textwidth}
        \centering
        \includegraphics[width=\textwidth]{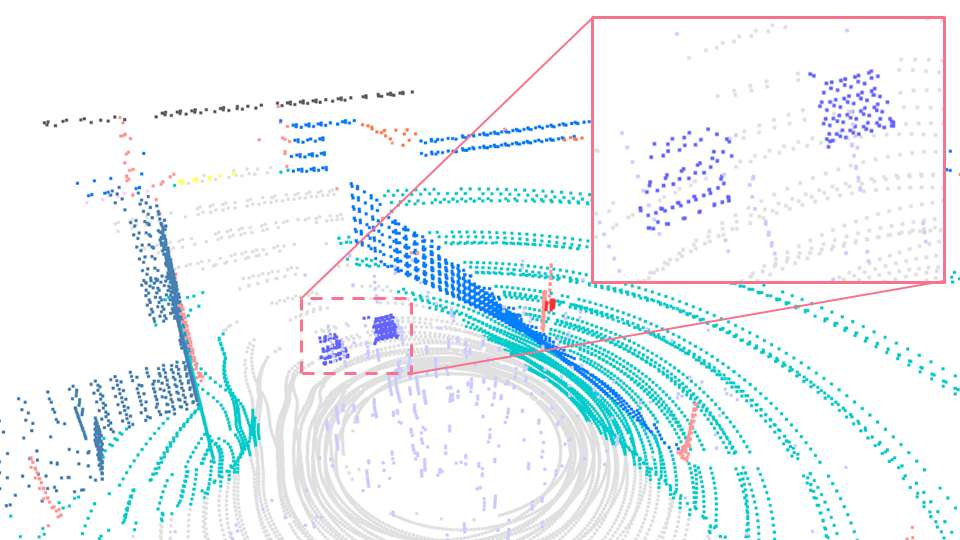}
        \caption{snow}
        \label{fig:snow}
    \end{subfigure}
    \begin{subfigure}[b]{0.48\textwidth}
        \centering
        \includegraphics[width=\textwidth]{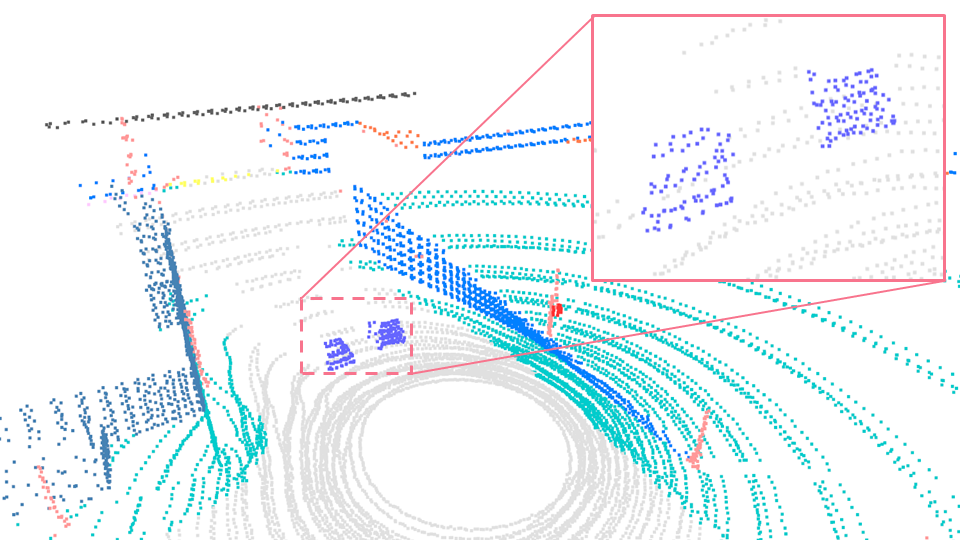}
        \caption{motion blur}
        \label{fig:motion_blur}
    \end{subfigure}
    \begin{subfigure}[b]{0.48\textwidth}
        \centering
        \includegraphics[width=\textwidth]{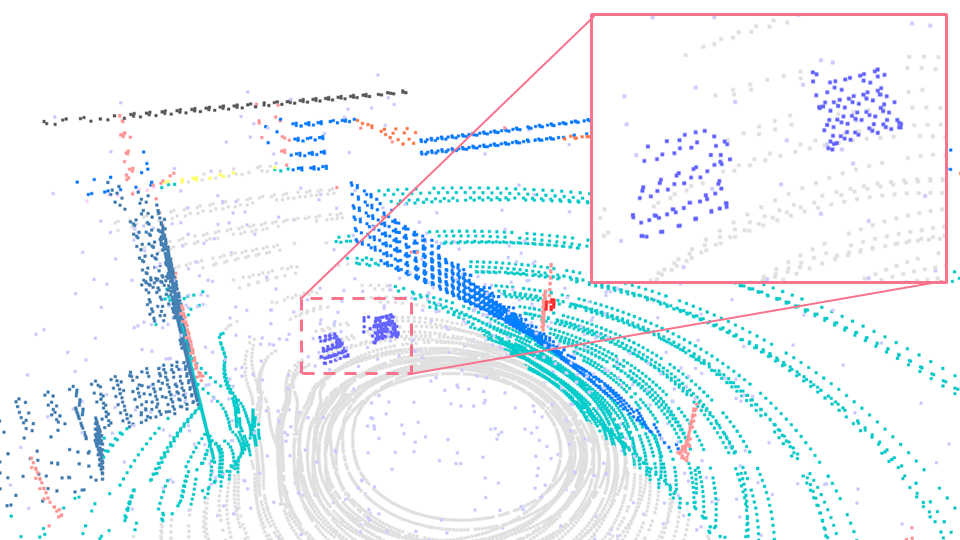}
        \caption{crosstalk}
        \label{fig:crosstalk}
    \end{subfigure}
    \begin{subfigure}[b]{0.48\textwidth}
        \centering
        \includegraphics[width=\textwidth]{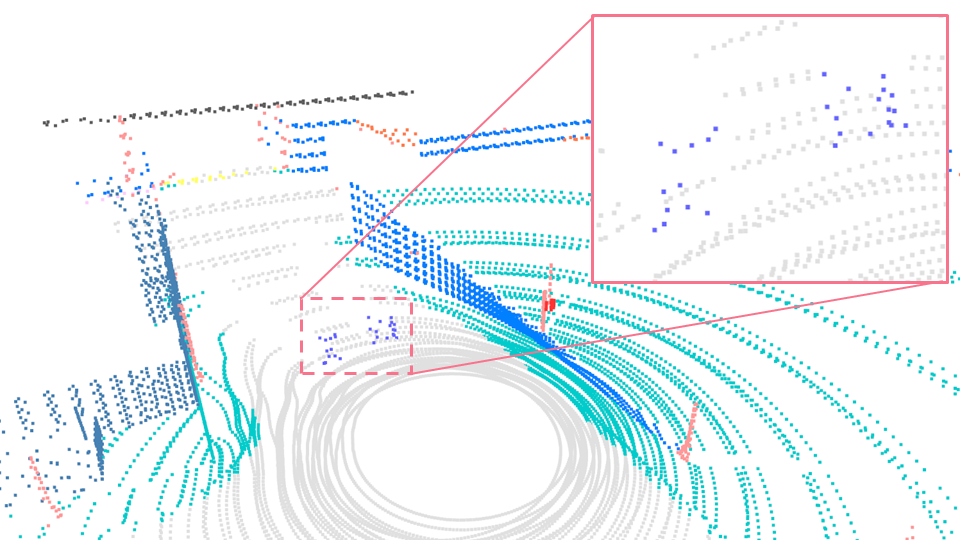}
        \caption{incomplete echo}
        \label{fig:incomplete_echo}
    \end{subfigure}
    \caption{Visual examples of LiDAR point clouds under adverse conditions in Place3D.}
    \label{fig:example_adverse_1}
\end{figure}

\begin{figure}[t]
    \centering
    \begin{subfigure}[b]{0.48\textwidth}
        \centering
        \includegraphics[width=\textwidth]{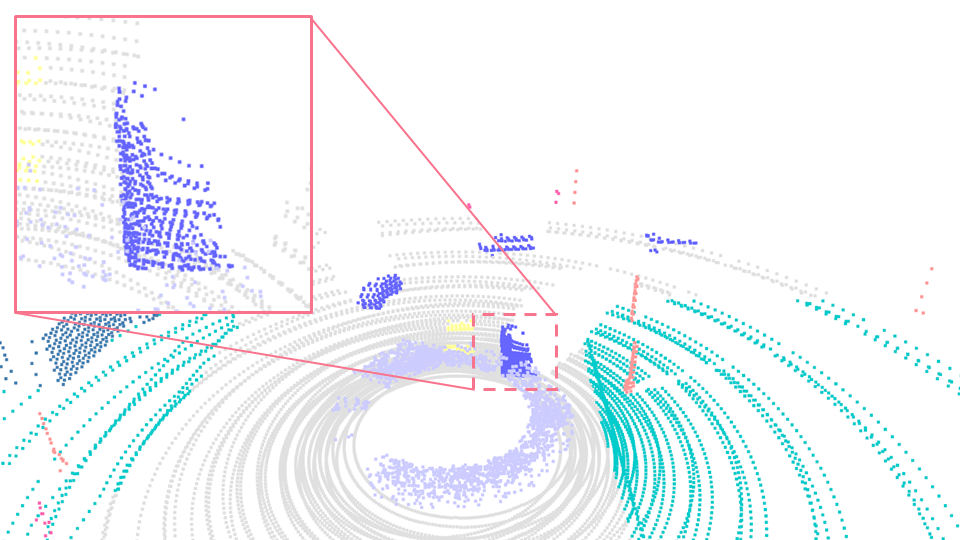}
        \caption{fog}
        \label{fig:fog2}
    \end{subfigure}
    \begin{subfigure}[b]{0.48\textwidth}
        \centering
        \includegraphics[width=\textwidth]{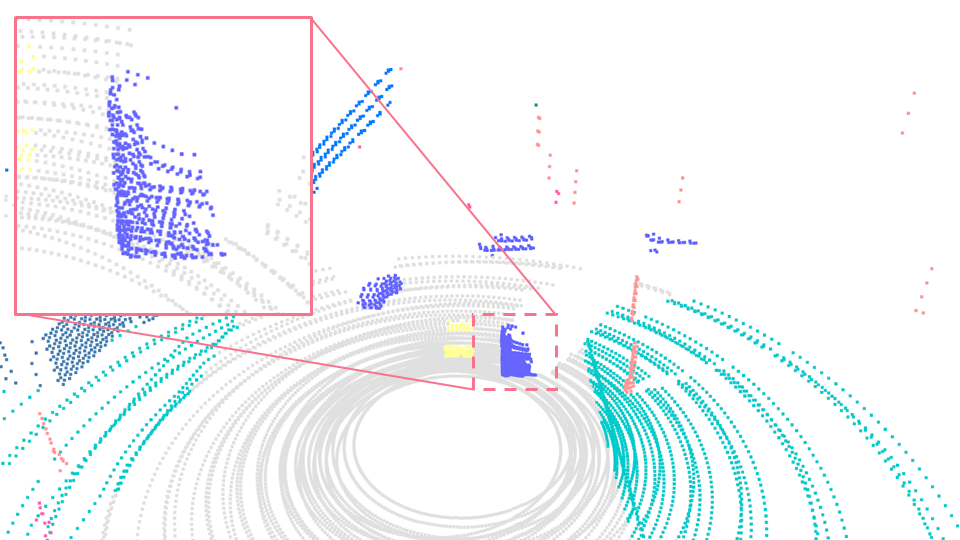}
        \caption{wet ground}
        \label{fig:wet_ground2}
    \end{subfigure}
    \begin{subfigure}[b]{0.48\textwidth}
        \centering
        \includegraphics[width=\textwidth]{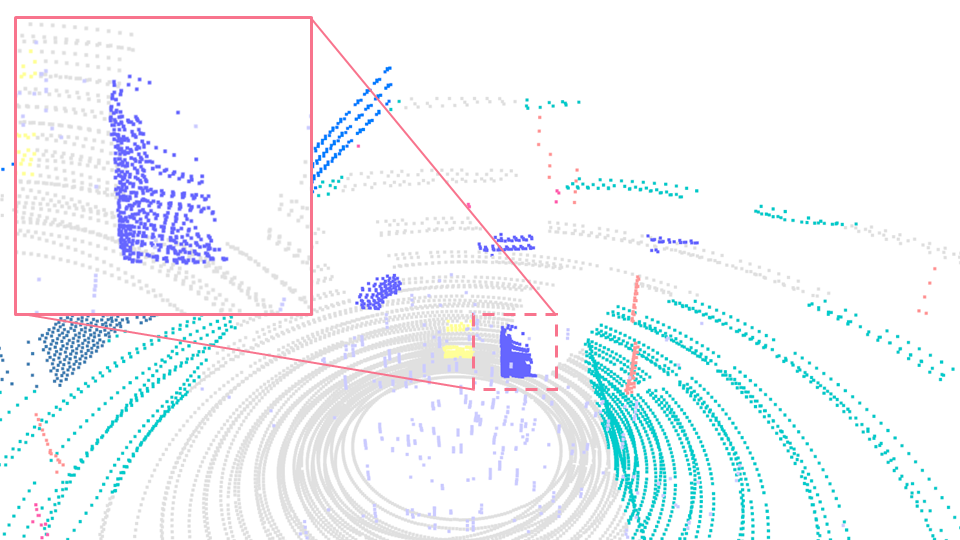}
        \caption{snow}
        \label{fig:snow2}
    \end{subfigure}
    \begin{subfigure}[b]{0.48\textwidth}
        \centering
        \includegraphics[width=\textwidth]{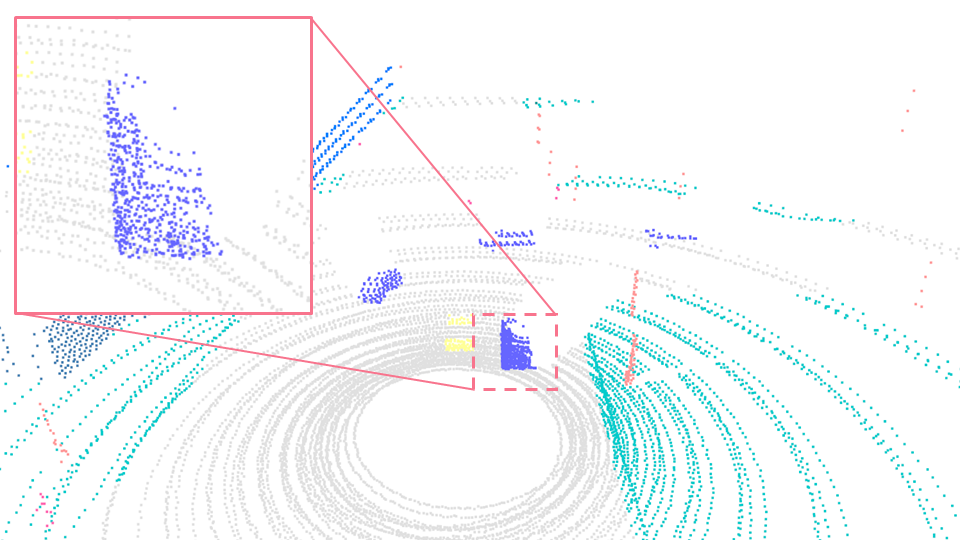}
        \caption{motion blur}
        \label{fig:motion_blur2}
    \end{subfigure}
    \begin{subfigure}[b]{0.48\textwidth}
        \centering
        \includegraphics[width=\textwidth]{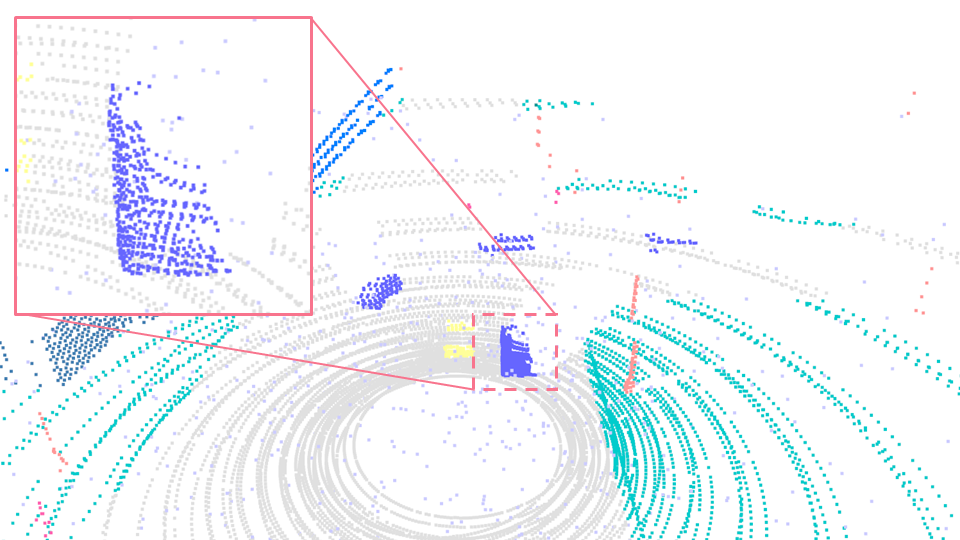}
        \caption{crosstalk}
        \label{fig:crosstalk2}
    \end{subfigure}
    \begin{subfigure}[b]{0.48\textwidth}
        \centering
        \includegraphics[width=\textwidth]{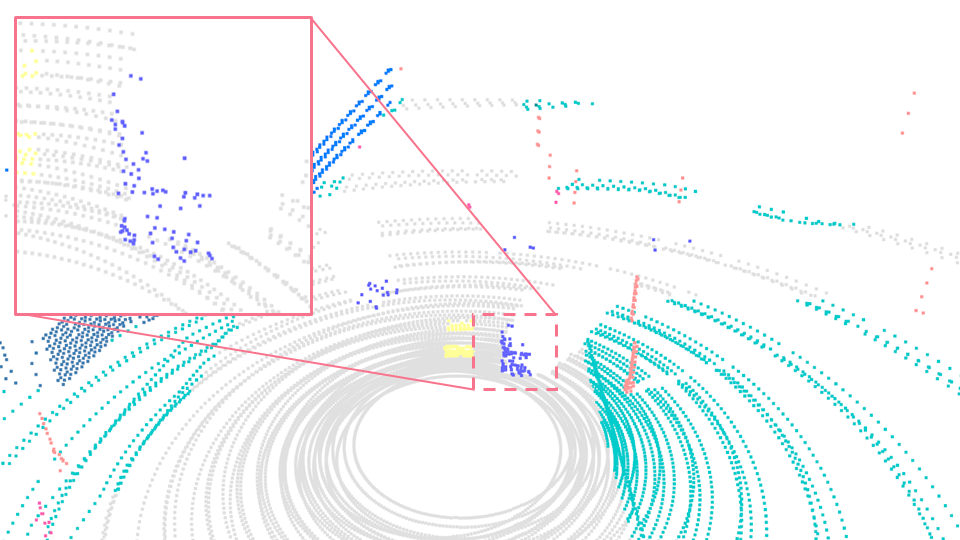}
        \caption{incomplete echo}
        \label{fig:incomplete_echo2}
    \end{subfigure}
    \caption{Visual examples of LiDAR point clouds under adverse conditions in Place3D.}
    \label{fig:example_adverse_2}
\end{figure}

In this section, we present additional implementation details to facilitate the reproduction of the data generation, placement optimization, and performance evaluations in this work.

\subsection{LiDAR Placement}
\label{subsec:lidar_config}
We adopt five commonly employed heuristic LiDAR placements, which have been adopted by several leading autonomous driving companies (see Figure~\ref{fig:spectrum}), as our baseline. We create another two baseline placements by adjusting the rolling angles of LiDAR sensors. We obtain two placements in optimization and two placements in the ablation study. All configurations are presented in Table~\ref{tab:placement}.

\begin{table}[!t]
\caption{The configuration coordinates of different LiDAR placement strategies in this work with respect to their ego-vehicle coordinate frames from the four LiDAR sensors.}
\label{tab:placement}
\resizebox{\textwidth}{!}{\begin{tabular}{r|p{0.635cm}<{\centering}p{0.635cm}<{\centering}p{0.635cm}<{\centering}p{0.635cm}<{\centering}|p{0.635cm}<{\centering}p{0.635cm}<{\centering}p{0.635cm}<{\centering}p{0.635cm}<{\centering}|p{0.635cm}<{\centering}p{0.635cm}<{\centering}p{0.635cm}<{\centering}p{0.635cm}<{\centering}|p{0.635cm}<{\centering}p{0.635cm}<{\centering}p{0.635cm}<{\centering}p{0.635cm}<{\centering}}

\toprule
\multirow{2}{*}{\textbf{Placement}} & \multicolumn{4}{c|}{\textbf{LiDAR \#1}} & \multicolumn{4}{c|}{\textbf{LiDAR \#2}} & \multicolumn{4}{c|}{\textbf{LiDAR \#3}} & \multicolumn{4}{c}{\textbf{LiDAR \#4}} \\
& $x$     & $y$     & $z$    & roll  & $x$     & $y$     & $z$    & roll  & $x$     & $y$     & $z$    & roll  & $x$     & $y$     & $z$    & roll \\
\midrule\midrule

Center         & 0.0   & 0.0   & 2.2  & 0.0   & 0.0   & 0.0   & 2.4  & 0.0   & 0.0   & 0.0   & 2.6  & 0.0   & 0.0   & 0.0   & 2.8  & 0.0  \\
Line           & 0.0   & -0.6  & 2.2  & 0.0   & 0.0   & -0.4  & 2.2  & 0.0   & 0.0   & 0.4   & 2.2  & 0.0   & 0.0   & 0.6   & 2.2  & 0.0  \\
Pyramid        & -0.2  & -0.6  & 2.2  & 0.0   & 0.4   & 0.0   & 2.4  & 0.0   & -0.2  & 0.0   & 2.6  & 0.0   & -0.2  & 0.6   & 2.2  & 0.0  \\
Square         & -0.5  & 0.5   & 2.2  & 0.0   & -0.5  & -0.5  & 2.2  & 0.0   & 0.5   & 0.5   & 2.2  & 0.0   & 0.5   & -0.5  & 2.2  & 0.0  \\
Trapezoid      & -0.4  & 0.2   & 2.2  & 0.0   & -0.4  & -0.2  & 2.2  & 0.0   & 0.2   & 0.5   & 2.2  & 0.0   & 0.2   & -0.5  & 2.2  & 0.0  \\
Line-roll      & 0.0   & -0.6  & 2.2  & -0.3  & 0.0   & -0.4  & 2.2  & 0.0   & 0.0   & 0.4   & 2.2  & 0.0   & 0.0   & 0.6   & 2.2  & -0.3 \\
Pyramid-roll   & -0.2  & -0.6  & 2.2  & -0.3  & 0.4   & 0.0   & 2.4  & 0.0   & -0.2  & 0.0   & 2.6  & 0.0   & -0.2  & 0.6   & 2.2  & -0.3 \\
\midrule
Ours-det       & 0.5   & 0.5   & 2.5  & -0.3  & -0.4  & 0.1   & 2.6  & -0.2  & 0.0   & 0.0   & 2.8  & 0.0   & -0.1  & -0.5  & 2.7  & 0.0  \\
Ours-seg       & 0.0   & 0.5   & 2.6  & -0.3  & 0.6   & 0.3   & 2.8  & 0.0   & 0.4   & 0.0   & 2.5  & 0.0   & 0.1   & -0.6  & 2.8  & 0.2  \\
Corruption & 0.4   & 0.5   & 2.6  & -0.3  & 0.5   & -0.4  & 2.7  & 0.0   & -0.4  & -0.3  & 2.7  & 0.1   & -0.3  & 0.5   & 2.7  & 0.0  \\
2D-plane       & 0.6   & 0.6   & 2.2  & 0.0   & 0.5   & -0.4  & 2.2  & 0.0   & -0.5  & -0.6  & 2.2  & 0.0   & -0.6  & 0.3   & 2.2  & 0.0  \\ 

\bottomrule
\end{tabular}
}
\end{table}

\subsection{Corruption Generation}
\label{subsec:corruption_gen}

In this work, we adopt the public implementations from Robo3D \cite{kong2023robo3D} to generate corrupted point clouds. We follow the default configuration of generating nuScenes-C to construct the adverse condition sets in Place3D. Specifically, for ``fog'' generation, the attenuation coefficient is randomly sampled from $[0, 0.005, 0.01, 0.02, 0.03, 0.06]$ and the back-scattering coefficient is set as $0.05$. For ``wet ground'' generation, the water height is set as $1.0$ millimeter. For ``snow'' generation, the value of snowfall rate parameter is set to $1.0$. For ``motion blur'' generation, the jittering noise level is set as $0.30$. For ``crosstalk'' generation, the disturb noise parameter is set to $0.07$. For ``incomplete echo'' generation, the attenuation parameter is set to $0.85$.

\subsection{Hyperparameters}
\label{subsec:hyperparam}
In this work, we build the LiDAR placement benchmark using the MMDetection3D codebase \cite{mmdet3d}. Unless otherwise specified, we follow the conventional setups in MMDetection3D when training and evaluating the LiDAR semantic segmentation and 3D object detection models.

The detailed training configurations of the four LiDAR semantic segmentation models, \textit{i.e.}, MinkUNet \cite{choy2019minkowski}, SPVCNN \cite{tang2020searching}, PolarNet \cite{zhou2020polarNet}, and Cylinder3D \cite{zhu2021cylindrical}, are presented in Table~\ref{tab:configs_seg}. The detailed training configurations of the four 3D object detection models, \textit{i.e.}, PointPillars \cite{lang2019pointpillars}, CenterPoint \cite{yin2021centerpoint}, BEVFusion-L \cite{liu2023bevfusion}, and FSTR \cite{zhang2023fully}, are presented in Table~\ref{tab:configs_det}.

All LiDAR semantic segmentation models are trained and tested on eight NVIDIA A100 SXM4 80GB GPUs. All 3D object detection models are trained and tested on four NVIDIA RTX 6000 Ada 48GB GPUs. The models are evaluated on the validation sets. We do not include any type of test time augmentation or model ensembling when evaluating the models.

\section{Additional Quantitative Result}
\label{sec:supp_quantitative}

In this section, we provide additional quantitative results to better support the findings and conclusions drawn in the main body of this paper.

\subsection{LiDAR Semantic Segmentation}
\label{subsec:seg_quant}

We provide the complete (\textit{i.e.}, the class-wise IoU scores) results for LiDAR semantic segmentation under different LiDAR placement strategies. 

We showcase the per-class LiDAR semantic segmentation results of MinkUNet \cite{choy2019minkowski}, SPVCNN \cite{tang2020searching}, PolarNet \cite{zhou2020polarNet}, and Cylinder3D \cite{zhu2021cylindrical} in Table~\ref{tab:seg_iou}. The performance of these methods is evaluated under different LiDAR placement strategies. Different LiDAR placement strategies demonstrated varying propensities towards particular classes. For instance, \textit{Pyramid} performs well for \textit{building}, \textit{guard rail}, and \textit{vegetation} compared with other placements, possibly due to an increased vertical field of view that captures these taller or layered structures more effectively. \textit{Ours} provides the generally best balance of performance across categories, with high scores in \textit{building}, \textit{road}, and \textit{vehicle} suggesting an effective all-around coverage for various object types. The complete benchmark results of LiDAR semantic segmentation under clean and adverse conditions are presented in Table~\ref{tab:supp_robo_seg}. We showcase the correlation between M-SOG and LiDAR semantic segmentation performance under clean and adverse conditions in Figure~\ref{fig:seg_clean} and \ref{fig:seg_corrupt}.

We showcase the complete results of the ablation study for semantic segmentation in Table~\ref{tab:supp_ablation_seg}. we first explore the performance of our optimization algorithm under placement constraints, specifically, standardize the elevation of LiDARs to analyze their optimal positioning on the vehicle's roof's horizontal plane. Our optimized placement \textit{2D plane} achieved better performance compared with \textit{line}, \textit{square}, and \textit{trapezoid} configurations. Further, we optimize the placement for adverse conditions. While the observation suggests a trade-off in clean data compared with the \textit{Ours}, the configuration optimized on corruption data (\textit{corruption optimized}) achieves a significantly higher performance in adverse conditions than both baselines in Table~\ref{tab:supp_robo_seg} and \textit{Ours}, indicating that custom optimization strategies, tailored for challenging conditions, represent an effective methodology to enhance robust perception capabilities in adverse environments.

\subsection{3D Object Detection}
\label{subsec:det_quant}

In this section, we present more experimental results for the 3D robustness evaluation of 3D object detection under adverse conditions. 
The performance of PointPillars \cite{lang2019pointpillars}, CenterPoint \cite{yin2021centerpoint}, BEVFusion-L \cite{liu2023bevfusion}, and FSTR \cite{zhang2023fully} is evaluated under different LiDAR placement strategies in Table~\ref{tab:supp_robo_det}. Across all placements and conditions, there is a trend where all methods suffer a drop in performance in adverse conditions compared to clean conditions, highlighting the challenge that adverse conditions pose to 3D object detection. While some placements, like Pyramid and Square, appear to maintain relatively high performance under both clean and adverse conditions, the \textit{Ours} placement has the highest average mAP under adverse conditions. In addition, certain placements exhibit relative strengths against specific types of corruption, which can inform the development of more resilient object detection systems for autonomous vehicles. For instance, \textit{Pyramid} and \textit{Pyramid-Roll}, appear to handle fog better than others, possibly due to a configuration that captures a more diverse set of angles which could mitigate the scattering effect of fog on LiDAR beams. We showcase the correlation between M-SOG and 3D Object Detection performance under clean and adverse conditions in Figure~\ref{fig:det_clean} and \ref{fig:det_corrupt}.

\begin{table}[!t]
\caption{The training and optimization configurations of the four LiDAR semantic segmentation models \cite{choy2019minkowski,tang2020searching,zhou2020polarNet,zhu2021cylindrical} used in our Place3D benchmark.}
\label{tab:configs_seg}
\centering
\resizebox{\textwidth}{!}{\begin{tabular}{r|p{2.5cm}<{\centering}|p{2.5cm}<{\centering}|p{2.5cm}<{\centering}|p{2.5cm}<{\centering}}

\toprule
\textbf{Hyperparameter} & MinkUNet & SPVCNN & PolarNet & Cylinder3D
\\
\midrule\midrule
Batch Size           &      8 $\times$ b2      &      8 $\times$ b2     &      8 $\times$ b2     &      8 $\times$ b2                    \\
Epochs               &      50      &      50     &      50     &      50                    \\
Optimizer            &     AdamW    &    AdamW    &    AdamW    &     AdamW                  \\
Learning Rate        &     8.0$e-$3    &    8.0$e-$3   &    8.0$e-$3   &    8.0$e-$3                  \\
Weight Decay         &     0.01     &    0.01     &     0.01    &    0.01                    \\
Epsilon & 1.0$e-$6 & 1.0$e-$6 & 1.0$e-$6 & 1.0$e-$6
\\

\bottomrule
\end{tabular}
}
\end{table}
\begin{table}[!t]
\caption{The training and optimization configurations of the four 3D object detection models \cite{lang2019pointpillars,yin2021centerpoint,liu2023bevfusion,zhang2023fully} used in our Place3D benchmark.}
\label{tab:configs_det}
\centering
\resizebox{\textwidth}{!}{\begin{tabular}{r|p{2.5cm}<{\centering}|p{2.5cm}<{\centering}|p{2.5cm}<{\centering}|p{2.5cm}<{\centering}}

\toprule
\textbf{Hyperparameter} & PointPillars & CenterPoint & BEVFusion-L & FSTR
\\
\midrule\midrule
Batch Size           &      4 $\times$ b4       &      4 $\times$ b4     &      4 $\times$ b4     &      4 $\times$ b4                    \\
Epochs               &      24      &      20     &      20     &      20                    \\
Optimizer            &     AdamW    &    AdamW    &    AdamW    &     AdamW                  \\
Learning Rate        &     1.0$e-$3    &    1.0$e-$4   &    1.0$e-$4   &    1.0$e-$4                  \\
Weight Decay         &     0.01     &    0.01     &     0.01    &    0.01                    \\
Epsilon & 1.0$e-$6 & 1.0$e-$6 & 1.0$e-$6 & 1.0$e-$6
\\

\bottomrule
\end{tabular}
}
\end{table}

\subsection{Compare to Existing Approaches}
\label{subsec:comparison}


The effectiveness of sensor placement metrics is determined by the linear correlation between these metrics and actual performance. As shown in Table \ref{tab:comparison}, Jiang's method \cite{jiang2023optimizing} is applied to roadside placements, while comparable on-vehicle methods, S-MIG \cite{hu2022investigating} and S-MS \cite{li2023influence}, follow the same process for LiDARs. All these methods use 3D bounding boxes as priors to understand the 3D scene distribution. However, this approach leads to significant deviations from the actual physical boundaries of objects and fails to account for occlusion relationships between objects and the environment in LiDAR applications. Consequently, they cannot accurately describe the scene or effectively optimize LiDAR placements. Moreover, these methods are limited to evaluating LiDAR placements for detection tasks since they solely rely on 3D object distribution information.

Our method addresses these limitations by introducing semantic occupancy information as a prior for the evaluation metric. This allows for more accurate characterization of boundaries in the 3D scene and effectively addresses occlusion issues between objects and the environment. Additionally, our method leverages semantic distribution information under diverse conditions, thereby enhancing the reliability of perception in adverse weather conditions and during sensor failures.
\section{Additional Qualitative Result}
\label{sec:supp_qualitative}

In this section, we provide additional qualitative examples to help visually compare different conditions presented in this work.

\begin{table}[t]
\centering
\caption{Comparisons of the key differences among existing sensor placement approaches.}
\label{tab:comparison}
\resizebox{\textwidth}{!}{\begin{tabular}{r|p{1.8cm}<{\centering}|p{1.8cm}<{\centering}|p{2.5cm}<{\centering}|p{2.2cm}<{\centering}|p{1.8cm}<{\centering}|p{1.8cm}<{\centering}}
\toprule
\textbf{Method}   &  Venue  & Deployment & Sensor & Prior Info & Task & Optimizing 
\\\midrule\midrule
S-MIG \cite{hu2022investigating}   &  CVPR 2022 & Vehicle   & LiDAR          &  3D bbox      & Det        &  \ding{55}
\\
Jiang's \cite{jiang2023optimizing} &  ICCV 2023 & Road Side & LiDAR          &  3D bbox      & Det        &  \checkmark
\\
S-MS \cite{li2023influence}        &  ICRA 2024 & Vehicle   & LiDAR + Camera &  3D bbox      & Det        &  \ding{55}  
\\\midrule
\textcolor{place3d_blue}{\textbf{Place3D}} & Ours & Vehicle   & LiDAR          &  Semantic Occ & Seg + Det  &  \checkmark  
\\\bottomrule
\end{tabular}
}
\end{table}

\subsection{Clean Condition}
\label{subsec:seg_qual}

We showcase the qualitative results of the MinkUNet \cite{choy2019minkowski} model using our optimized LiDAR placement. As shown in Figure~\ref{fig:supp_qualitative_1}, the depiction across the first four rows demonstrates a commendable performance of our LiDAR placement strategy during clean conditions. This is evidenced by the predominance of gray in the error maps, which indicates a high rate of correct predictions. Also, the fifth row presents a notable exception, illustrating a failure case. This particular scene exhibits a heightened complexity with a diverse array of objects and potential occlusions that challenge predictive accuracy. The qualitative results underscore the importance of continuous refinement of LiDAR placement to achieve better perception across a comprehensive spectrum of scenarios.

\subsection{Adverse Conditions}
\label{subsec:det_qual}

In Figure~\ref{fig:supp_qualitative_2}, we showcase the qualitative results of MinkUNet \cite{choy2019minkowski} under different adverse conditions utilizing our optimized LiDAR placement. The corruptions from top to bottom rows are ``fog'', ``wet ground'', ``motion blur'', ``crosstalk'', and ``incomplete echo''. As can be seen, the LiDAR semantic segmentation model encounters extra difficulties when predicting under such scenarios. Erroneous predictions tend to appear in regions contaminated by different types of noises, such as airborne particles, disturbed laser reflections, and jittering scatters. It becomes apparent that enhancing the model's robustness under these adverse conditions is crucial for the practical usage of driving perception systems. To achieve better robustness, we design suitable LiDAR placements that can mitigate the degradation caused by corruptions. As discussed in Table~\ref{tab:supp_robo_seg}, Table~\ref{tab:supp_robo_det}, and Table~\ref{tab:supp_ablation_seg}, our placements can largely enhance the robustness of various models.

\begin{figure}[t]
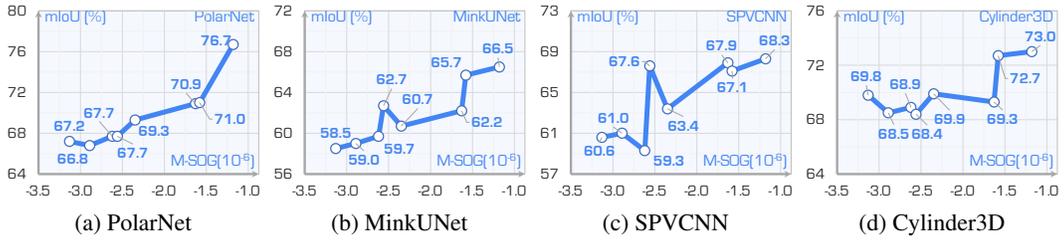

    \begin{subfigure}[b]{0.24\textwidth}
    \centering
        \includegraphics[width=\textwidth]{figures_rebuttal/1_PolarNet.pdf}
        \caption{PolarNet}
    \end{subfigure}
    \hfill
    \begin{subfigure}[b]{0.24\textwidth}
    \centering
        \includegraphics[width=\textwidth]{figures_rebuttal/2_MinkUNet.pdf}
        \caption{MinkUNet}
    \end{subfigure}
    \hfill
    \begin{subfigure}[b]{0.24\textwidth}
    \centering
        \includegraphics[width=\textwidth]{figures_rebuttal/3_SPVCNN.pdf}
        \caption{SPVCNN}
    \end{subfigure}
    \hfill
    \begin{subfigure}[b]{0.24\textwidth}
    \centering
        \includegraphics[width=\textwidth]{figures_rebuttal/4_Cylinder3D.pdf}
        \caption{Cylinder3D}
    \end{subfigure}
    \caption{The correlation between M-SOG and LiDAR semantic segmentation \cite{choy2019minkowski,tang2020searching,zhou2020polarNet,zhu2021cylindrical} models performance in the \textit{clean} condition.}
    \label{fig:seg_clean}
\end{figure}

\begin{figure}[t]
    \begin{subfigure}[b]{0.24\textwidth}
    \centering
        \includegraphics[width=\textwidth]{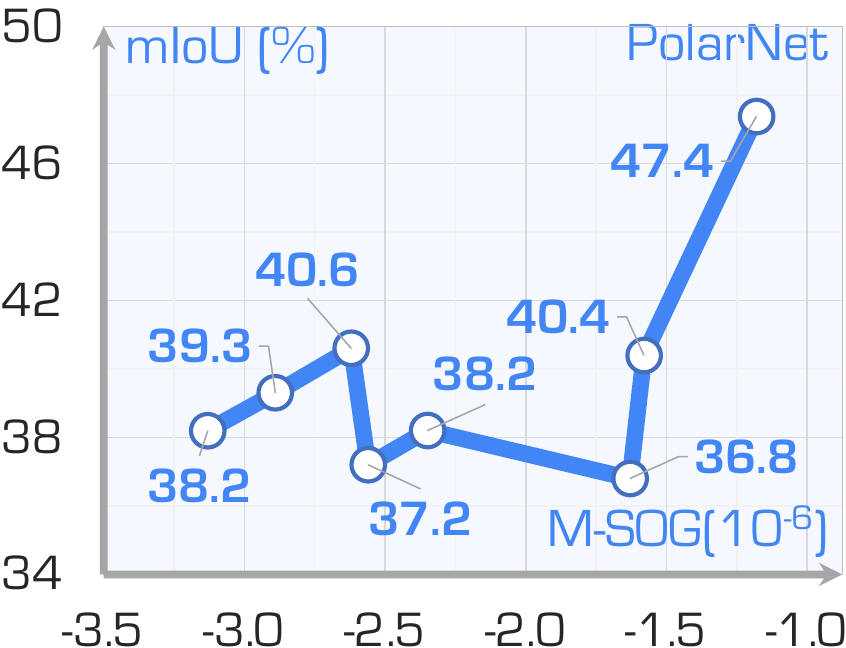}
        \caption{PolarNet}
    \end{subfigure}
    \hfill
    \begin{subfigure}[b]{0.24\textwidth}
    \centering
        \includegraphics[width=\textwidth]{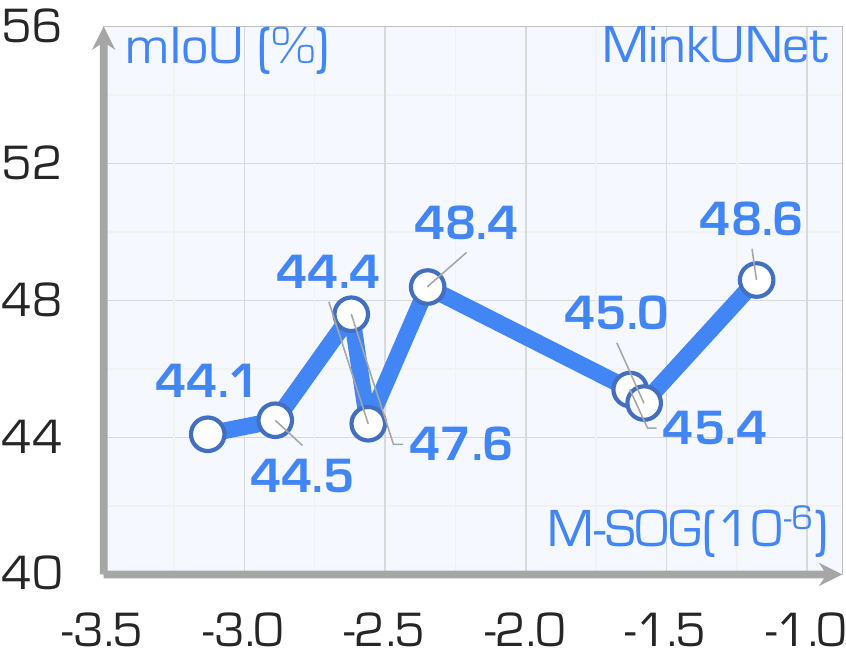}
        \caption{MinkUNet}
    \end{subfigure}
    \hfill
    \begin{subfigure}[b]{0.24\textwidth}
    \centering
        \includegraphics[width=\textwidth]{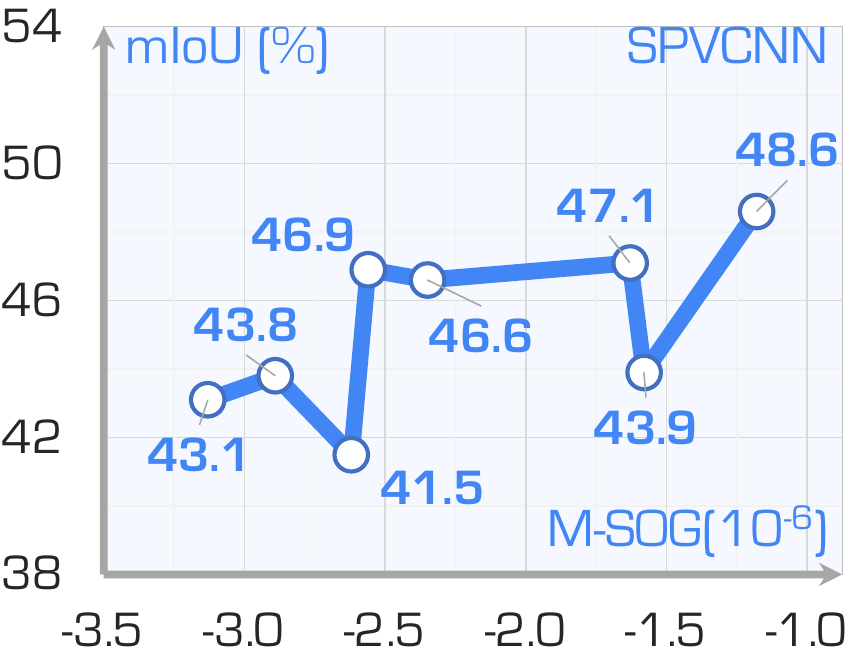}
        \caption{SPVCNN}
    \end{subfigure}
    \hfill
    \begin{subfigure}[b]{0.24\textwidth}
    \centering
        \includegraphics[width=\textwidth]{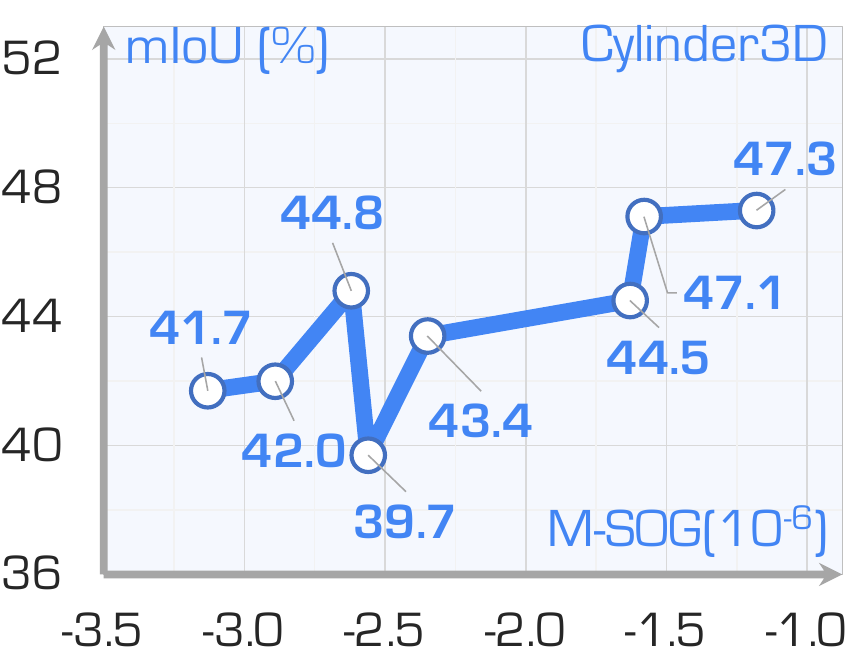}
        \caption{Cylinder3D}
    \end{subfigure}
    \caption{The correlation between M-SOG and LiDAR semantic segmentation \cite{choy2019minkowski,tang2020searching,zhou2020polarNet,zhu2021cylindrical} models performance in the \textit{adverse} condition.}
    \label{fig:seg_corrupt}
\end{figure}

\begin{figure}[t]
    \begin{subfigure}[b]{0.24\textwidth}
    \centering
        \includegraphics[width=\textwidth]{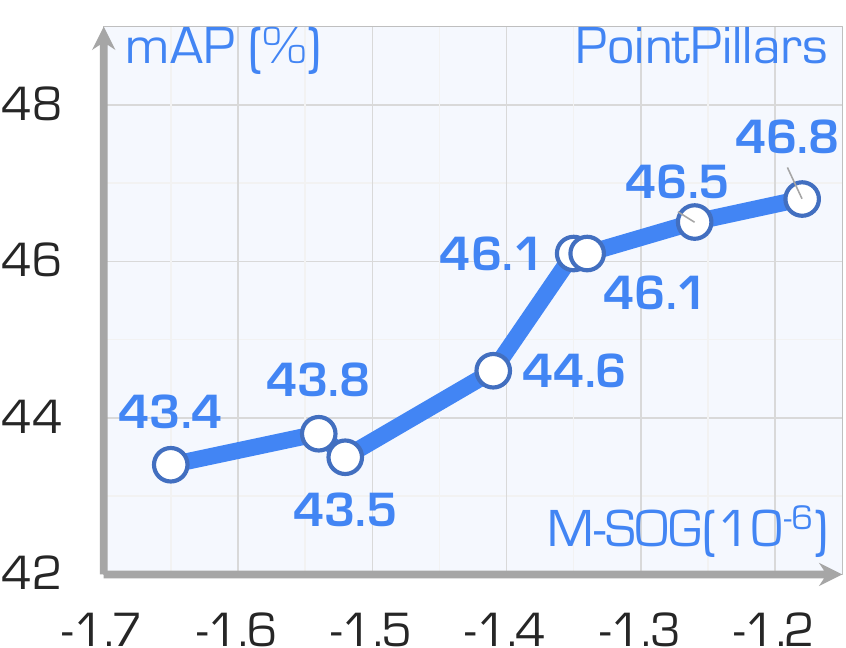}
        \caption{PointPillars}
    \end{subfigure}
    \hfill
    \begin{subfigure}[b]{0.24\textwidth}
    \centering
        \includegraphics[width=\textwidth]{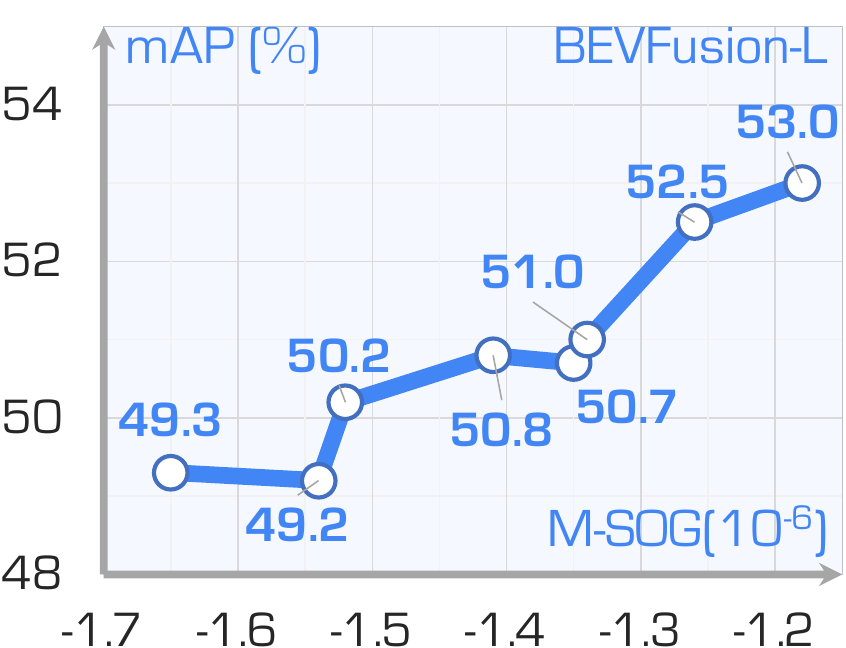}
        \caption{BEVFusion-L}
    \end{subfigure}
    \hfill
    \begin{subfigure}[b]{0.24\textwidth}
    \centering
        \includegraphics[width=\textwidth]{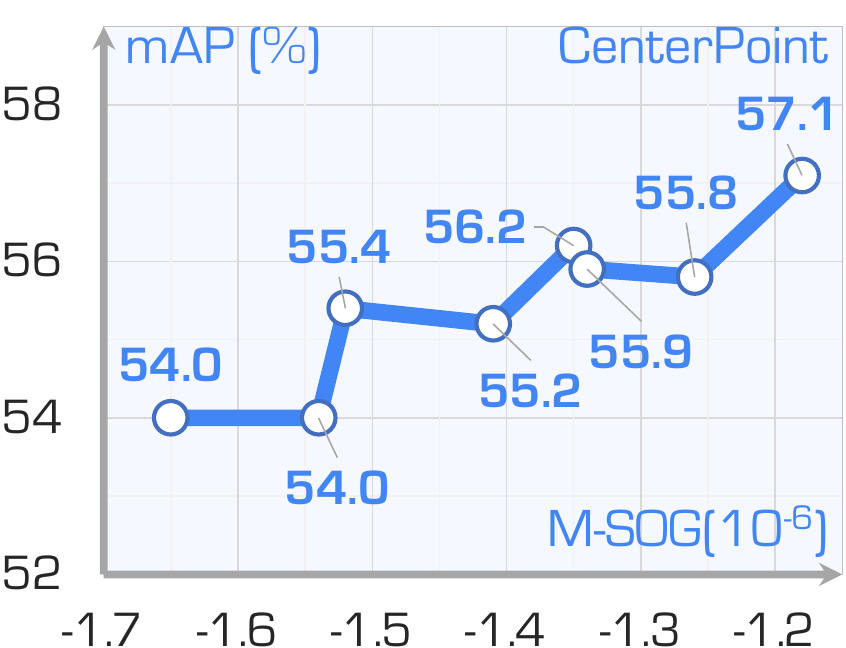}
        \caption{CenterPoint}
    \end{subfigure}
    \hfill
    \begin{subfigure}[b]{0.24\textwidth}
    \centering
        \includegraphics[width=\textwidth]{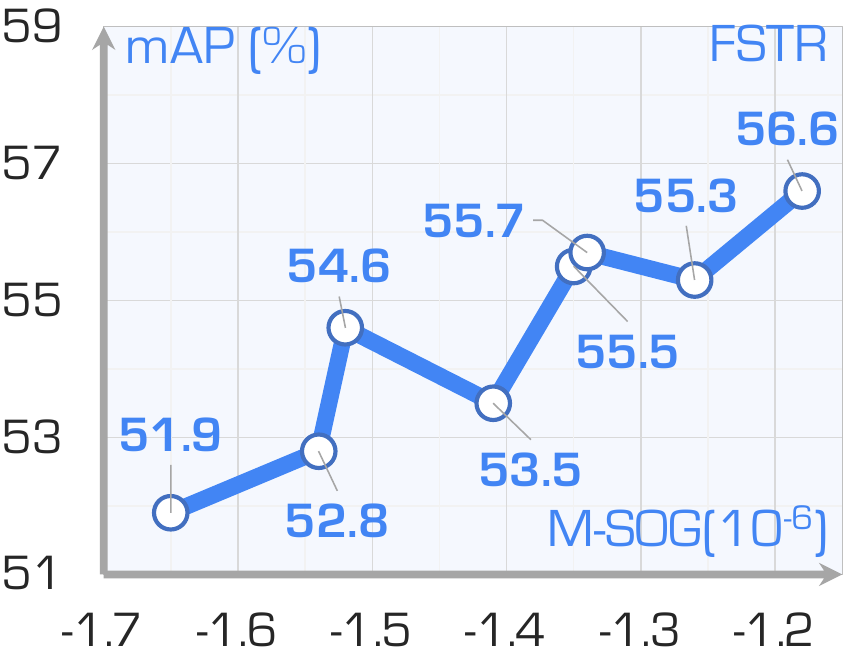}
        \caption{FSTR}
    \end{subfigure}
    \caption{The correlation between M-SOG and 3D object detection \cite{lang2019pointpillars,yin2021centerpoint,liu2023bevfusion,zhang2023fully} models performance in the \textit{clean} condition.}
    \label{fig:det_clean}
\end{figure}

\begin{figure}[t]
    \begin{subfigure}[b]{0.24\textwidth}
    \centering
        \includegraphics[width=\textwidth]{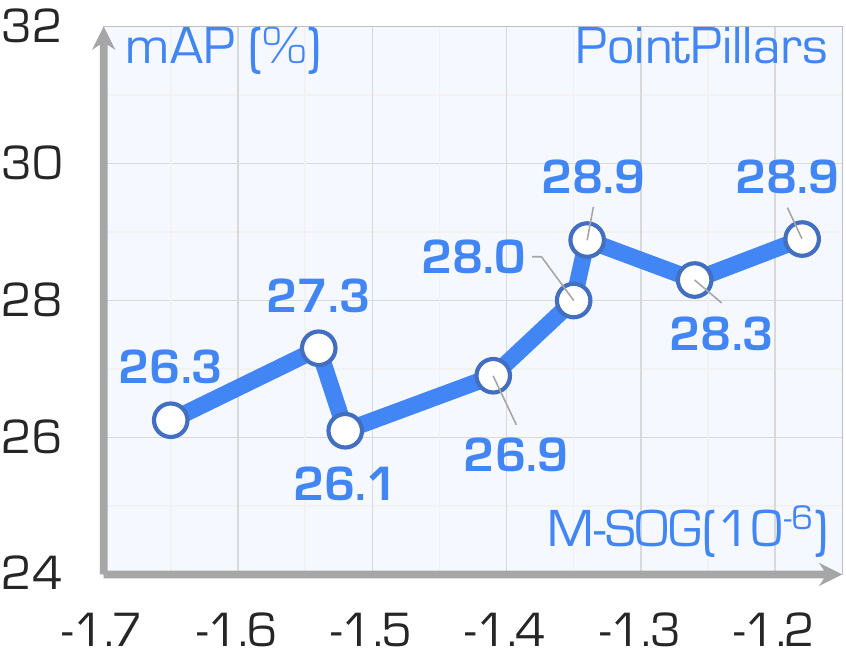}
        \caption{PointPillars}
    \end{subfigure}
    \hfill
    \begin{subfigure}[b]{0.24\textwidth}
    \centering
        \includegraphics[width=\textwidth]{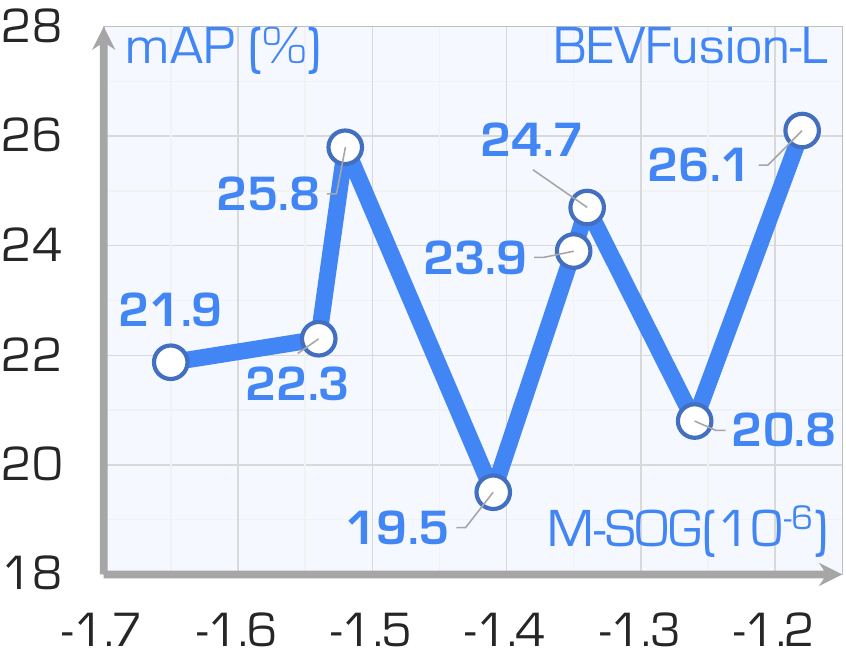}
        \caption{BEVFusion-L}
    \end{subfigure}
    \hfill
    \begin{subfigure}[b]{0.24\textwidth}
    \centering
        \includegraphics[width=\textwidth]{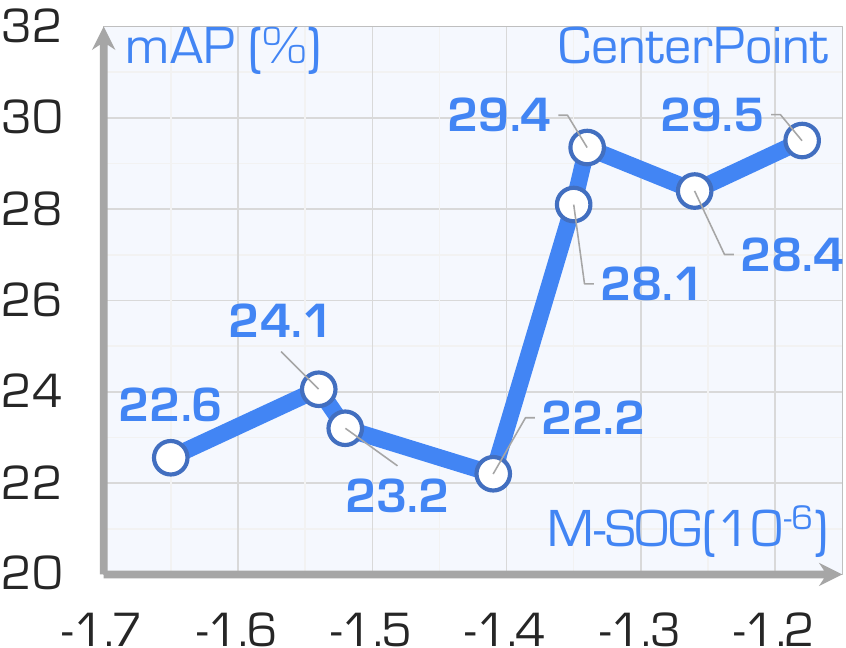}
        \caption{CenterPoint}
    \end{subfigure}
    \hfill
    \begin{subfigure}[b]{0.24\textwidth}
    \centering
        \includegraphics[width=\textwidth]{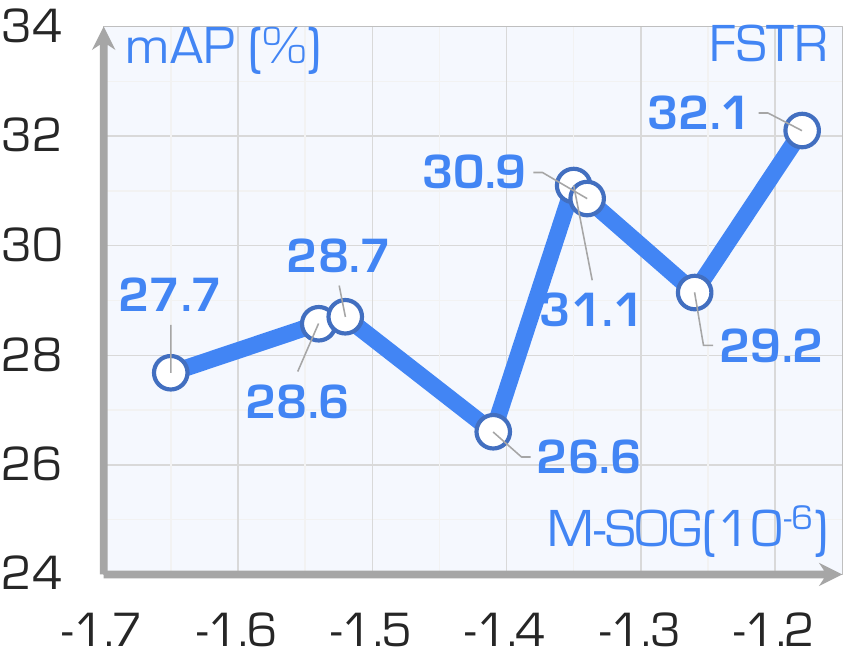}
        \caption{FSTR}
    \end{subfigure}
    \caption{The correlation between M-SOG and 3D object detection \cite{lang2019pointpillars,yin2021centerpoint,liu2023bevfusion,zhang2023fully} models performance in the \textit{adverse} condition.}
    \label{fig:det_corrupt}
\end{figure}

\begin{figure}[t]
    \begin{center}
    \includegraphics[width=\textwidth]{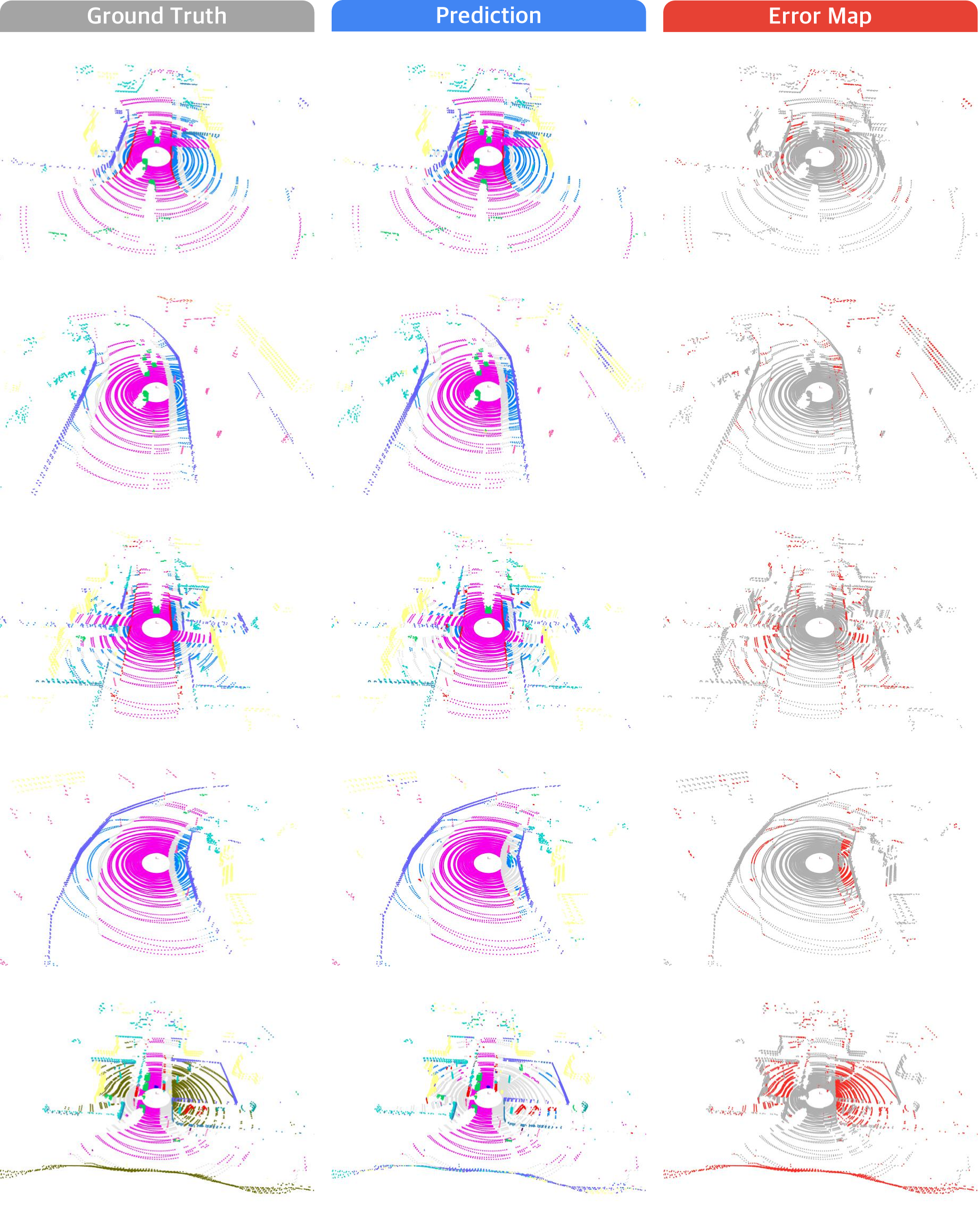}
    \end{center}
    \vspace{-0.35cm}
    \caption{\textbf{Qualitative assessments} of the MinkUNet \cite{choy2019minkowski} model using our LiDAR placement strategy. The model is tested under the clean condition. The error maps show the \textcolor{gray}{correct} and \textcolor{place3d_red}{incorrect} predictions in \textcolor{gray}{gray} and \textcolor{place3d_red}{red}, respectively. Kindly refer to Table~\ref{tab:classes} for color maps. Best viewed in colors and zoomed-in for details.}
    \label{fig:supp_qualitative_1}
    \vspace{-0.4cm}
\end{figure}

\begin{figure}[t]
    \begin{center}
    \includegraphics[width=\textwidth]{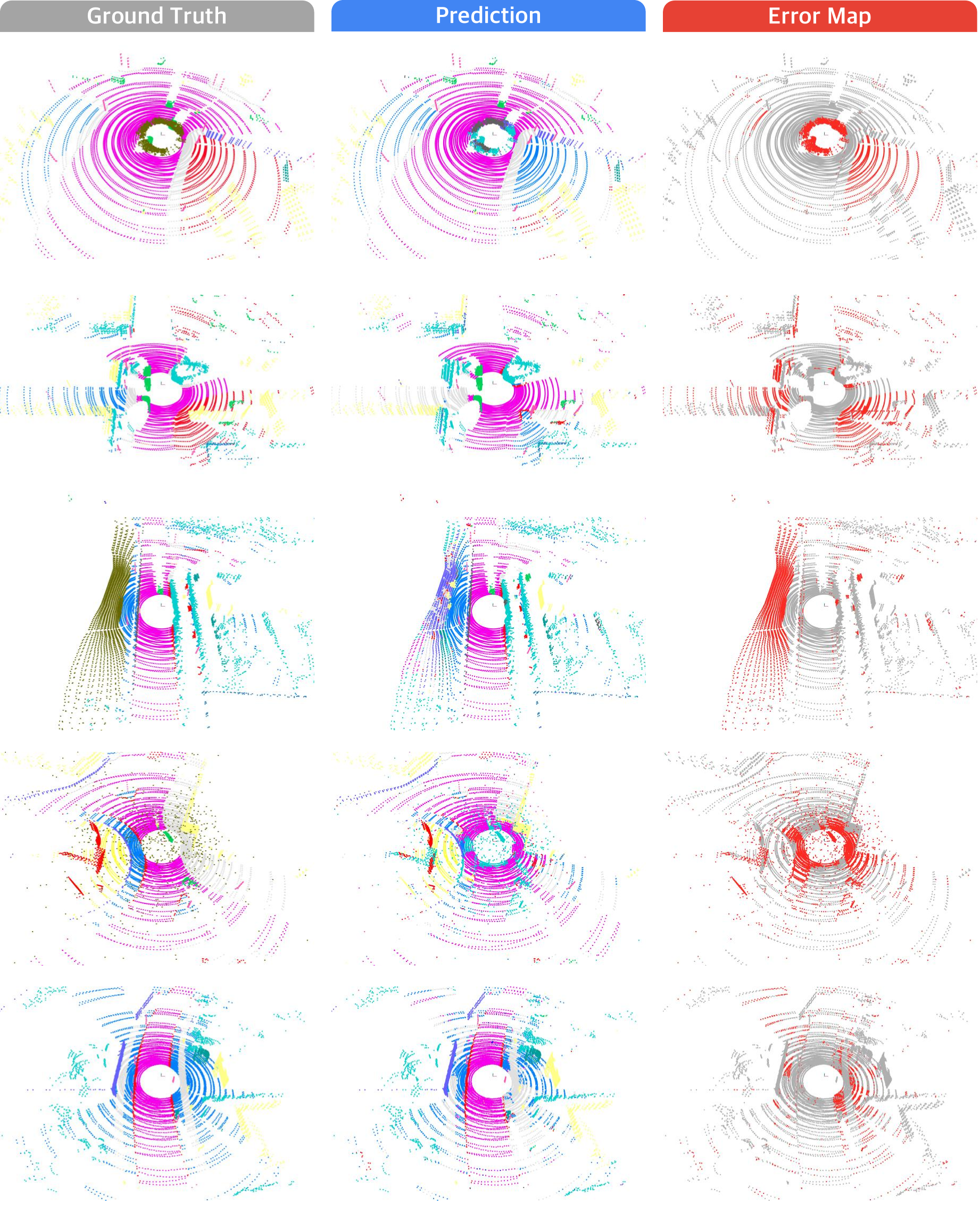}
    \end{center}
    \vspace{-0.3cm}
    \caption{\textbf{Qualitative assessments} of the MinkUNet \cite{choy2019minkowski} model using our LiDAR placement strategy. The model is tested under adverse conditions. The error maps show the \textcolor{gray}{correct} and \textcolor{place3d_red}{incorrect} predictions in \textcolor{gray}{gray} and \textcolor{place3d_red}{red}, respectively. Kindly refer to Table~\ref{tab:classes} for color maps. Best viewed in colors and zoomed-in for details.}
    \label{fig:supp_qualitative_2}
    \vspace{-0.4cm}
\end{figure}

\begin{table}[!th]
\centering
\caption{\textbf{The per-class LiDAR semantic segmentation results} of MinkUNet \cite{choy2019minkowski}, SPVCNN \cite{tang2020searching}, PolarNet \cite{zhou2020polarNet}, and Cylinder3D \cite{zhu2021cylindrical}, under different LiDAR placement strategies. All mIoU ($\uparrow$) and class-wise IoU ($\uparrow$) scores are given in percentage (\%).}
\label{tab:seg_iou}
\resizebox{\textwidth}{!}{
\begin{tabular}{c|c|ccccccccccccccccccccc}
\toprule
\rotatebox{90}{\textbf{Method}} & \rotatebox{90}{\textbf{mIoU}} & \rotatebox{90}{\textcolor{building}{$\blacksquare$}~building} & \rotatebox{90}{\textcolor{fence}{$\blacksquare$}~fence} & \rotatebox{90}{\textcolor{other}{$\blacksquare$}~other} & \rotatebox{90}{\textcolor{pedestrian}{$\blacksquare$}~pedestrian} & \rotatebox{90}{\textcolor{pole}{$\blacksquare$}~pole} & \rotatebox{90}{\textcolor{road_line}{$\blacksquare$}~road line} & \rotatebox{90}{\textcolor{road}{$\blacksquare$}~road} & \rotatebox{90}{\textcolor{sidewalk}{$\blacksquare$}~sidewalk} & \rotatebox{90}{\textcolor{vegetation}{$\blacksquare$}~vegetation} & \rotatebox{90}{\textcolor{vehicle}{$\blacksquare$}~vehicle} & \rotatebox{90}{\textcolor{wall}{$\blacksquare$}~wall} & \rotatebox{90}{\textcolor{traffic_sign}{$\blacksquare$}~traffic sign} & \rotatebox{90}{\textcolor{ground}{$\blacksquare$}~ground} & \rotatebox{90}{\textcolor{bridge}{$\blacksquare$}~bridge} & \rotatebox{90}{\textcolor{rail_track}{$\blacksquare$}~rail track} & \rotatebox{90}{\textcolor{guard_rail}{$\blacksquare$}~guard rail} & \rotatebox{90}{\textcolor{traffic_light}{$\blacksquare$}~traffic light~} & \rotatebox{90}{\textcolor{static}{$\blacksquare$}~static} & \rotatebox{90}{\textcolor{dynamic}{$\blacksquare$}~dynamic} & \rotatebox{90}{\textcolor{water}{$\blacksquare$}~water} & \rotatebox{90}{\textcolor{terrain}{$\blacksquare$}~terrain}

\\\midrule\midrule
\rowcolor{place3d_gray!15}\multicolumn{23}{l}{\textcolor{darkgray}{$\bullet$~\textbf{Placement: Center}}}
\\
Mink & $65.7$ & 87.2 & 60.7 & 61.3 & 75.8 & 67.5 & 1.8 & 95.1 & 70.9 & 86.7 & 95.1 & 86.4 & 55.5 & 65.8 & 58.0 & 0.0 & 82.5 & 72.4 & 64.0 & 70.9 & 60.5 & 61.9 
\\
SPV & $67.1$ & 87.8 & 63.3 & 63.9 & 74.1 & 67.5 & 4.9 & 95.5 & 69.0 & 86.6 & 94.6 & 87.6 & 54.3 & 50.1 & 83.6 & 0.2 & 84.4 & 74.0 & 65.8 & 73.1 & 65.6 & 63.8 
\\
Polar & $71.0$ & 89.2 & 68.9 & 66.9 & 77.0 & 83.1 & 10.5 & 95.3 & 83.0 & 84.3 & 94.8 & 88.3 & 60.9 & 75.6 & 40.2 & 35.7 & 84.6 & 78.9 & 62.3 & 75.8 & 60.8 & 75.6 
\\
Cy3D & $72.7$ & 90.7 & 67.6 & 71.5 & 79.4 & 80.8 & 11.4 & 96.1 & 74.6 & 86.8 & 94.8 & 90.4 & 56.0 & 63.1 & 85.4 & 30.2 & 92.9 & 84.4 & 66.0 & 74.3 & 66.3 & 64.0
\\
\textcolor{darkgray}{\textbf{Avg}} & $69.1$ & 88.7 & 65.1 & 65.9 & 76.6 & 74.7 & 7.2 & 95.5 & 74.4 & 86.1 & 94.8 & 88.2 & 56.7 & 63.7 & 66.8 & 16.5 & 86.1 & 77.4 & 64.5 & 73.5 & 63.3 & 66.3 

\\\midrule
\rowcolor{place3d_gray!15}\multicolumn{23}{l}{\textcolor{darkgray}{$\bullet$~\textbf{Placement: Line}}}
\\
Mink & $59.7$ & 84.1 & 48.2 & 61.9 & 73.1 & 69.4 & 2.0 & 95.0 & 72.6 & 80.5 & 93.7 & 81.7 & 42.5 & 60.0 & 40.1 & 0.0 & 79.4 & 37.7 & 53.5 & 59.2 & 57.7 & 61.8 
\\
SPV & $59.3$ & 84.7 & 50.6 & 64.9 & 74.7 & 69.1 & 4.5 & 94.9 & 62.1 & 81.0 & 94.0 & 83.7 & 48.6 & 40.5 & 76.4 & 0.0 & 80.1 & 40.0 & 56.8 & 58.6 & 33.0 & 47.2 
\\
Polar & $67.7$ & 89.6 & 65.1 & 59.6 & 79.6 & 83.9 & 2.2 & 95.1 & 81.1 & 82.8 & 94.4 & 87.7 & 49.1 & 72.6 & 69.4 & 16.5 & 81.1 & 54.7 & 59.6 & 72.8 & 53.7 & 71.0 
\\
Cy3D & $68.9$ & 87.3 & 63.3 & 74.1 & 70.5 & 80.4 & 14.6 & 96.1 & 70.9 & 84.1 & 96.1 & 86.7 & 52.1 & 57.4 & 80.9 & 19.1 & 87.6 & 71.9 & 64.0  &68.9 & 51.7 & 60.2 
\\
\textcolor{darkgray}{\textbf{Avg}} & $63.9$ & 86.4 & 56.8 & 65.1 & 74.5 & 75.7 & 5.8 & 95.3 & 71.7 & 82.1 & 94.6 & 85.0 & 48.1 & 57.6 & 66.7 & 8.9 & 82.1  & 51.1 & 58.5 & 64.9 & 49.0 & 60.1 

\\\midrule
\rowcolor{place3d_gray!15}\multicolumn{23}{l}{\textcolor{darkgray}{$\bullet$~\textbf{Placement: Pyramid}}}
\\
Mink & $62.7$ & 87.4 & 57.6 & 64.9 & 76.5 & 73.5 & 2.4 & 94.5 & 65.1 & 82.3 & 93.7 & 85.2 & 49.6 & 47.7 & 57.1 & 0.0 & 83.5 & 65.3 & 59.1 & 60.9 & 54.2 & 55.6 
\\
SPV & $67.6$ & 88.2 & 60.7 & 70.2 & 78.3 & 72.6 & 9.7 & 95.5 & 74.6 & 81.7 & 92.1 & 87.4 & 55.5 & 54.5 & 86.4 & 0.0 & 81.8 & 68.1 & 64.5 & 69.2 & 62.6 & 65.9 
\\
Polar & $67.7$ & 90.3 & 64.7 & 55.1 & 78.6 & 84.9 & 6.2 & 95.2 & 83.3 & 84.7 & 94.8 & 87.4 & 57.8 & 74.8 & 32.2 & 28.5 & 85.8 & 45.6 & 58.9 & 78.2 & 60.9 & 74.4 
\\
Cy3D & $68.4$ & 87.4 & 67.8 & 70.5 & 81.2 & 82.0 & 12.7 & 95.2 & 67.4 & 84.3 & 96.4 & 86.0 & 56.4 & 54.2 & 53.5 & 20.9 & 87.1 & 81.0 & 67.3 & 71.3 & 57.3 & 55.6 
\\
\textcolor{darkgray}{\textbf{Avg}} & $66.6$ & 88.3 & 62.7 & 65.2 & 78.7 & 78.3 & 7.8 & 95.1 & 72.6 & 83.3 & 94.3 & 86.5 & 54.8 & 57.8 & 57.3 & 12.4 & 84.6 & 65.0 & 62.5 & 69.9 & 58.8 & 62.9 

\\\midrule
\rowcolor{place3d_gray!15}\multicolumn{23}{l}{\textcolor{darkgray}{$\bullet$~\textbf{Placement: Square}}}
\\
Mink & $60.7$ & 85.3 & 51.1 & 63.7 & 75.1 & 70.6 & 3.2 & 95.5 & 64.7 & 81.6 & 95.2 & 83.7 & 46.2 & 44.8 & 63.9 & 0.0 & 79.2 & 46.3 & 54.7 & 57.6 & 59.4 & 52.0 
\\
SPV & $63.4$ & 85.9 & 54.3 & 69.5 & 75.8 & 70.5 & 6.0 & 95.4 & 68.9 & 82.1 & 95.5 & 84.6 & 50.1 & 53.7 & 82.8 & 0.0 & 80.2 & 40.7 & 59.2 & 63.0 & 58.4 & 54.5 
\\
Polar & $69.3$ & 89.9 & 64.7 & 57.2 & 79.1 & 84.0 & 3.5 & 95.3 & 82.4 & 83.9 & 95.1 & 88.9 & 50.0 & 75.6 & 75.4 & 20.2 & 86.0 & 58.2 & 58.8 & 75.8 & 58.2 & 73.8 
\\
Cy3D & $69.9$ & 88.7 & 63.6 & 75.1 & 80.5 & 81.5 & 13.8 & 96.2 & 76.9 & 84.9 & 97.2 & 88.0 & 55.1 & 65.9 & 51.8 & 22.3 & 90.9 & 72.5 & 67.5 & 74.9 & 55.4 & 65.2 
\\
\textcolor{darkgray}{\textbf{Avg}} & $65.8$ & 87.5 & 58.4 & 66.4 & 77.6 & 76.7 & 6.6 & 95.6 & 73.2 & 83.1 & 95.8 & 86.3 & 50.4 & 60.0 & 68.5 & 10.6 & 84.1 & 54.4 & 60.1 & 67.8 & 57.9 & 61.4 

\\\midrule
\rowcolor{place3d_gray!15}\multicolumn{23}{l}{\textcolor{darkgray}{$\bullet$~\textbf{Placement: Trapezoid}}}
\\
Mink & $59.0$ & 84.5 & 43.8 & 64.0 & 74.4 & 70.0 & 2.4 & 95.4 & 68.7 & 81.7 & 94.6 & 82.6 & 47.1 & 54.1 & 43.7 & 0.0 & 82.6 & 52.8 & 53.9 & 54.8 & 32.1 & 56.1 
\\
SPV & $61.0$ & 85.8 & 54.5 & 70.0 & 73.7 & 70.4 & 7.9 & 95.1 & 64.5 & 82.2 & 94.9 & 84.4 & 49.5 & 40.2 & 74.3 & 0.0 & 76.4 & 53.6 & 60.0 & 60.4 & 30.7 & 52.1 
\\
Polar & $66.8$ & 90.1 & 62.6 & 59.0 & 79.3 & 83.7 & 4.5 & 95.1 & 78.0 & 83.9 & 95.2 & 87.7 & 53.4 & 74.6 & 42.0 & 18.8 & 84.7 & 54.6 & 55.7 & 74.4 & 57.2 & 67.7 
\\
Cy3D & $68.5$ & 88.4 & 63.1 & 75.2 & 80.3 & 80.2 & 15.1 & 96.1 & 72.6 & 83.4 & 95.8 & 87.9 & 54.5 & 61.8 & 55.7 & 13.8 & 83.9 & 77.8 & 64.6 & 66.4 & 64.8 & 58.1 
\\
\textcolor{darkgray}{\textbf{Avg}} & $63.8$ & 87.2 & 56.0 & 67.1 & 76.9 & 76.1 & 7.5 & 95.4 & 71.0 & 82.8 & 95.1 & 85.7 & 51.1 & 57.7 & 53.9 & 8.2 & 81.9 & 59.7 & 58.6 & 64.0 & 46.2 & 58.5 

\\\midrule
\rowcolor{place3d_gray!15}\multicolumn{23}{l}{\textcolor{darkgray}{$\bullet$~\textbf{Placement: Line-Roll}}}
\\
Mink & $58.5$ & 85.3 & 47.3 & 57.9 & 73.8 & 69.6 & 2.3 & 95.1 & 71.8 & 79.2 & 91.4 & 81.6 & 47.7 & 56.7 & 22.2 & 0.0 & 78.3 & 45.5 & 54.2 & 56.1 & 53.6 & 58.2 
\\
SPV & $60.6$ & 85.5 & 53.6 & 62.5 & 74.3 & 70.5 & 3.4 & 95.3 & 67.1 & 81.0 & 91.6 & 84.5 & 53.5 & 49.9 & 82.6 & 0.2 & 78.3 & 44.8 & 54.2 & 61.7 & 27.4 & 51.8 
\\
Polar & $67.2$ & 90.6 & 65.5 & 54.1 & 79.4 & 84.5 & 2.8 & 94.8 & 81.5 & 83.0 & 94.4 & 88.2 & 50.1 & 68.9 & 39.3 & 21.2 & 87.4 & 63.8 & 56.4 & 71.6 & 61.3 & 72.9 
\\
Cy3D & $69.8$ & 87.1 & 58.9 & 74.5 & 80.7 & 81.8 & 13.4 & 96.5 & 76.0 & 84.1 & 96.6 & 86.0 & 55.5 & 63.7 & 62.4 & 22.8 & 90.5 & 75.9 & 66.6 & 74.9 & 55.0 & 62.0 
\\
\textcolor{darkgray}{\textbf{Avg}} & $64.0$ & 87.1 & 56.3 & 62.3 & 77.1 & 76.6 & 5.5 & 95.4 & 74.1 & 81.8 & 93.5 & 85.1 & 51.7 & 59.8 & 51.6 & 11.1 & 83.6 & 57.5 & 57.9 & 66.1 & 49.3 & 61.2 

\\\midrule
\rowcolor{place3d_gray!15}\multicolumn{23}{l}{\textcolor{darkgray}{$\bullet$~\textbf{Placement: Pyramid-Roll}}}
\\
Mink & $62.2$ & 87.5 & 52.2 & 63.5 & 76.1 & 73.7 & 2.1 & 94.3 & 63.3 & 81.7 & 94.9 & 84.1 & 55.0 & 43.5 & 50.4 & 0.0 & 81.0 & 64.6 & 61.5 & 59.3 & 62.0 & 55.5 
\\
SPV & $67.9$ & 88.5 & 61.2 & 69.5 & 78.4 & 73.6 & 7.4 & 95.1 & 72.2 & 83.9 & 94.4 & 87.6 & 58.2 & 55.7 & 87.9 & 0.0 & 83.3 & 65.0 & 66.2 & 68.7 & 64.7 & 64.6 
\\
Polar & $70.9$ & 90.8 & 67.0 & 61.0 & 79.1 & 84.7 & 5.1 & 95.1 & 82.1 & 85.0 & 95.5 & 88.4 & 64.0 & 73.3 & 50.1 & 24.7 & 89.3 & 73.6 & 61.7 & 75.4 & 68.4 & 73.8 
\\
Cy3D & $69.3$ & 87.8 & 60.2 & 69.0 & 81.0 & 80.9 & 12.0 & 95.1 & 65.1 & 85.3 & 96.3 & 88.3 & 53.5 & 57.6 & 82.8 & 32.7 & 88.5 & 77.0 & 65.9 & 70.2 & 57.0 & 48.1 
\\
\textcolor{darkgray}{\textbf{Avg}} & $67.9$ & 88.5 & 61.2 & 69.5 & 78.4 & 73.6 & 7.4 & 95.1 & 72.2 & 83.9 & 94.4 & 87.6 & 58.2 & 55.7 & 87.9 & 0.0 & 83.3 & 65.0 & 66.2 & 68.7 & 64.7 & 64.6 

\\\midrule
\rowcolor{place3d_blue!15}\multicolumn{23}{l}{\textcolor{place3d_blue}{$\bullet$~\textbf{Placement: Ours}}}
\\
Mink & $66.5$ & 89.6 & 56.7 & 65.1 & 73.3 & 76.4 & 4.2 & 94.6 & 70.7 & 86.0 & 94.0 & 83.8 & 62.8 & 55.9 & 49.2 & 2.6 & 85.4 & 74.6 & 63.3 & 66.5 & 78.6 & 62.4 
\\
SPV & $68.3$ & 89.8 & 59.5 & 66.8 & 75.2 & 75.8 & 6.0 & 94.6 & 74.2 & 85.9 & 93.4 & 85.3 & 60.1 & 58.7 & 75.5 & 0.0 & 85.5 & 76.7 & 64.9 & 65.2 & 77.3 & 64.7 
\\
Polar & $76.7$ & 93.5 & 70.6 & 60.7 & 79.6 & 86.1 & 8.6 & 96.0 & 86.1 & 87.2 & 94.9 & 88.8 & 68.0 & 78.8 & 56.6 & 76.1 & 89.0 & 86.1 & 63.6 & 79.1 & 82.2 & 78.2 
\\
Cy3D & $73.0$ & 91.7 & 66.5 & 68.3 & 80.2 & 85.1 & 12.7 & 96.1 & 82.2 & 86.9 & 95.3 & 86.7 & 54.0 & 58.4 & 28.9 & 68.2 & 90.8 & 83.9 & 73.8 & 79.3 & 74.7 & 69.3 
\\
\textcolor{darkgray}{\textbf{Avg}} & $71.1$ & 91.2 & 63.3 & 65.2 & 77.1 & 80.9 & 7.9 & 95.3 & 78.3 & 86.5 & 94.4 & 86.2 & 61.2 & 63.0 & 52.6 & 36.7 & 87.7 & 80.3 & 66.4 & 72.5 & 78.2 & 68.7 
\\\bottomrule
\end{tabular}}
\end{table}
\begin{table}[t]
\centering
\caption{\textbf{Benchmark results of LiDAR semantic segmentation under clean and adverse conditions}. For each placement, we report the mIoU ($\uparrow$), mAcc ($\uparrow$), and ECE ($\downarrow$) scores for models under the clean condition, and mIoU ($\uparrow$) scores for models under adverse conditions. The mIoU and mAcc scores are given in percentage ($\%$).}
\label{tab:supp_robo_seg}
\resizebox{\textwidth}{!}{
\begin{tabular}{r|p{1.05cm}<{\centering}p{1.05cm}<{\centering}p{1.05cm}<{\centering}|p{1.05cm}<{\centering}p{1.05cm}<{\centering}p{1.05cm}<{\centering}p{1.05cm}<{\centering}p{1.05cm}<{\centering}p{1.05cm}<{\centering}p{1.05cm}<{\centering}}
\toprule
\multirow{2}{*}{\textbf{Method}} & \multicolumn{3}{c}{\textcolor{place3d_green}{\textbf{Clean}}} \vline & \multicolumn{7}{c}{\textcolor{place3d_red}{\textbf{Adverse}}}
\\
& mIoU & mAcc & ECE & Fog & Wet & Snow & Blur & Cross & Echo & \textbf{Avg}
\\\midrule\midrule

\rowcolor{place3d_gray!15}\multicolumn{11}{l}{\textcolor{darkgray}{$\bullet$~\textbf{Placement: Center}}}
\\
MinkUNet~\cite{choy2019minkowski} & $65.7$ & $72.4$ & $0.041$ & 55.9 & 63.8 & 25.1 & 35.8 & 24.7 & 64.5 & 45.0 
\\
SPVCNN~\cite{tang2020searching} & $67.1$ & $74.4$ & $0.034$ & 39.3 & 66.6 & 35.6 & 35.6 & 19.5 & 66.8 & 43.9 
\\
PolarNet~\cite{zhou2020polarNet} & $71.0$ & $76.0$ & $0.033$ & 43.1 & 59.2 & 11.4 & 36.1 & 23.1 & 69.4 & 40.4 
\\
Cylinder3D~\cite{zhu2021cylindrical} & $72.7$ & $79.2$ & $0.041$ & 55.6 & 64.4 & 16.7 & 37.6 & 36.9 & 71.5 & 47.1 
\\\midrule

\rowcolor{place3d_gray!15}\multicolumn{11}{l}{\textcolor{darkgray}{$\bullet$~\textbf{Placement: Line}}}
\\
MinkUNet~\cite{choy2019minkowski} & $59.7$ & $67.7$ & $0.037$ & 51.7 & 60.2 & 35.5 & 52.0 & 27.1 & 59.2 & 47.6 
\\
SPVCNN~\cite{tang2020searching} & $59.3$ & $66.7$ & $0.068$ & 42.8 & 57.9 & 31.3 & 46.1 & 13.6 & 57.1 & 41.5 
\\
PolarNet~\cite{zhou2020polarNet} & $67.7$ & $74.1$ & $0.034$ & 43.1 & 61.6 & 4.4 & 48.4 & 18.9 & 67.3 & 40.6 
\\
Cylinder3D~\cite{zhu2021cylindrical} & $68.9$ & $76.3$ & $0.045$ & 55.5 & 66.4 & 4.7 & 39.4 & 34.3 & 68.3 & 44.8 
\\\midrule

\rowcolor{place3d_gray!15}\multicolumn{11}{l}{\textcolor{darkgray}{$\bullet$~\textbf{Placement: Pyramid}}}
\\
MinkUNet~\cite{choy2019minkowski} & $62.7$ & $70.6$ & $0.072$ & 52.9 & 60.3 & 25.2 & 50.7 & 17.3 & 60.2 & 44.4 
\\
SPVCNN~\cite{tang2020searching} & $67.6$ & $74.0$ & $0.037$ & 48.6 & 66.6 & 30.2 & 55.1 & 14.8 & 65.9 & 46.9 
\\
PolarNet~\cite{zhou2020polarNet} & $67.7$ & $73.0$ & $0.032$ & 34.3 & 61.0 & 2.3 & 49.1 & 9.6 & 66.9 & 37.2 
\\
Cylinder3D~\cite{zhu2021cylindrical} & $68.4$ & $76.0$ & $0.093$ & 51.0 & 52.2 & 5.0 & 42.5 & 26.6 & 60.9 & 39.7 
\\\midrule

\rowcolor{place3d_gray!15}\multicolumn{11}{l}{\textcolor{darkgray}{$\bullet$~\textbf{Placement: Square}}}
\\
MinkUNet~\cite{choy2019minkowski} & $60.7$ & $68.4$ & $0.043$ & 55.6 & 61.9 & 33.5 & 51.5 & 26.5 & 61.2 & 48.4 
\\
SPVCNN~\cite{tang2020searching} & $63.4$ & $70.2$ & $0.031$ & 40.7 & 64.3 & 38.3 & 53.9 & 18.6 & 63.7 & 46.6 
\\
PolarNet~\cite{zhou2020polarNet} & $69.3$ & $74.7$ & $0.033$ & 39.9 & 50.0 & 6.1 & 49.1 & 15.1 & 68.8 & 38.2 
\\
Cylinder3D~\cite{zhu2021cylindrical} & $69.9$ & $76.7$ & $0.044$ & 52.0 & 55.6 & 2.7 & 44.2 & 37.1 & 68.7 & 43.4 
\\\midrule

\rowcolor{place3d_gray!15}\multicolumn{11}{l}{\textcolor{darkgray}{$\bullet$~\textbf{Placement: Trapezoid}}}
\\
MinkUNet~\cite{choy2019minkowski} & $59.0$ & $66.2$ & $0.040$ & 49.7 & 60.4 & 27.6 & 51.7 & 18.4 & 59.3 & 44.5 
\\
SPVCNN~\cite{tang2020searching} & $61.0$ & $68.8$ & $0.044$ & 40.9 & 61.3 & 33.6 & 49.1 & 16.9 & 60.7 & 43.8 
\\
PolarNet~\cite{zhou2020polarNet} & $66.8$ & $72.3$ & $0.034$ & 37.5 & 65.3 & 2.8 & 46.7 & 15.4 & 67.8 & 39.3 
\\
Cylinder3D~\cite{zhu2021cylindrical} & $68.5$ & $75.4$ & $0.057$ & 52.1 & 64.6 & 3.1 & 36.7 & 30.0 & 65.6 & 42.0 
\\\midrule

\rowcolor{place3d_gray!15}\multicolumn{11}{l}{\textcolor{darkgray}{$\bullet$~\textbf{Placement: Line-Roll}}}
\\
MinkUNet~\cite{choy2019minkowski} & $58.5$ & $66.4$ & $0.047$ & 48.6 & 59.2 & 26.9 & 50.4 & 21.2 & 58.0 & 44.1 
\\
SPVCNN~\cite{tang2020searching} & $60.6$ & $68.0$ & $0.034$ & 42.2 & 62.0 & 27.0 & 49.9 & 16.5 & 61.0 & 43.1 
\\
PolarNet~\cite{zhou2020polarNet} & $67.2$ & $72.8$ & $0.037$ & 38.2 & 62.9 & 2.2 & 46.3 & 14.2 & 65.4 & 38.2 
\\
Cylinder3D~\cite{zhu2021cylindrical} & $69.8$ & $77.0$ & $0.048$ & 49.7 & 65.4 & 2.6 & 37.4 & 27.3 & 67.8 & 41.7 
\\\midrule

\rowcolor{place3d_gray!15}\multicolumn{11}{l}{\textcolor{darkgray}{$\bullet$~\textbf{Placement: Pyramid-Roll}}}
\\
MinkUNet~\cite{choy2019minkowski} & $62.2$ & $69.6$ & $0.051$ & 52.2 & 60.9 & 26.6 & 52.5 & 19.3 & 60.8 & 45.4 
\\
SPVCNN~\cite{tang2020searching} & $67.9$ & $74.2$ & $0.033$ & 47.2 & 67.1 & 31.6 & 56.5 & 13.7 & 66.7 & 47.1 
\\
PolarNet~\cite{zhou2020polarNet} & $70.9$ & $75.9$ & $0.035$ & 36.3 & 49.1 & 2.3 & 51.4 & 13.3 & 68.6 & 36.8 
\\
Cylinder3D~\cite{zhu2021cylindrical} & $69.3$ & $77.0$ & $0.048$ & 50.7 & 67.9 & 2.1 & 44.1 & 31.9 & 70.0 & 44.5 
\\\midrule

\rowcolor{place3d_blue!15}\multicolumn{11}{l}{\textcolor{place3d_blue}{$\bullet$~\textbf{Placement: Ours}}}
\\
MinkUNet~\cite{choy2019minkowski} & $66.5$ & $73.2$ & $0.031$ & 59.5 & 66.6 & 17.6 & 56.7 & 24.5 & 66.9 & 48.6 
\\
SPVCNN~\cite{tang2020searching} & $68.3$ & $74.6$ & $0.034$ & 59.1 & 66.7 & 24.0 & 56.0 & 18.7 & 66.9 & 48.6 
\\
PolarNet~\cite{zhou2020polarNet} & $76.7$ & $81.5$ & $0.033$ & 57.3 & 65.8 & 2.8 & 55.0 & 27.3 & 76.1 & 47.4
\\
Cylinder3D~\cite{zhu2021cylindrical} & $73.0$ & $78.9$ & $0.037$ & 57.6 & 67.2 & 5.9 & 48.7 & 41.0 & 63.3 & 47.3
\\\bottomrule
\end{tabular}
}
\end{table}

\begin{table}[t]
\centering
\caption{\textbf{Benchmark results of 3D object detection under clean and adverse conditions}. For each placement, we report the mAP ($\uparrow$) scores of three classes (\textit{car}, \textit{pedestrian}, and \textit{bicyclist}) for models under the clean condition, and mAP ($\uparrow$) scores of \textit{car} for models under adverse conditions. The mAP scores are given in percentage ($\%$).}
\label{tab:supp_robo_det}
\resizebox{\textwidth}{!}{
\begin{tabular}{r|p{1.05cm}<{\centering}p{1.05cm}<{\centering}p{1.05cm}<{\centering}|p{1.05cm}<{\centering}p{1.05cm}<{\centering}p{1.05cm}<{\centering}p{1.05cm}<{\centering}p{1.05cm}<{\centering}p{1.05cm}<{\centering}p{1.05cm}<{\centering}}
\toprule
\multirow{2}{*}{\textbf{Method}} & \multicolumn{3}{c}{\textcolor{place3d_green}{\textbf{Clean}}} \vline & \multicolumn{7}{c}{\textcolor{place3d_red}{\textbf{Adverse}}}
\\
& Car & Ped & Bicy & Fog & Wet & Snow & Blur & Cross & Echo & \textbf{Avg}
\\\midrule\midrule

\rowcolor{place3d_gray!15}\multicolumn{11}{l}{\textcolor{darkgray}{$\bullet$~\textbf{Placement: Center}}}
\\
PointPillars~\cite{lang2019pointpillars} & $46.5$ & $19.4$ & $27.1$ & 17.1 & 36.3 & 37.4 & 27.1 & 25.7 & 26.2 & 28.3
\\
CenterPoint~\cite{yin2021centerpoint} & $55.8$ & $28.7$ & $28.8$ & 23.2 & 47.3 & 18.9 & 27.3 & 31.6 & 22.3 & 28.4
\\
BEVFusion-L~\cite{liu2023bevfusion} & $52.5$ & $31.9$ & $32.2$ & 19.2 & 36.8 & 27.0 & 12.8 & 8.3  & 20.9 & 20.8
\\
FSTR~\cite{zhang2023fully} & $55.3$ & $27.7$ & $29.3$ & 23.8 & 44.1 & 30.7 & 25.4 & 25.0 & 25.9 & 29.2
\\\midrule

\rowcolor{place3d_gray!15}\multicolumn{11}{l}{\textcolor{darkgray}{$\bullet$~\textbf{Placement: Line}}}
\\
PointPillars~\cite{lang2019pointpillars} & $43.4$ & $22.0$ & $27.7$ & 15.1 & 39.6 & 33.6 & 27.1 & 16.9 & 25.2 & 26.3
\\
CenterPoint~\cite{yin2021centerpoint} & $54.0$ & $34.2$ & $37.7$ & 20.2 & 49.2 & 22.8 & 9.7  & 12.0 & 21.5 & 22.6
\\
BEVFusion-L~\cite{liu2023bevfusion} & $49.3$ & $29.0$ & $29.5$ & 18.6 & 38.0 & 12.2 & 23.6 & 17.2 & 21.6 & 21.9
\\
FSTR~\cite{zhang2023fully} & $51.9$ & $30.2$ & $33.0$ & 22.7 & 45.4 & 27.0 & 23.9 & 20.1 & 27.0 & 27.7
\\\midrule

\rowcolor{place3d_gray!15}\multicolumn{11}{l}{\textcolor{darkgray}{$\bullet$~\textbf{Placement: Pyramid}}}
\\
PointPillars~\cite{lang2019pointpillars} & $46.1$ & $24.4$ & $29.0$ & 17.2 & 38.7 & 36.1 & 29.2 & 26.3 & 25.8 & 28.9
\\
CenterPoint~\cite{yin2021centerpoint} & $55.9$ & $37.4$ & $35.6$ & 26.0 & 49.6 & 21.1 & 28.6 & 24.9 & 25.9 & 29.4
\\
BEVFusion-L~\cite{liu2023bevfusion} & $51.0$ & $21.7$ & $27.9$ & 20.8 & 38.1 & 15.0 & 29.1 & 23.7 & 21.6 & 24.7
\\
FSTR~\cite{zhang2023fully} & $55.7$ & $29.4$ & $33.8$ & 25.8 & 44.8 & 24.0 & 33.4 & 28.5 & 28.7 & 30.9
\\\midrule

\rowcolor{place3d_gray!15}\multicolumn{11}{l}{\textcolor{darkgray}{$\bullet$~\textbf{Placement: Square}}}
\\
PointPillars~\cite{lang2019pointpillars} & $43.8$ & $20.8$ & $27.1$ & 17.2 & 40.0 & 32.5 & 26.1 & 22.3 & 25.7 & 27.3 
\\
CenterPoint~\cite{yin2021centerpoint} & $54.0$ & $35.5$ & $34.1$ & 23.4 & 50.2 & 19.2 & 13.0 & 14.0 & 24.6 & 24.1
\\
BEVFusion-L~\cite{liu2023bevfusion} & $49.2$ & $27.0$ & $26.7$ & 20.7 & 39.4 & 7.3  & 23.6 & 20.1 & 22.7 & 22.3
\\
FSTR~\cite{zhang2023fully} & $52.8$ & $30.3$ & $31.3$ & 23.7 & 47.2 & 23.4 & 25.1 & 23.1 & 29.0 & 28.6
\\\midrule

\rowcolor{place3d_gray!15}\multicolumn{11}{l}{\textcolor{darkgray}{$\bullet$~\textbf{Placement: Trapezoid}}}
\\
PointPillars~\cite{lang2019pointpillars} & $43.5$ & $21.5$ & $27.3$ & 16.0 & 40.0 & 31.3 & 25.9 & 18.6 & 24.9 & 26.1
\\
CenterPoint~\cite{yin2021centerpoint} & $55.4$ & $35.6$ & $37.5$ & 22.1 & 51.7 & 14.6 & 15.3 & 11.6 & 23.9 & 23.2
\\
BEVFusion-L~\cite{liu2023bevfusion} & $50.2$ & $30.0$ & $31.7$ & 19.2 & 39.2 & 19.8 & 26.9 & 27.2 & 22.6 & 25.8
\\
FSTR~\cite{zhang2023fully} & $54.6$ & $30.0$ & $33.3$ & 22.9 & 46.9 & 26.0 & 26.5 & 23.4 & 26.5 & 28.7
\\\midrule

\rowcolor{place3d_gray!15}\multicolumn{11}{l}{\textcolor{darkgray}{$\bullet$~\textbf{Placement: Line-Roll}}}
\\
PointPillars~\cite{lang2019pointpillars} & $44.6$ & $21.3$ & $27.0$ & 15.2 & 40.3 & 33.7 & 26.9 & 20.0 & 25.3 & 26.9
\\
CenterPoint~\cite{yin2021centerpoint} & $55.2$ & $32.7$ & $37.2$ & 19.6 & 49.6 & 16.0 & 15.5 & 9.5  & 23.2 & 22.2
\\
BEVFusion-L~\cite{liu2023bevfusion} & $50.8$ & $29.4$ & $29.5$ & 15.2 & 38.3 & 11.2 & 19.5 & 10.7 & 21.9 & 19.5
\\
FSTR~\cite{zhang2023fully} & $53.5$ & $29.8$ & $32.4$ & 21.0 & 46.2 & 24.2 & 24.1 & 16.2 & 27.8 & 26.6
\\\midrule

\rowcolor{place3d_gray!15}\multicolumn{11}{l}{\textcolor{darkgray}{$\bullet$~\textbf{Placement: Pyramid-Roll}}}
\\
PointPillars~\cite{lang2019pointpillars} & $46.1$ & $23.6$ & $27.9$ & 17.5 & 39.1 & 33.7 & 27.3 & 25.1 & 25.2 & 28.0
\\
CenterPoint~\cite{yin2021centerpoint} & $56.2$ & $36.5$ & $35.9$ & 24.8 & 49.2 & 15.9 & 29.0 & 24.0 & 25.5 & 28.1
\\
BEVFusion-L~\cite{liu2023bevfusion} & $50.7$ & $22.7$ & $28.2$ & 22.8 & 40.3 & 5.2  & 30.3 & 22.9 & 21.8 & 23.9
\\
FSTR~\cite{zhang2023fully} & $55.5$ & $29.9$ & $32.0$ & 26.3 & 47.2 & 22.9 & 33.5 & 28.1 & 28.8 & 31.1
\\\midrule

\rowcolor{place3d_blue!15}\multicolumn{11}{l}{\textcolor{place3d_blue}{$\bullet$~\textbf{Placement: Ours}}}
\\
PointPillars~\cite{lang2019pointpillars} & $46.8$ & $24.9$ & $27.2$ & 18.3 & 40.1 & 36.4 & 26.5 & 26.4 & 25.8 & 28.9
\\
CenterPoint~\cite{yin2021centerpoint} & $57.1$ & $34.4$ & $37.3$ & 22.3 & 51.1 & 23.1 & 27.7 & 29.0 & 23.6 & 29.5
\\
BEVFusion-L~\cite{liu2023bevfusion} & $53.0$ & $28.7$ & $29.5$ & 21.8 & 41.3 & 15.9 & 32.0 & 22.9 & 22.7 & 26.1
\\
FSTR~\cite{zhang2023fully} & $56.6$ & $31.9$ & $34.1$ & 25.1 & 47.5 & 28.8 & 32.1 & 30.6 & 28.2 & 32.1
\\\bottomrule
\end{tabular}
}
\end{table}

\begin{table}[t]
\centering
\caption{\textbf{Ablation study of LiDAR semantic segmentation under clean and adverse conditions}. For each placement, we report the mIoU ($\uparrow$), mAcc ($\uparrow$), and ECE ($\downarrow$) scores for models under the clean condition, and mIoU ($\uparrow$) scores for models under adverse conditions. The mIoU and mAcc scores are given in percentage ($\%$).}
\label{tab:supp_ablation_seg}
\resizebox{\textwidth}{!}{
\begin{tabular}{r|p{1.05cm}<{\centering}p{1.05cm}<{\centering}p{1.05cm}<{\centering}|p{1.05cm}<{\centering}p{1.05cm}<{\centering}p{1.05cm}<{\centering}p{1.05cm}<{\centering}p{1.05cm}<{\centering}p{1.05cm}<{\centering}p{1.05cm}<{\centering}}
\toprule
\multirow{2}{*}{\textbf{Method}} & \multicolumn{3}{c}{\textcolor{place3d_green}{\textbf{Clean}}} \vline & \multicolumn{7}{c}{\textcolor{place3d_red}{\textbf{Adverse}}}
\\
& mIoU & mAcc & ECE & Fog & Wet & Snow & Blur & Cross & Echo & \textbf{Avg}
\\\midrule\midrule

\rowcolor{place3d_gray!15}\multicolumn{11}{l}{\textcolor{darkgray}{$\bullet$~\textbf{Placement: Line}}}
\\
MinkUNet~\cite{choy2019minkowski} & $59.7$ & $67.7$ & $0.037$ & 51.7 & 60.2 & 35.5 & 52.0 & 27.1 & 59.2 & 47.6 
\\
SPVCNN~\cite{tang2020searching} & $59.3$ & $66.7$ & $0.068$ & 42.8 & 57.9 & 31.3 & 46.1 & 13.6 & 57.1 & 41.5 
\\
PolarNet~\cite{zhou2020polarNet} & $67.7$ & $74.1$ & $0.034$ & 43.1 & 61.6 & 4.4 & 48.4 & 18.9 & 67.3 & 40.6 
\\
Cylinder3D~\cite{zhu2021cylindrical} & $68.9$ & $76.3$ & $0.045$ & 55.5 & 66.4 & 4.7 & 39.4 & 34.3 & 68.3 & 44.8 
\\\midrule

\rowcolor{place3d_gray!15}\multicolumn{11}{l}{\textcolor{darkgray}{$\bullet$~\textbf{Placement: Square}}}
\\
MinkUNet~\cite{choy2019minkowski} & $60.7$ & $68.4$ & $0.043$ & 55.6 & 61.9 & 33.5 & 51.5 & 26.5 & 61.2 & 48.4 
\\
SPVCNN~\cite{tang2020searching} & $63.4$ & $70.2$ & $0.031$ & 40.7 & 64.3 & 38.3 & 53.9 & 18.6 & 63.7 & 46.6 
\\
PolarNet~\cite{zhou2020polarNet} & $69.3$ & $74.7$ & $0.033$ & 39.9 & 50.0 & 6.1 & 49.1 & 15.1 & 68.8 & 38.2 
\\
Cylinder3D~\cite{zhu2021cylindrical} & $69.9$ & $76.7$ & $0.044$ & 52.0 & 55.6 & 2.7 & 44.2 & 37.1 & 68.7 & 43.4 
\\\midrule

\rowcolor{place3d_gray!15}\multicolumn{11}{l}{\textcolor{darkgray}{$\bullet$~\textbf{Placement: Trapezoid}}}
\\
MinkUNet~\cite{choy2019minkowski} & $59.0$ & $66.2$ & $0.040$ & 49.7 & 60.4 & 27.6 & 51.7 & 18.4 & 59.3 & 44.5 
\\
SPVCNN~\cite{tang2020searching} & $61.0$ & $68.8$ & $0.044$ & 40.9 & 61.3 & 33.6 & 49.1 & 16.9 & 60.7 & 43.8 
\\
PolarNet~\cite{zhou2020polarNet} & $66.8$ & $72.3$ & $0.034$ & 37.5 & 65.3 & 2.8 & 46.7 & 15.4 & 67.8 & 39.3 
\\
Cylinder3D~\cite{zhu2021cylindrical} & $68.5$ & $75.4$ & $0.057$ & 52.1 & 64.6 & 3.1 & 36.7 & 30.0 & 65.6 & 42.0 
\\\midrule

\rowcolor{place3d_gray!15}\multicolumn{11}{l}{\textcolor{darkgray}{$\bullet$~\textbf{Placement: Line-Roll}}}
\\
MinkUNet~\cite{choy2019minkowski} & $58.5$ & $66.4$ & $0.047$ & 48.6 & 59.2 & 26.9 & 50.4 & 21.2 & 58.0 & 44.1 
\\
SPVCNN~\cite{tang2020searching} & $60.6$ & $68.0$ & $0.034$ & 42.2 & 62.0 & 27.0 & 49.9 & 16.5 & 61.0 & 43.1 
\\
PolarNet~\cite{zhou2020polarNet} & $67.2$ & $72.8$ & $0.037$ & 38.2 & 62.9 & 2.2 & 46.3 & 14.2 & 65.4 & 38.2 
\\
Cylinder3D~\cite{zhu2021cylindrical} & $69.8$ & $77.0$ & $0.048$ & 49.7 & 65.4 & 2.6 & 37.4 & 27.3 & 67.8 & 41.7 
\\\midrule

\rowcolor{place3d_blue!15}\multicolumn{11}{l}{\textcolor{place3d_blue}{$\bullet$~\textbf{Placement: Ours 2D Plane}}}
\\
MinkUNet~\cite{choy2019minkowski} & 62.4 & 71.0 & 0.059 & 56.1 & 58.7 & 32.8 & 51.9 & 25.6 & 66.1 & 48.5
\\
SPVCNN~\cite{tang2020searching} & 65.0 & 71.8 & 0.050 & 56.9 & 66.1 & 37.9 & 56.4 & 26.0 & 65.7 & 51.5
\\
PolarNet~\cite{zhou2020polarNet} & 71.2 & 76.4 & 0.046 & 52.8 & 64.4 & 5.0 & 51.7 & 23.0 & 69.8 & 44.5
\\
Cylinder3D~\cite{zhu2021cylindrical} & 70.3 &75.7 & 0.047 & 52.6 & 67.7 & 5.0 & 45.9 & 36.8 & 71.1 & 46.5
\\\midrule

\rowcolor{place3d_blue!15}\multicolumn{11}{l}{\textcolor{place3d_blue}{$\bullet$~\textbf{Placement: Corruption Optimized}}}
\\
MinkUNet~\cite{choy2019minkowski} & 62.9 &	70.6 & 0.062 & 57.0 & 59.9 & 31.9 & 52.4 & 25.0 & 67.6 & 49.0
\\
SPVCNN~\cite{tang2020searching} & 64.8	& 72.0 & 0.051 & 56.4 & 65.6 & 36.8 & 57.0 & 25.8 & 64.8 & 51.1
\\
PolarNet~\cite{zhou2020polarNet} & 73.9	& 79.3 & 0.037 & 60.7 & 66.4 & 5.6 & 54.4 & 34.8 & 76.8 & 49.8
\\
Cylinder3D~\cite{zhu2021cylindrical} & 72.5 & 79.1 & 0.040 & 54.0 & 69.9 & 5.9 & 47.3 & 37.8 & 72.7 & 47.9
\\\midrule

\rowcolor{place3d_blue!15}\multicolumn{11}{l}{\textcolor{place3d_blue}{$\bullet$~\textbf{Placement: Ours}}}
\\
MinkUNet~\cite{choy2019minkowski} & $66.5$ & $73.2$ & $0.031$ & 59.5 & 66.6 & 17.6 & 56.7 & 24.5 & 66.9 & 48.6 
\\
SPVCNN~\cite{tang2020searching} & $68.3$ & $74.6$ & $0.034$ & 59.1 & 66.7 & 24.0 & 56.0 & 18.7 & 66.9 & 48.6 
\\
PolarNet~\cite{zhou2020polarNet} & $76.7$ & $81.5$ & $0.033$ & 57.3 & 65.8 & 2.8 & 55.0 & 27.3 & 76.1 & 47.4
\\
Cylinder3D~\cite{zhu2021cylindrical} & $73.0$ & $78.9$ & $0.037$ & 57.6 & 67.2 & 5.9 & 48.7 & 41.0 & 63.3 & 47.3
\\\bottomrule
\end{tabular}
}
\end{table}

\clearpage
\section{Proofs}
\label{sec:proof}

In this section, we present the full proof of Theorem \ref{alg:cmaes} to certify the global optimally regarding the solved local optima. Additionally, we present the proof of Corollary \ref{corollary:1} as a special case with bounded hyper-rectangle search space only given the Lipschitz constant of the objective function.

\subsection{Full Proof of Theorem 1}
\label{subsec:proof_thm}

\begin{theorem}[Optimality Certification]
\label{thm:opt_sup}
    Given the continuous objective function $G: \mathbb{R}^n\to \mathbb{R}$ with Lipschitz constant  $k_G$ w.r.t. input $\textbf{u}\in \mathcal{U} \subset \mathbb{R}^n$ under $\ell_2$ norm, suppose over a $\delta$-density Grids subset $S \subset \mathcal{U}$, the  distance between the maximal and minimal of function $G$ over $S$ is upper-bounded by $C_M$, and the local optima is  $\textbf{u}^*_S = \arg\min_{\textbf{u}\in S} G(x)$, the following optimality certification regarding $x\in \mathcal{U}$ holds that:
    \begin{align}
    \label{eq:opt_sup}
        \|G(\textbf{u}^*)-G(\textbf{u}^*_S)\|_2 ~\leq~ C_M +k_G\delta~,
    \end{align}
    where $\textbf{u}^*$ is the global optima over $\mathcal{U}$. 
\end{theorem}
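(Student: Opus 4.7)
The plan is to prove the bound by a two-step triangle inequality argument, combining the $\delta$-density covering property of the grid $S$ with the Lipschitz continuity of $G$ and the assumed range bound $C_M$ over $S$. The key insight is that although $\mathbf{u}^*_S$ may be far from the true global optimum $\mathbf{u}^*$ in the input space, we can bridge them via an intermediate grid point close to $\mathbf{u}^*$.

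First, I would invoke the $\delta$-density property of the grid subset $S \subset \mathcal{U}$: by definition, for every point in $\mathcal{U}$ there exists a grid point in $S$ within $\ell_2$-distance at most $\delta$. In particular, applied to the global optimum $\mathbf{u}^* \in \mathcal{U}$, there exists some $\tilde{\mathbf{u}} \in S$ with $\|\mathbf{u}^* - \tilde{\mathbf{u}}\|_2 \leq \delta$. By the Lipschitz assumption on $G$ with constant $k_G$, this yields
\begin{equation*}
  \|G(\mathbf{u}^*) - G(\tilde{\mathbf{u}})\|_2 \leq k_G \|\mathbf{u}^* - \tilde{\mathbf{u}}\|_2 \leq k_G \delta.
\end{equation*}

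Second, since both $\tilde{\mathbf{u}}$ and the local optimum $\mathbf{u}^*_S$ lie in $S$, and the assumption states that the range (distance between the maximum and minimum) of $G$ restricted to $S$ is upper-bounded by $C_M$, we immediately get
\begin{equation*}
  \|G(\tilde{\mathbf{u}}) - G(\mathbf{u}^*_S)\|_2 \leq C_M.
\end{equation*}
Combining the two bounds through the triangle inequality
\begin{equation*}
  \|G(\mathbf{u}^*) - G(\mathbf{u}^*_S)\|_2 \leq \|G(\mathbf{u}^*) - G(\tilde{\mathbf{u}})\|_2 + \|G(\tilde{\mathbf{u}}) - G(\mathbf{u}^*_S)\|_2 \leq k_G \delta + C_M
\end{equation*}
yields the claimed certificate in Equation~\eqref{eq:opt_sup}.

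There is no substantive obstacle here: the argument is essentially a covering-plus-Lipschitz sandwich. The only subtlety worth making explicit is the precise meaning of ``$\delta$-density Grids'', which I would pin down at the start of the proof as ``for every $\mathbf{u} \in \mathcal{U}$, $\inf_{\mathbf{v} \in S}\|\mathbf{u}-\mathbf{v}\|_2 \leq \delta$'', so that the choice of $\tilde{\mathbf{u}}$ is unambiguous. Corollary~\ref{corollary:1} then follows by noting that on a bounded hyper-rectangle one can take $C_M \leq k_G \sum_{i=1}^n U_i$ via Lipschitz continuity across the diameter of $\mathcal{U}$, thereby replacing the explicit $C_M$ assumption by a derived bound.
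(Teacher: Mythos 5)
Your proof is correct and follows essentially the same route as the paper's: both pick a grid point $\tilde{\mathbf{u}} \in S$ within $\delta$ of the global optimum $\mathbf{u}^*$, bound $\|G(\mathbf{u}^*)-G(\tilde{\mathbf{u}})\|_2 \leq k_G\delta$ by Lipschitz continuity, bound $\|G(\tilde{\mathbf{u}})-G(\mathbf{u}^*_S)\|_2 \leq C_M$ by the range assumption on $S$, and combine via the triangle inequality. If anything, your write-up is slightly cleaner---the paper's final display contains a typo (the second summand is written as $\|G(\mathbf{u}^*)-G(\mathbf{u}_S)\|_2$ where it should be $\|G(\mathbf{u}_S)-G(\mathbf{u}^*_S)\|_2$), which your version states correctly---and your observation that Corollary~\ref{corollary:1} follows by deriving $C_M \leq k_G\sum_{i=1}^n U_i$ matches the paper's argument as well.
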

\begin{proof}
Based on the sampling over subset $S\subset \mathcal{U}$ with $\ell_2$-norm density  $\delta$, we have:  
\begin{align}
    \forall  \textbf{u} \in \mathcal{U},\min_{ \textbf{u}_S \in S } \| \textbf{u} -  \textbf{u}_S\|_2 ~\leq~ \delta~.
\end{align}

    From the continuous objective function $G: \mathbb{R}^n\to \mathbb{R}$ with Lipschitz constant  $k_G$ w.r.t. input $\textbf{u}\in \mathcal{U} \subset \mathbb{R}^n$ under $\ell_2$ norm,  it holds that: 
    \begin{align}
        \|G(\textbf{u}_1) - G(\textbf{u}_2)\|_2 ~\leq~ k_G \|\textbf{u}_1 - \textbf{u}_2\|_2, ~\forall \textbf{u}_1, \textbf{u}_2 \in \mathcal{U}~.
    \end{align} 
    
   Since the  distance between the maximal and minimal of function $G$ over $S$ is upper-bounded by $C_M$, we then have:
   \begin{align}
       \|G(\textbf{u}_1) - G(\textbf{u}_2)\|_2 ~\leq~ \|\max_{\textbf{u}\in S} G(\textbf{u}) - \min_{\textbf{u}\in S}G(\textbf{u})\|_2 ~\leq~ C_M, \forall \textbf{u}_1, \textbf{u}_2 \in S~.
   \end{align}
   
   Then by absolute value inequality with  $\textbf{u}_S\in S\subset \mathcal{U} ~ s.t. ~ \|\textbf{u}^* - \textbf{u}_S\|_2\leq \delta$ and combining all inequalities above, we have:
    \begin{align}
        \|G(\textbf{u}^*)-G(\textbf{u}^*_S)\|_2 ~\leq~ & \|G(\textbf{u}^*)-G(\textbf{u}_S)\|_2 + \|G(\textbf{u}^*)-G(\textbf{u}_S)\|_2 \\~\leq~ &k_G\|\textbf{u}^* - \textbf{u}_S\|_2 + C_M \\
        ~\leq~ &k_G \delta + C_M~,
    \end{align}
     which concludes the proof.
\end{proof}

\subsection{Proof of Corollary 1}
\label{subsec:proof_rmk}
\begin{corollary}
    When $\mathcal{U}$ is a hyper-rectangle with the bounded $\ell_2$ norm of domain $U_i\in\mathbb{R}$ at each dimension $i$, with $i=1,2,\dots,n$, Thm.\ref{thm:opt_sup} can hold in a more general way by only assuming that the Lipschitz constant $k_G$  of the objective function is given, where the following optimality certification regarding $x\in \mathcal{U}$ holds that:
    \begin{align}
        \|G(\textbf{u}^*)-G(\textbf{u}^*_S)\|_2 ~\leq~ k_G\sum_{i=1}^n{U_i} +k_G\delta~.
    \end{align}
\end{corollary}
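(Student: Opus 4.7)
The plan is to reduce Corollary~\ref{corollary:1} directly to Theorem~\ref{thm:opt} by furnishing, from purely geometric considerations, an explicit upper bound for the oscillation constant $C_M$ that appears in its hypothesis. In the original theorem, $C_M$ is an abstract upper bound on $\max_{\mathbf{u}\in S}G(\mathbf{u}) - \min_{\mathbf{u}\in S}G(\mathbf{u})$; in the hyper-rectangle setting of the corollary, the Lipschitz constant $k_G$ and the per-dimension side lengths $U_i$ are sufficient data to compute such a bound without any additional assumption.

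The first step would be to bound the diameter of $\mathcal{U}$, and hence of any $S\subset\mathcal{U}$, in terms of the $U_i$. Since $\mathcal{U}$ is a hyper-rectangle with per-coordinate range $U_i$, the $i$-th coordinate difference between any two points $\mathbf{u}_1,\mathbf{u}_2\in\mathcal{U}$ is bounded by $U_i$, and the elementary chain of inequalities
\begin{equation*}
\|\mathbf{u}_1 - \mathbf{u}_2\|_2 \leq \|\mathbf{u}_1 - \mathbf{u}_2\|_1 = \sum_{i=1}^n |u_{1,i} - u_{2,i}| \leq \sum_{i=1}^n U_i
\end{equation*}
yields a coordinate-wise $\ell_2$-diameter bound. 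Note that this bound is intentionally looser than the exact diagonal $\sqrt{\sum_i U_i^2}$; the sum form is preferred for its clean, dimension-additive interpretation and matches the target inequality stated in the corollary.

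Next, applying the $k_G$-Lipschitz property of $G$ to the maximizer and minimizer of $G$ over $S$ (both of which lie in $\mathcal{U}$) and invoking the previous diameter bound, one obtains
\begin{equation*}
\max_{\mathbf{u}\in S}G(\mathbf{u}) - \min_{\mathbf{u}\in S}G(\mathbf{u}) \;\leq\; k_G \sum_{i=1}^n U_i~,
\end{equation*}
so we may legitimately instantiate $C_M := k_G\sum_{i=1}^n U_i$ in the statement of Theorem~\ref{thm:opt}. Plugging this choice of $C_M$ into the conclusion~\eqref{eq:opt} of the theorem immediately gives
\begin{equation*}
\|G(\mathbf{u}^*) - G(\mathbf{u}^*_S)\|_2 \;\leq\; C_M + k_G\delta \;=\; k_G\sum_{i=1}^n U_i + k_G\delta~,
\end{equation*}
which is the desired certification. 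I do not anticipate any real obstacle here: the corollary is essentially a specialization of Theorem~\ref{thm:opt} in which the oscillation hypothesis is discharged by a single $\ell_2 \leq \ell_1$ passage on the diameter of the hyper-rectangle, so the whole argument is just a two-line geometric estimate composed with the already-established theorem.
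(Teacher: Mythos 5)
Your proposal is correct and matches the paper's argument in substance: the paper likewise bounds the oscillation of $G$ over $S$ by $k_G\sum_{i=1}^n U_i$ via a coordinate-wise triangle-inequality (i.e., $\ell_2\le\ell_1$) diameter estimate on the hyper-rectangle, effectively discharging the $C_M$ hypothesis, and then concludes with the same $k_G\delta$ term as in Theorem~\ref{thm:opt}. The only cosmetic difference is that you invoke Theorem~\ref{thm:opt} as a black box with $C_M := k_G\sum_{i=1}^n U_i$, while the paper re-derives the final triangle-inequality step inline; this is not a different route.
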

\begin{proof}
Similar to the proof of Thm. \ref{thm:opt_sup}, by the sampling over subset $S\subset \mathcal{U}$ with $\ell_2$-norm density  $\delta$, we have:
\begin{align}
    \forall  \textbf{u} \in \mathcal{U},~ \min_{ \textbf{u}_S \in S } \| \textbf{u} -  \textbf{u}_S\|_2 ~\leq~ \delta.
\end{align}

    From the continuous objective function $G: \mathbb{R}^n\to \mathbb{R}$ with Lipschitz constant  $k_G$ w.r.t. input $\textbf{u}\in \mathcal{U} \subset \mathbb{R}^n$ under $\ell_2$ norm,  it holds that: 
    \begin{align}
        \|G(\textbf{u}_1) - G(\textbf{u}_2)\|_2 ~\leq~ k_G \|\textbf{u}_1 - \textbf{u}_2\|_2, ~\forall \textbf{u}_1, \textbf{u}_2 \in \mathcal{U}~.
    \end{align}

   Now since the input space $\mathcal{U}$ is a hyper-rectangle with the bounded $\ell_2$ norm of domain $U_i\in\mathbb{R}$ at each dimension $i$, with $i=1,2,\dots,n$, for any $i$-th dimension $\textbf{u}^{(i)}_1, \textbf{u}^{(i)}_2 \in \mathcal{U}^{(i)}\subset \mathbb{R}$, it holds that: 
   \begin{align}
       \|\textbf{u}^{(i)}_1 - \textbf{u}^{(i)}_2\|_2 ~\leq~ U_i, ~\forall \textbf{u}^{(i)}_1, \textbf{u}^{(i)}_2 \in \mathcal{U}^{(i)}, ~i=1,2,\dots,n~,
   \end{align}

   By summing up all dimensions with absolute value inequality at $\textbf{u}_{max}, \textbf{u}_{min}$, we have:
   \begin{align}
       \|\max_{\textbf{u}\in S} G(\textbf{u}) - \min_{\textbf{u}\in S}G(\textbf{u})\|_2 
       ~:=~ &\| G(\textbf{u}_{max}) - G(\textbf{u})_{min}\|_2 \\
       ~\leq~ & \| G(\sum_{i=1}^n\textbf{u}^{(i)}_{max}) - G(\sum_{i=1}^n\textbf{u}^{(i)}_{min})\|_2\\
       ~\leq~ & k_G\| \sum_{i=1}^n\textbf{u}^{(i)}_{max} - \sum_{i=1}^n\textbf{u}^{(i)}_{min}\|_2\\
       ~\leq~ & k_G \sum_{i=1}^n\|\textbf{u}^{(i)}_{max} - \textbf{u}^{(i)}_{min}\|_2
       \\
       ~\leq~ & k_G \sum_{i=1}^n U_i~.
   \end{align}
   
   Since the distance between the maximal and minimal of function $G$ over $S$ is upper-bounded by $C_M$, we then have:
   \begin{align}
       \|G(\textbf{u}_1) - G(\textbf{u}_2)\|_2 ~\leq~ \|\max_{\textbf{u}\in S} G(\textbf{u}) - \min_{\textbf{u}\in S}G(\textbf{u})\|_2 ~\leq~ k_G \sum_{i=1}^n U_i, ~\forall \textbf{u}_1, \textbf{u}_2 \in S~.
   \end{align}
   
   Then by absolute value inequality with  $\textbf{u}_S\in S\subset \mathcal{U} ~ s.t. ~ \|\textbf{u}^* - \textbf{u}_S\|_2\leq \delta$ and combining all inequalities above, we have:
    \begin{align}
        \|G(\textbf{u}^*)-G(\textbf{u}^*_S)\|_2 ~\leq~ & \|G(\textbf{u}^*)-G(\textbf{u}_S)\|_2 + \|G(\textbf{u}^*)-G(\textbf{u}_S)\|_2 \\~\leq~ &k_G\|\textbf{u}^* - \textbf{u}_S\|_2 + k_G \sum_{i=1}^n U_i \\
        ~\leq~ &k_G \delta + k_G \sum_{i=1}^n U_i~,
    \end{align}
     which concludes the proof.
\end{proof}

\clearpage
\section{Discussions}
\label{sec:discussion}

In this section, we discuss the limitations and potential negative societal impact of this work.

\subsection{Limitations}
\label{subsec:limitation}

\subsubsection{Possible Limitation in Data Collection}
In this work, we utilize CARLA to generate LiDAR point clouds instead of collecting data in the real world. Although the delicacy and realism of simulation technologies are now remarkably close to the real world, there still exists a gap. For example, LiDAR cannot detect transparent objects, and LiDAR generates noise when encountering walls; these characteristics are not well-represented by the simulator. In addition, for each placement, we collected 13,600 frames of data; however, real-world applications would require a significantly larger dataset.

\subsubsection{Possible Limitation in LiDAR Configurations}
Although we selected seven simplified placements as baselines for extensive experimentation, our baseline placements cannot reflect and cover all possible LiDAR arrangements for various car manufacturers. Furthermore, the optimized placements we obtained are only near-optimal for our dataset. Identifying the globally optimal placements requires further analysis. In addition, our dataset employs spinning LiDAR technology. Yet, the latest advancements in autonomous driving have indicated a trend toward adopting semi-solid-state and solid-state LiDAR systems, necessitating further research. 
To demonstrate the impact of LiDAR placement, we selected LiDAR sensors with relatively low resolution. The outcomes might differ when the total number of LiDAR sensors is not four or when using LiDAR sensors with different channels and scanning frequencies.

\subsubsection{Possible Limitation in Surrogate Metric} 
When generating the SOG (semantic occupancy grids) for computing the M-SOG metric, we use a voting algorithm to determine the semantic label for each voxel. This approach can introduce potential errors in cases where multiple semantic labels within a voxel have a similar number of points. Additionally, to accurately describe the semantic distribution of a scene, smaller voxel sizes are often required.

\subsection{Potential Societal Impact}
\label{subsec:potential_societal_impact}

LiDAR systems, by their very nature, are designed to capture detailed information about the environment. This can include not just the shape and location of objects, but potentially also capturing point cloud data of individuals, vehicles, and private property in high resolution. The data captured might be misused if not properly regulated and secured. Further, With autonomous vehicles navigating spaces using LiDAR and other sensors, there could be a shift in how spaces are designed, potentially prioritizing efficiency for autonomous vehicles over human-centered design principles. In addition, over-reliance on LiDAR and perception systems can lead to questions about the trustworthiness and reliability of these systems.

\clearpage
\section{Public Resources Used}
\label{sec:supp_public_resources}

We acknowledge the use of the following public resources, during the course of this work:

\begin{itemize}

\item CARLA\footnote{\url{https://github.com/carla-simulator/carla}.} \dotfill MIT License
\item Scenario Runner\footnote{\url{https://github.com/carla-simulator/scenario_runner}.} \dotfill MIT License
\item MMCV\footnote{\url{https://github.com/open-mmlab/mmcv}.} \dotfill Apache License 2.0
\item MMDetection\footnote{\url{https://github.com/open-mmlab/mmdetection}.} \dotfill Apache License 2.0
\item MMDetection3D\footnote{\url{https://github.com/open-mmlab/mmdetection3d}.} \dotfill Apache License 2.0
\item MMEngine\footnote{\url{https://github.com/open-mmlab/mmengine}.} \dotfill Apache License 2.0
\item nuScenes-devkit\footnote{\url{https://github.com/nutonomy/nuscenes-devkit}.} \dotfill Apache License 2.0
\item SemanticKITTI-API\footnote{\url{https://github.com/PRBonn/semantic-kitti-api}.} \dotfill MIT License
\item MinkowskiEngine\footnote{\url{https://github.com/NVIDIA/MinkowskiEngine}.} \dotfill MIT License
\item TorchSparse\footnote{\url{https://github.com/mit-han-lab/torchsparse}.} \dotfill MIT License
\item SPVNAS\footnote{\url{https://github.com/mit-han-lab/spvnas}.} \dotfill MIT License
\item PolarSeg\footnote{\url{https://github.com/edwardzhou130/PolarSeg}.} \dotfill BSD 3-Clause License
\item Cylinder3D\footnote{\url{https://github.com/xinge008/Cylinder3D}.} \dotfill Apache License 2.0
\item PointPillars\footnote{\url{https://github.com/nutonomy/second.pytorch}.} \dotfill MIT License
\item CenterPoint\footnote{\url{https://github.com/tianweiy/CenterPoint}.} \dotfill MIT License
\item BEVFusion\footnote{\url{https://github.com/mit-han-lab/bevfusion}.} \dotfill Apache License 2.0
\item FSTR\footnote{\url{https://github.com/Poley97/FSTR}.} \dotfill Apache License 2.0
\item Robo3D\footnote{\url{https://github.com/ldkong1205/Robo3D}.} \dotfill CC BY-NC-SA 4.0
    
\end{itemize}

\clearpage

\bibliographystyle{plain}
{\small
\bibliography{main.bib}}


\end{document}